\begin{document}

%

%

\twocolumn[

\aistatstitle{Deep Layer-wise Networks Have Closed-Form Weights}

\aistatsauthor{ Chieh Wu* \And Aria Masoomi* \And  Arthur Gretton \And Jennifer Dy }

\aistatsaddress{ Northeastern University \And  Northeastern University \And University College London \And Northeastern University} ]

\begin{abstract}
There is currently a debate within the neuroscience community over the likelihood of the brain performing backpropagation (BP). To better mimic the brain, training a network \textit{one layer at a time} with only a "single forward pass" has been proposed as an alternative to bypass BP; we refer to these networks as "layer-wise" networks. We continue the work on layer-wise networks by answering two outstanding questions. First, \textit{do they have a closed-form solution?} Second, \textit{how do we know when to stop adding more layers?} This work proves that the \kme is the closed-form weight that achieves the network global optimum while driving these networks to converge towards a highly desirable kernel for classification; we call it the \textit{Neural Indicator Kernel}. 
\end{abstract}

\section{INTRODUCTION}
Due to the brain-inspired architecture of Multi-layered Perceptrons (MLPs), the relationship between MLPs and our brains has been a topic of significant interest  \citep{zador2019critique,walker2020Deep}. This line of research triggered a debate  \citep{whittington2019theories} around the neural plausibility of backpropagation (BP). While some contend that brains cannot simulate BP \citep{crick1989recent,grossberg1987competitive}, others have proposed counterclaims with a new generation of models \citep{hinton2007backpropagation,Lilli2020Backpropagation,liao2016important,bengio2017stdp,guerguiev2017towards,sacramento2018dendritic,whittington2017approximation}. This debate has inspired the search for alternative optimization strategies beyond BP. To better mimic the brain, learning the network \textit{one layer at a time} over a single forward pass (we call it layer-wise network) has been proposed as a more likely candidate to match existing understandings in neuroscience \citep{ma2019hsic,Pogodin2020KernelizedIB,Oord2018RepresentationLW}. Our theoretical work contributes to this debate by answering two open questions regarding layer-wise networks.
\begin{enumerate}
[noitemsep,topsep=0pt,leftmargin=5mm]
    \item \textit{Do they have a closed-form solution? }
    \item \textit{How do we know when to stop adding more layers? }
\end{enumerate}
Question 1 asks if easily computable and \textit{closed-form weights} can theoretically yield networks equally powerful as traditional MLPs, bypassing both BP and Stochastic Gradient Descent (SGD). This question is answered by characterizing the expressiveness of layer-wise networks using only \textit{"trivially learned weights"}. Currently, the \textit{Universal Approximation Theorem} states that a network can approximate any continuous function \citep{cybenko1989approximation, hornik1991approximation,zhou2020universality}. However, it is not obvious that weights of "layer-wise networks" can be computed closed-form requiring only basic operations, a simplicity constraint inspired by biology. As our contribution, we prove that layer-wise networks can classify \textit{any pattern} with trivially obtainable closed-form weights using only \textit{addition}. Surprisingly, these weights turn out to be the \kme.


Identifying the network depth has been an open question for traditional networks. For layer-wise networks, this question reduces down to "\textit{when should we stop adding layers?}. We posit that additional layers become unnecessary if layer-wise networks exhibit a \textit{limiting behavior} where adding more layers ceases to meaningfully change the network. We proved that this is theoretically possible by using \kme as weights. In fact, we show that these networks could be modeled as a \textit{mathematical sequence} of functions that converge by \textit{intentional design}. Indeed, not only can these networks converge, they can be induced to converge towards a highly desirable kernel for classification; we call it the \textit{Neural Indicator Kernel} (NIK).

\textbf{Related Work. } When earlier work investigated how MLPs might relate to the brain, \citet{crick1989recent} and later \citet{bengio2015towards} both claimed that it was "highly unlikely" that the brain is performing BP. According to their work, while BP requires the errors to flow backward to update the weights, the brain has no backward flow of information. Moreover, for BP to be feasible, each layer requires the precise gradient of a future layer. This implies that the brain would need to obtain the gradients of future layers while having access to them at an earlier layer. 

To make MLPs biologically plausible, new research has proposed that weights can be obtained without future gradients by using only local information \citep{nokland2019training,lindsey2020learning}; this is called \textit{local plasticity}. This idea has been successfully implemented by training the network \textit{one layer at a time} \citep{ma2019hsic, Pogodin2020KernelizedIB,belilovsky2019greedy, nokland2019training, Lwe2019PuttingAE}, paving the groundwork on how layer-wise networks might bypass BP. While layer-wise training is an important advancement, they are still far from becoming a viable model for the brain, with residual theoretical questions left unanswered. Is the computation of gradients \textit{theoretically necessary}, or is there a vastly simpler way to obtain the weights? How do we know the network depth? This theoretical work focuses on answering these questions.



Modeling MLPs with kernels has yielded highly impactful theoretical results. MLPs have been popularly modeled as a Gaussian process (GP) \citep{neal2012bayesian,matthews2018gaussian,lee2017deep}, inspiring a significant amount of research \citep{hayou2019impact,lee2019wide,yang2019scaling,duvenaud2014avoiding}. The Neural Tangent Kernel (NTK) \citep{jacot2018neural} has recently been proposed to describe the dynamics of the network during training \citep{jacot2018neural,arora2019exact}. Our work differs in that these kernel networks are not trained layer-wise.

Instead, our network performs layer-wise training as a composition of kernels and relates closer to work by \citep{fahlman1990cascade, ma2019hsic, Pogodin2020KernelizedIB,belilovsky2019greedy, nokland2019training, Lwe2019PuttingAE, kulkarni2017layer, zhuang2011two, montavon2011kernel, mairal2014convolutional, Cho2009KernelMF}. While these implementations also employed composition of kernel networks, our work differs in two key aspects. First, their networks all require some form of optimization during training (commonly with SGD). In contrast, we focused on \textit{identifying a closed-form solution} to entirely bypass optimization. 
Second, none of their work studied how closed-form weights relates to the network depth. While \citet{duan2020kernel} was able to relate depth to a complexity bound on composition of kernels under a different setting and objective, identifying a simple and closed-form solution was not their goal. Moreover, we further demonstrated that by using our closed-form solution, the network converges to the \textit{Neural Indicator Kernel} (NIK).

\section{MODELING LAYER-WISE NETWORKS}
\textbf{Layer Construction. } Let $X \in \mathbb{R}^{n \times d}$ be a dataset of $n$ samples with $d$ features and let $Y \in \mathbb{R}^{n \times \nclass}$ be its one-hot encoded labels with $\nclass$ classes. The $l^{th}$ layer consists of linear weights $W_l \in \mathbb{R}^{m \times q}$ followed by an activation function $\af:\mathbb{R}^{n \times q} \rightarrow \mathbb{R}^{n \times m}$. We interpret each layer as a function $\fm_l$ parameterized by $W_l$ with an input/output denoted as $R_{l-1} \in \mathbb{R}^{n \times m}$ and $R_{l} \in \mathbb{R}^{n \times m}$ where $R_l = \fm_l(R_{l-1}) = \af(R_{l-1}W_l)$. The entire network $\fm$ is the composition of all layers where $\fm$ where $\fm = \fm_L \circ ... \circ \fm_1$. 

\textbf{Network Objective. } 
Given $x_i, y_i$ as the $i^{th}$ sample and label of the dataset, the network output is used to minimize an empirical risk $(\hsic)$ with a loss function $(\ell)$ with a general objective of
    \begin{equation}
    \underset{\fm}{\min} \hspace{0.3cm} \hsic 
    \coloneqq
    \underset{\fm}{\min} \hspace{0.3cm} \frac{1}{n} \sum_{i=1}^n \ell(\fm(x_i), y_i).
     \label{eq:basic_empirical_risk}
    \end{equation}
As Eq.~(\ref{eq:basic_empirical_risk}), we are structurally identical to conventional MLPs where each layer consists of linear weights and an activation function. Yet, we differ by introducing the composition of the first $l$ layers as $\fm_{l^\circ} = \fm_l \circ ... \circ \fm_1$ where $l \leq L$.  This notation $(\fm_{l^\circ})$ connects the data directly to the $l$th layer output where $R_l = \fm_{l^{\circ}}(X)$ and leads to the \textit{key novelty of our theoretical contribution}. Namely, we propose to optimize Eq.~(\ref{eq:basic_empirical_risk}) layer-wise as a sequence of a growing networks by replacing $\fm$ in \eq{eq:basic_empirical_risk} incrementally with a sequence of functions $\{\fm_{l^\circ}\}_{l=1}^L$. This results in a sequence of empirical risks $\{\hsic_l\}_{l=1}^L$ which we incrementally solve. We refer to $\{\fm_{l^\circ}\}_{l=1}^L$ and $\{\hsic_{l}\}_{l=1}^L$ as the \KS and the \RS. 

Solving Eq.~(\ref{eq:basic_empirical_risk}) as \RS is where we differ from tradition, this approach enables us to easily represent, analyze and optimize "layer-wise networks". In contrast to using BP with SGD, we now can use \textit{"closed-form solutions"} for $W_l$ to construct \textit{Kernel Sequences} that drives the \RS to automatically minimize Eq.~(\ref{eq:basic_empirical_risk}). To visualize the network structure and how they form the sequences, refer to Fig.~\ref{fig:notation}.

\begin{figure*}[t]
\center
    \includegraphics[width=12cm,height=3.7cm]{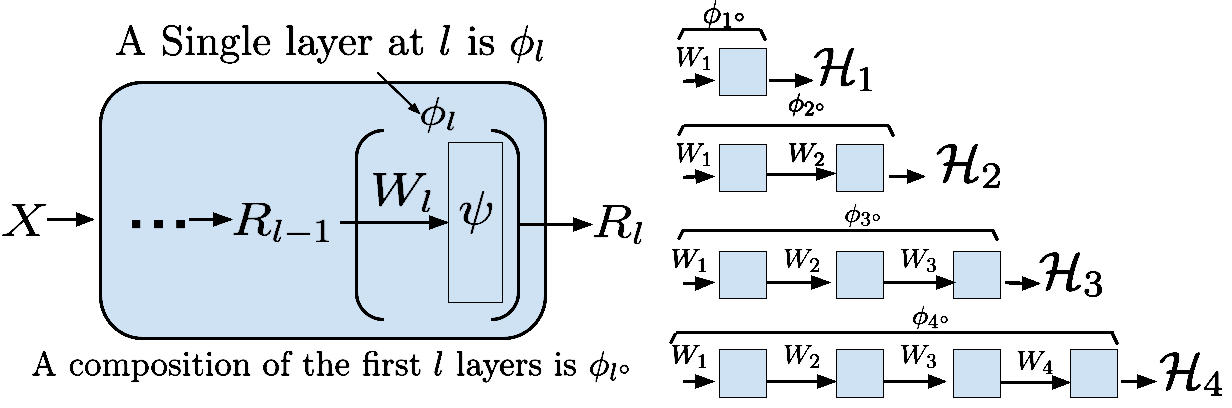}
    \caption{\textbf{Left - Distinguishing } $\phi_l$ vs $\phi_{l^\circ}$\textbf{:} $\fm_l$ is a single layer while $\fm_{l^{\circ}} = \fm_l \circ ... \circ \fm_{1^\circ}$ is a composition of the first $l$ layers.
    \textbf{Right - Visualize the \textit{Kernel} and $\mathcal{H}$-\textit{Sequences}: } Note that the \KS is a converging sequence of "\textit{functions}" $\{\fm_{l^\circ}\}_{l=1} = \{\fm_{1^\circ}, \fm_{2^\circ}, ... \}$ and  \RS is a converging sequence of scalar values $\{\hsic_{l}\}_{l=1} = \{\hsic_{1}, \hsic_{2}, ... \}$.
    To minimize Eq.~(\ref{eq:basic_empirical_risk}) at $\hsic_l$, all weights before $W_l$ are already identified and held fixed, only $W_l$ is unknown. 
    As $l\rightarrow \infty$, the \KS converges to the \textit{Neural Indicator Kernel}.  }
    \label{fig:notation}
\end{figure*} 

\textbf{Performing Classification. } 
Classification tasks typically use objectives like Mean Squared Error ($\mse$) or Cross-Entropy ($\ce$) to match the network output $\fm(X)$ to the label $Y$. While this approach achieves the desirable outcome, it also constrains the space of potential solutions where $\fm(X)$ must match $Y$. Yet, if $\fm$ maps $X$ to the labels $\{0,1\}$ instead of the true label $\{-1,1\}$, 
$\fm(X)$ may not match $Y$, but the solution is the same. Therefore, enforcing $\fm(X) = Y$ ignores an entire space of equivalently optimal classifiers. We posit that by relaxing this constraint and accepting a larger space of potential global optima, it will be easier during optimization to collide with this space. This intuition motivates us to depart from the tradition of label matching and instead seek alternative objectives that focus on solving the underlying prerequisite of classification, i.e.,  learning a mapping where samples from \textit{similar and different} classes become easily distinguishable.

However, since there are many ways to define \textit{similarity}, how do we choose the best one that leads to classification? We demonstrate that the Hilbert Schmidt Independence Criterion (HSIC \citep{gretton2005measuring}) is a highly advantageous objective for this purpose. As we'll later show in corollaries \ref{corollary:mse} and \ref{corollary:ce}, its maximization indirectly minimizes both Mean Square Error (MSE) and Cross-Entropy (CE) under different notions of "distance", enabling classification. Moreover, this is made possible because maximizing HSIC automatically learns the optimal notion of \textit{similarity} as a kernel function. Using HSIC objective as $\hsic_l$ for each element of 
$\{\hsic_{l}\}_{l=1}^L$, the layer-wise formulation of Eq.~(\ref{eq:basic_empirical_risk}) becomes
\begin{equation}
\begin{aligned}
    \max_{W_l} \quad &
        \Tr \left(
        \Gamma \:
        \left[
        \af(R_{l-1}W_l) \af^T(R_{l-1}W_l)
        \right]
        \right)  
        \\
        \st \quad & 
        W_l^TW_l=I,
        \label{eq:main_obj}
\end{aligned}
\end{equation}

where we let $\Gamma = HK_YH = HYY^TH$ with centering matrix $H$ defined as $H = I_n - \frac{1}{n} \mathbf{1}_n \mathbf{1}_n^T$. $I_n$ is an identity matrix of size $n \times n$ and $\textbf{1}_n$ is a column vector of 1s also of length $n$.

Note that the HSIC objective is more familiarly written as $\Tr(HK_YHK_X)$ where $K_X = \af(R_{l-1}W_l) \af^T(R_{l-1}W_l)$. Yet, we purposely present it as Eq.~(\ref{eq:main_obj}) to highlight how the network structure leads to the objective.  Namely, the layer input $R_{l-1}$ first multiplies the weight $W_l$ before passing through the activation function $\af$. The layer output is then multiplied by itself and $\Gamma$ to form the HSIC objective.

Lastly, since HSIC can be trivially maximized by setting each element of $W$ as $\infty$, setting $W^TW=I$ is a constraint commonly used with HSIC while learning a projection \citep{wu2018iterative, Wu2019SolvingIK,niu2010multiple}.

\textbf{Key Difference From Traditional MLPs. } \textit{We generalize the concept of the activation function to a kernel feature map}. Therefore, instead of using the traditional Sigmoid or ReLU activation functions, we use the feature map of a Gaussian kernel as the activation function $\af$. Conveniently, HSIC leverages the kernel trick to spare us the direct computation of the inner product $\af(R_{l-1}W_l) \af^T(R_{l-1}W_l)$. Therefore, for each $(r_i, r_j)$ pair we only need to compute 
\begin{equation}
    \begin{aligned}
    \kf(W_l^T r_i, W^T_lr_j) & = 
    \langle
    \af(W_l^T r_i), \af(W_l^T r_j) 
    \rangle
    \\
    & = 
    \text{exp}\{-
    \frac{||W_l^T r_i - W_l^T r_j||^2}{2\sigma^2_l} \}.
    \label{eq:kernel_def}
    \end{aligned}
\end{equation}

\textbf{How Does HSIC Learn the Kernel? } 
Let $\cS$ be a set of $i,j$ sample pairs that belong to the same class. Its complement, $\cS^c$ contains all sample pairs from different classes. By reinterpreting the kernel $\kf(W_l^T r_i, W^T_lr_j)$ from Eq.~(\ref{eq:kernel_def})  as a kernel function $\kf_{W_l}(r_i, r_j)$ parameterized by $W_l$, \eq{eq:main_obj} can be reformulated into the following objective to see how HSIC learns the kernel.
\begin{equation}
    \begin{aligned}
    \max_{W_l} &
    \sums \Gij \kf_{W_l}(r_i, r_j) 
    -
    \sumsc |\Gij| \kf_{W_l}(r_i, r_j)
    \\
    \st & \quad  W_l^TW_l=I.
    \label{eq:similarity_hsic}
    \end{aligned}
\end{equation}
While \eq{eq:main_obj} and \eq{eq:similarity_hsic} are equivalent objectives, \eq{eq:similarity_hsic} reveals how an optimal similarity measure is learned as a kernel function $\kf_{W_l}(r_i,r_j)$ parameterized by $W_l$.  First note that $\Gamma$ came directly from the label with  $\Gamma=HYY^TH$ such that the $i,j_{th}$ element of $\Gamma$, denoted as $\Gamma_{i,j}$, is a positive value for samples pairs in $\cS$ and negative for $\cS^c$. The objective leverages the sign of $\Gamma_{i,j}$ as labels to guide the choice of $W_l$ such that it increases $\kf_{W_l}(r_i,r_j)$ when $r_i,r_j$ belongs to the same class in $\cS$ while decreasing $\kf_{W_l}(r_i,r_j)$ otherwise. Therefore, by finding a $W_l$ matrix that best parameterizes $\kf_{W_l}$, HSIC identifies the optimal kernel function $\kf_{W_l}(r_i, r_j)$ that separates samples into similar and dissimilar partitions to enable classification.

\section{ANALYZING LAYER-WISE NETWORKS} 
Instead of maximizing Eq.~(\ref{eq:main_obj}) with traditional strategies like SGD, can we completely bypass the optimization step? Our analysis proves that this is possible. In fact, the weights $W_l$ simply need to be set to the \kme \citep{muandet2016kernel} at each layer. The stacking of layers with these known weights automatically drives \RS towards its theoretical global optimum. Specifically, if we let $r^{j}_{\iota}$ be the $\iota^{th}$ input sample in class $j$ for layer $l$ with $\zeta$ as a normalizer that can be ignored in practice, then the closed-form solution is 
   \begin{equation}
        W_s = \frac{1}{\sqrt{\zeta}} \W
        .
        \label{eq:trivial_W}
    \end{equation}
We prove that as long as no two identical samples have conflicting labels, setting $W_l = W_s$ at each layer can generate a monotonically increasing \RS towards the global optimal, $\hsic^*$. Comparing to BP, this finding suggests that instead of using gradients or needing to propagate error backward, a deep classifier can be globally optimized with "only a single forward pass" by \textit{simple addition}. We formally prove this discovery in App.~\ref{app:thm:hsequence} as the following theorem.
\begin{theorem}
\label{thm:hsequence}
From any initial risk $\hsic_0$, there exists a set of bandwidths $\sigma_l$ and a \KS $\{\fm_{l^{\circ}}\}_{l=1}^L$ parameterized by $W_l = W_s$ in \eq{eq:trivial_W} such that:
\begin{enumerate}[topsep=0pt, partopsep=0pt, label=\Roman*.]
    \item 
     $\hsic_L$ can approach arbitrarily close to $\hsic^*$ such that for any $L>1$ and $\delta>0$ we can achieve
     \begin{equation}
        \hsic^{*} - \hsic_L \le \delta,
        \label{arb_close}
    \end{equation}   
    \item 
    as $L \rightarrow \infty$, the \RS converges to the global optimum where 
    \begin{equation}
        \lim_{L \rightarrow \infty}  \hsic_L = \hsic^*,
    \end{equation}   
    \item 
    the convergence is strictly monotonic where 
    \begin{equation}
    \hsic_{l} > \hsic_{l-1} \quad \forall l \ge 1.
    \end{equation}   
\end{enumerate}
\end{theorem}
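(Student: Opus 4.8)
The plan is to reduce all three claims to a single quantitative statement about how the \kme projection $W_s$ reshapes the Gram matrix, and then to run a contraction argument on the optimality gap. First I would pin down $\hsic^{*}$. Since every Gaussian Gram matrix $K$ has entries in $(0,1]$ with unit diagonal and the objective reads $\Tr(\Gamma K) = \sums \Gamma_{ij} K_{ij} - \sumsc |\Gamma_{ij}| K_{ij}$ with $\Gamma_{ij}>0$ on $\cS$ and $\Gamma_{ij}<0$ on $\cS^{c}$, it is maximized by driving $K_{ij}\to 1$ on same-class pairs and $K_{ij}\to 0$ on different-class pairs; this limiting matrix is exactly the Neural Indicator Kernel and it is positive semidefinite (block-constant up to permutation), so $\hsic^{*} = \sums \Gamma_{ij}$, which is finite and genuinely approachable precisely because the no-conflicting-labels hypothesis forbids two identical inputs from being simultaneously required to have kernel value $1$ and $0$. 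Writing $\Delta_l := \hsic^{*} - \hsic_l \ge 0$, claim III becomes ``$\Delta_l$ strictly decreasing'', claim II becomes ``$\Delta_l \to 0$'', and claim I becomes ``$\Delta_L \le \delta$ at a controllable layer index''.

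Next I would rewrite the one-layer update purely in terms of pairwise distances. With layers $1,\dots,l-1$ fixed and $r_i$ the layer-$l$ input of sample $i$, the layer-$l$ kernel is $K_l(i,j) = \exp\!\big(-d_{ij}^{2}/2\sigma_l^{2}\big)$ with $d_{ij}^{2} = (r_i - r_j)^{\top} W_s W_s^{\top} (r_i - r_j)$, so $\hsic_l = \sums \Gamma_{ij}\, e^{-d_{ij}^{2}/2\sigma_l^{2}} - \sumsc |\Gamma_{ij}|\, e^{-d_{ij}^{2}/2\sigma_l^{2}}$. Because the columns of $\W$ encode the per-class kernel mean embeddings $\mu_k$ of the current representation, $W_s W_s^{\top} = \tfrac1\zeta \sum_k \mu_k \mu_k^{\top}$, and the $k$-th coordinate of $W_s^{\top} r_i$ is (up to $\zeta$) the average kernel similarity of $r_i$ to class $k$. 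The key lemma I need is a \emph{relative separation} estimate: using that each $r_i$ is a unit vector in the RKHS and that within-class average similarities dominate between-class ones, I would show $\max_{(i,j)\in\cS} d_{ij} < \min_{(i,j)\in\cS^{c}} d_{ij}$ --- every same-class pair is strictly closer after projection than every different-class pair --- and, iterating, that the within-class $d_{ij}$ shrink toward $0$ while the ratio of between- to within-class distances grows without bound.

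Given this separation, the rest is bookkeeping. Whenever $\Delta_{l-1}>0$ some entry of $K_{l-1}$ is not yet extremal, and choosing $\sigma_l$ strictly between the within- and between-class distance thresholds forces every same-class entry of $K_l$ strictly closer to $1$ and every different-class entry strictly closer to $0$; hence $\hsic_l > \hsic_{l-1}$ (claim III) and $\Delta_l \le \alpha_l \Delta_{l-1}$ for some $\alpha_l \in (0,1)$ that shrinks as $\sigma_l$ is made more aggressive. A schedule with $\prod_l \alpha_l \to 0$ gives $\Delta_L \to 0$ (claim II), and for any prescribed $L>1$ and $\delta>0$ we can pick $\sigma_1,\dots,\sigma_L$ so that $\big(\prod_{l\le L}\alpha_l\big)\Delta_0 \le \delta$ (claim I). The non-conflicting-labels assumption is invoked once more here, to guarantee the block structure the $K_l$ are pushed toward is consistent, so that $\Delta_l$ can actually reach $0$.

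I expect the main obstacle to be the relative-separation lemma of the second paragraph: proving that the \kme projection produces a strict, uniform gap between within- and between-class distances, and --- more delicately --- choosing the bandwidth schedule $\{\sigma_l\}$ so the iteration keeps making strict progress. The subtlety is that as layers accumulate the between-class distances also contract (all representations sit in a bounded region of the RKHS), so the argument must be about the \emph{ratio} of between- to within-class distances and must rescale $\sigma_l$ each step; ruling out a degenerate fixed point in which within- and between-class distances contract at the same rate is the crux, and that is exactly where the specific structure $W_s W_s^{\top} = \tfrac1\zeta \sum_k \mu_k \mu_k^{\top}$ --- rather than an arbitrary admissible projection --- has to be used.
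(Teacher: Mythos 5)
Your skeleton --- identify $\hsic^* = \sums \Gij$, reduce all three claims to the gap $\Delta_l = \hsic^* - \hsic_l$, prove strict per-layer improvement plus arbitrary closeness, then telescope --- matches the paper's, and your contraction bookkeeping $\Delta_l \le \alpha_l \Delta_{l-1}$ is an acceptable substitute for the paper's explicit construction of a decreasing target sequence $\delta_l$ built from $\mathcal{E}_n = \hsic^* - (\hsic^* - \hsic_0)/n$. The gap is in the one lemma you yourself flag as the crux. You propose to prove $\max_{(i,j)\in\cS} d_{ij} < \min_{(i,j)\in\cS^c} d_{ij}$ ``using that within-class average similarities dominate between-class ones,'' but that premise is not a hypothesis of the theorem and is false for exactly the datasets the theorem is meant to cover: the paper's Adversarial set places different-class pairs far closer than same-class pairs, and the Random set has arbitrary labels. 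No favorable clustering of the input can be assumed; the separation has to be manufactured.

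The paper manufactures it with the bandwidth of the \emph{previous} layer, which is the ingredient missing from your argument. Each $r_i$ is a unit vector in the Gaussian RKHS of layer $l-1$, and as $\sigma_{l-1}\to 0$ the maximum off-diagonal inner product $\epsilon_0 = \max_{i\ne j}\langle r_i, r_j\rangle$ tends to $0$: all distinct samples become nearly orthogonal, regardless of labels. After projecting with $W_s$, the $k$-th coordinate of $W_s^T r_i$ is the sum of $r_i$'s inner products with the members of class $k$; for nearly orthogonal $r_i$ the only non-negligible contribution is the self term $\langle r_i, r_i\rangle = 1$, which lands in $r_i$'s own class coordinate. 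Hence within-class squared distances after projection are $O(\epsilon_0^2)$, while between-class squared distances stay bounded below by roughly $2/\zeta$ --- the gap comes from the unit diagonal of the Gram matrix, not from the geometry of the data. With that separation in hand, the paper picks $\sigma_l$ first, so that the between-class kernel entries contribute at most $\delta/2$ to the gap, and then $\sigma_{l-1}$ small enough \emph{as a function of} $\sigma_l$ that the within-class entries cost at most another $\delta/2$; the order of these two choices matters, since shrinking $\sigma_l$ alone would also drive the within-class entries to zero. Your worry about a degenerate fixed point where both distance scales contract at the same rate is resolved by exactly this two-bandwidth argument, not by the rank-$\nclass$ structure of $W_sW_s^T$ alone.
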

To summarize the proof, we identified a lower bound for HSIC that is guaranteed to increase at each layer by adjusting $\sigma_l$ while using $W_s$. Since HSIC has a known global theoretical upper bound ($\sums \Gij$), we show that the lower bound can approach arbitrarily close to the upper bound by adding more layers. 

Intuitively, the network incrementally discovers a better feature map to improve the \kme. Imagine classifying a set of dogs/cats, since \kme defines the \textit{average} dog and cat as two points in a high dimensional space. As $L\rightarrow \infty$, \RS pushes these two points maximally apart from each other with representations that enable easy distinction. The continuity of the feature map ensures that dogs are closer to the average dog than the average cat, enabling classification.

Using the \kme has a secondary implication; the layer-wise network is performing \textit{Kernel Density Estimation}  \citep{davis2011remarks}. That is, $W_s$ empowers these networks to predict the likelihood of an outcome without knowing the underlying distribution, relying solely on observations. For classification, multiplying the output of $\fm_{l-1^\circ}$ by its corresponding $W_s$ is equivalent to approximating the probability of a sample being in each class. Moreover, by summarizing the data into the $W_s$, all past examples can be discarded while allowing new examples to update the network by adjusting only the embedding itself, enabling layer-wise networks to self-adapt given new information.


Theoretically, viewing a network as an \RS is the key contribution that enables these interpretations. Its usage raises new questions with tantalizing possibilities in the debate over the brain's likelihood of performing BP. Is the brain more likely computing derivatives with BP, or is it simply repeating \textit{addition}? Can the brain use mechanisms similar to \textit{Kernel Density Estimation} to approximate the likelihood of an event?  This work only presents the theoretical feasibility and does not claim to have the answer.

\textbf{What Does \KS Converge To? } As \RS converges toward $\hsic^*$, how does \KS ultimately lead to classification? This section analyzes the limit behavior of the \KS and explains how samples are geometrically transformed to induce classification as we add layers to the network.

To formalize this relationship, let us define the point before and after the activation function as \textit{preactivation} $R_{l-1}W_l$ and \textit{activation} $\af(R_{l-1}W_l)$ (terminology utilized in NTK~\citep{jacot2018neural}). 
This distinction is also crucial for us because as \KS converges, each stage predictably converges to distinct geometric orientations that enable classification. Keeping this in mind, let us summarize the layer-wise output at \textit{preactivation} using the within $S_w^l$ and between $S_b^l$ class scatter matrices as
\begin{align}
    S_w^l = \sums 
    W_l^T(r_i - r_j)(r_i - r_j)^TW_l
    \\
    S_b^l = \sumsc 
    W_l^T(r_i - r_j)(r_i - r_j)^TW_l.
    \label{eq:bw_scatter}
\end{align}
These matrices are historically important \citep{fisher1936use,mclachlan2004discriminant} because their trace ratio $\sil = \Tr(S_w)/\Tr(S_b)$ 
measures class separability, i.e., a small $\sil$ signifies a clear separation of samples into distinguishable clusters, a crucial criterion for a classifier. Our analysis shows that by maximizing HSIC to achieve $\hsic_l \rightarrow \hsic^*$, it leads to $\sil \rightarrow 0$ as a byproduct, implying a converging behavior where samples from the same class are incrementally clustering into a single point while pushing samples from other classes away. This orientation at convergence renders classification 
at \textit{preactivation} trivial.

While sample pairs at \textit{preactivation} are partitioned into clusters via the "Euclidean distance", they are similarly partitioned via the "angular distance" $\theta$ at \textit{activation}. Indeed, our Thm.~\ref{thm:geometric_interpret} indicates that as $\hsic_l \rightarrow \hsic^*$, sample pairs at \textit{activation} within $\cS$ achieves perfect alignment ($\theta=0$) while pairs in $\cS^c$ become orthogonal to each other, separated by a maximum angle of $\theta=\pi/2$. This implies that as the network layer increases, the inner product between sample pairs from $\cS$ and $\cS^c$ converges to 1 and 0, respectively. This inner product directly defines a kernel at the point of convergence which we call the \textit{Neural Indicator Kernel}. Classification also becomes trivial once a network converges to \kn.

If the \KS converges at layer $L$ such that $\fm_{L^\circ} \approx \fm_{L+1^\circ} \approx \fm_{L+2^\circ} ... $, additional layers stop changing the network as a function, rendering them "\textit{unnecessary}". This finding yields a promising approach to identify network depth by stop adding layers once the objective is sufficiently close to the global optimum.  Therefore, showing that \KS converges in \textit{preactivation} and \textit{activation} is the key to classification and network depth. We highlight this visually observable convergent behavior in Fig. \ref{fig:progression_kdiscovery}, and formally state our results as the following theorem with its detailed proof in App.~\ref{app:thm:geometric_interpret}.


\begin{theorem}
\label{thm:geometric_interpret}
As $l \rightarrow \infty$ and  $\hsic_l \rightarrow \hsic^*$,
\begin{enumerate}[topsep=0pt, partopsep=0pt, label=\Roman*.]
    \item 
    the scatter trace ratio $\mathcal{T}$ approaches 0 where
    \begin{equation}
    \label{eq:scatter_ratio}
    \lim_{l \rightarrow \infty} \frac{\Tr(S_w^{l})}{\Tr(S_b^{l})} =   0
\end{equation}

\item the \KS converges to the following kernel (The Neural Indicator Kernel):
\begin{equation}
    \label{eq:converged_kernel}
    \lim_{l \rightarrow \infty} 
    \kf(x_{i},x_{j})^{l} = 
    \kf^*(x_{i},x_{j}) =  
    \begin{cases} 
    0 \quad \forall i,j \in \cS^c
    \\
    1 \quad \forall i,j \in \cS
    \end{cases}.
\end{equation}
\end{enumerate}
\end{theorem}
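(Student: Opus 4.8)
The plan is to obtain both parts from a single decomposition of the optimality gap $\hsic^* - \hsic_l$, using only \eq{eq:similarity_hsic}, \eq{eq:kernel_def} and facts already established for Thm.~\ref{thm:hsequence}. Write $d^l_{ij} \coloneqq \|W_l^T r_i - W_l^T r_j\|^2$ for the squared preactivation distance of the pair $(i,j)$ at layer $l$, so that by \eq{eq:kernel_def} the layer-$l$ kernel value is $\kf(x_i,x_j)^l = \exp(-d^l_{ij}/2\sigma_l^2) \in (0,1]$. Combining $\hsic^* = \sums \Gij$ (the global optimum value established in Thm.~\ref{thm:hsequence}) with the objective form \eq{eq:similarity_hsic} gives
\begin{equation}
    \hsic^* - \hsic_l = \sums \Gij\,(1 - \kf(x_i,x_j)^l) \;+\; \sumsc |\Gij|\,\kf(x_i,x_j)^l ,
    \label{eq:gap-decomp}
\end{equation}
in which every summand is nonnegative because $\Gij > 0$ on $\cS$ and $0 < \kf(x_i,x_j)^l \le 1$.

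First I would prove Part II. By hypothesis $\hsic_l \to \hsic^*$, so the left side of \eq{eq:gap-decomp} tends to $0$; being a finite sum of nonnegative terms, the right side then forces each term to $0$. The coefficients $\Gij$ on $\cS$ and $|\Gij|$ on $\cS^c$ are fixed positive constants --- the standing assumptions (at least two classes, and no two identical samples with conflicting labels) keep them bounded below by some $c > 0$ --- so each term can vanish only if $\kf(x_i,x_j)^l \to 1$ for every $(i,j) \in \cS$ and $\kf(x_i,x_j)^l \to 0$ for every $(i,j) \in \cS^c$, which is exactly the Neural Indicator Kernel $\kf^*$ of \eq{eq:converged_kernel}.

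Next, Part I follows by feeding these limits back through $\kf(x_i,x_j)^l = \exp(-d^l_{ij}/2\sigma_l^2)$: the limit on $\cS$ is equivalent to $d^l_{ij}/\sigma_l^2 \to 0$, and the limit on $\cS^c$ to $d^l_{ij}/\sigma_l^2 \to \infty$. Since $\Tr(S_w^l) = \sums d^l_{ij}$ and $\Tr(S_b^l) = \sumsc d^l_{ij}$ by \eq{eq:bw_scatter}, dividing numerator and denominator by $\sigma_l^2 > 0$ rewrites the scatter ratio as
\begin{equation}
    \frac{\Tr(S_w^l)}{\Tr(S_b^l)} = \frac{\sums d^l_{ij}/\sigma_l^2}{\sumsc d^l_{ij}/\sigma_l^2} ,
    \label{eq:ratio-rewrite}
\end{equation}
whose numerator is a finite sum of terms each tending to $0$ and whose denominator is a finite sum of positive terms each tending to $\infty$ (so the denominator is strictly positive for all large $l$ and the ratio is well defined along the tail). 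Hence the ratio tends to $0$, which is \eq{eq:scatter_ratio}.

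I expect the delicate point to be the handling of the bandwidths rather than anything in the limit arguments. The construction behind Thm.~\ref{thm:hsequence} picks $\sigma_l$ adaptively and gives no control over $\lim_l \sigma_l$ (which may drift to $0$ or $\infty$), so Part I cannot be read off from the raw distances $d^l_{ij}$ --- it must be phrased scale-invariantly through the ratios $d^l_{ij}/\sigma_l^2$, which are precisely what the kernel limits pin down; this is why \eq{eq:ratio-rewrite} rewrites the scatter ratio before passing to the limit. The remaining care is bookkeeping: non-degeneracy of the class structure for the lower bound $c > 0$, $\cS^c \neq \emptyset$ so that $\Tr(S_b^l)$ is meaningful, and the no-conflicting-labels assumption, without which some pair in $\cS^c$ would have $r_i = r_j$, forcing $\kf(x_i,x_j)^l \equiv 1$ so that the gap in \eq{eq:gap-decomp} never closes.
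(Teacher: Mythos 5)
Your proposal is correct and follows essentially the same route as the paper's proof: decompose the optimality gap $\hsic^* - \hsic_l$ into a finite sum of nonnegative per-pair terms, force each term to zero to obtain the kernel limits of Part II, and then pass through the Gaussian exponent to get $\Tr(S_w^l)\to 0$ and $\Tr(S_b^l)\to\infty$ in the normalized sense for Part I. Your explicit division of numerator and denominator by $\sigma_l^2$ in the scatter ratio is a slightly cleaner handling of the bandwidth than the paper's (which silently absorbs the $1/(2\sigma^2)$ factor into $\Tr(S_w)$ and $\Tr(S_b)$), but it is the same argument.
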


As corollaries to Thm.~\ref{thm:geometric_interpret}, the resulting partition of samples under Euclidean and angular distance implicitly satisfies different classification objectives. At \textit{preactivation}, dataset of $\nclass$ classes will be mapped into $\nclass$ distinct points. While these $\nclass$ points may not match the original labels, this difference is inconsequential for classification. In contrast, converged samples at \textit{activation} will reside along $\nclass$ orthogonal axes on a unit sphere. Realigning these results to the standard bases simulates the softmax output. Therefore, as $\hsic_l \rightarrow \hsic^*$, maximizing HSIC leads to a minimization of $\mse$ and $\ce$ without matching the actual labels themselves: instead, \textit{it matches the underlying geometry of how classes are distinguished.}  
We state these findings as two corollaries below with their proofs in App.~\ref{app:corollary:ce}.
\begin{corollary} 
    \label{corollary:mse}
    $\hsic_l \rightarrow \hsic^*$ leads to a minimization of MSE in \textit{preactivation} via a translation of labels.
\end{corollary}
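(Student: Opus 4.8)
\emph{Proof proposal.} The plan is to show that the within-class scatter at preactivation vanishes, so that each class collapses to a single preactivation point, and then to read $\mse$ off as a weighted within-class scatter once the labels are relocated to those points.

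First I would upgrade part~I of Thm.~\ref{thm:geometric_interpret} from a statement about the ratio $\Tr(S_w^l)/\Tr(S_b^l)$ to one about $\Tr(S_w^l)$ alone. Because each weight obeys $W_l^TW_l=I$ and $\af$ is the unit-norm Gaussian feature map, for $l\ge 2$ every preactivation row lies in the unit ball: $\|W_l^T r_i\|^2 = r_i^T W_lW_l^T r_i \le \lambda_{\max}(W_l^TW_l)\,\|r_i\|^2 = 1$, and $R_0=X$ is a fixed finite dataset, hence bounded as well. Consequently $\Tr(S_b^l) = \sumsc \|W_l^T r_i - W_l^T r_j\|^2 \le 4\,|\cS^c|$ is bounded above by a constant independent of $l$, so $\Tr(S_w^l) \le \bigl(4|\cS^c|\bigr)\,\Tr(S_w^l)/\Tr(S_b^l) \to 0$ as $\hsic_l \to \hsic^*$.

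Next I would convert vanishing within-class scatter into a collapse to class centroids. Writing $n_c$ for the size of class $c$, $c(i)$ for the class of sample $i$, and $\mu_c^l = \tfrac1{n_c}\sum_{i\in c} W_l^T r_i$ for the preactivation centroid of class $c$, the elementary identity $\sum_{(i,j)\in c\times c}\|a_i-a_j\|^2 = 2n_c\sum_{i\in c}\|a_i-\bar a_c\|^2$ yields $\sum_i \|W_l^T r_i - \mu_{c(i)}^l\|^2 \le \tfrac{1}{2\min_c n_c}\,\Tr(S_w^l) \to 0$ (the constant only changes by an $O(1)$ factor if $\cS$ omits the diagonal pairs). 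This is exactly the assertion that the $\nclass$ classes are driven to $\nclass$ points $\mu_1^l,\dots,\mu_{\nclass}^l$ at preactivation; their distinctness follows from the Neural Indicator Kernel limit of part~II together with the no-conflicting-labels hypothesis, though it is not needed for the $\mse$ bound itself.

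Finally I would make the ``translation of labels'' explicit. Replacing each one-hot target $y_i=e_{c(i)}$ by the relocated target $\tilde y_i := \mu_{c(i)}^l$ — a relabeling that moves the $\nclass$ label points but leaves the class partition untouched — the mean squared error between the preactivation output $R_{l-1}W_l$ and $\tilde Y$ is precisely $\tfrac1n\|R_{l-1}W_l - \tilde Y\|_F^2 = \tfrac1n\sum_i\|W_l^T r_i - \mu_{c(i)}^l\|^2$, which by the previous step tends to $0$. Hence $\hsic_l\to\hsic^*$ forces $\mse\to 0$ after translating the labels to the converged class centroids. I expect the only delicate point to be the first step — certifying that $\Tr(S_b^l)$ stays bounded so the ratio limit transfers to $\Tr(S_w^l)$ — since the remainder is just the scatter identity plus the bookkeeping of what ``translating the labels'' means.
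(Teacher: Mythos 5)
Your proof is correct, and its skeleton — each class collapses to a single point at preactivation, after which $\mse$ vanishes once the targets are relocated to those points — is the same as the paper's. The difference is in how the collapse is certified. The paper's proof of this corollary is a three-line sketch: it reads Thm.~\ref{thm:geometric_interpret} as saying ``samples of the same class are mapped into the same point'' and then translates the output by $\xi-\alpha$; but neither stated part of that theorem literally says this (Part~I only gives the ratio $\Tr(S_w^l)/\Tr(S_b^l)\to 0$, and Part~II concerns the kernel at \textit{activation}, not the Euclidean geometry at \textit{preactivation}). The collapse is only available inside the \emph{proof} of Thm.~\ref{thm:geometric_interpret}, where $\Tr(S_w)\to 0$ appears as an intermediate step. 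Your two extra moves — (i) bounding $\Tr(S_b^l)$ by $4|\cS^c|$ via the unit-norm Gaussian feature map and $W_l^TW_l=I$ so that the ratio limit transfers to $\Tr(S_w^l)\to 0$, and (ii) the centroid identity converting vanishing pairwise within-class scatter into convergence of each sample to its class mean — supply exactly the bridge the paper skips, using only the \emph{statement} of Thm.~\ref{thm:geometric_interpret} rather than its internals. What each approach buys: the paper's version is shorter but leans on an unstated consequence of its own theorem (and its claim there that $\Tr(S_b)\to\infty$ only holds with the $1/(2\sigma^2)$ factor absorbed into the scatter, which is inconsistent with the definition in Eq.~(\ref{eq:bw_scatter}) — your observation that the unnormalized $\Tr(S_b^l)$ is in fact \emph{bounded} is the correct reading); your version is self-contained and makes the ``translation of labels'' precise as $\tilde y_i=\mu_{c(i)}^l$, at the mild cost that the relocated targets depend on $l$, which is harmless for the statement as given.
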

\begin{corollary} 
    \label{corollary:ce}
    $\hsic_l \rightarrow \hsic^*$ leads to a minimization of CE in \textit{activation} via a change of bases.
\end{corollary}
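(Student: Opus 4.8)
The plan is to establish Corollary~\ref{corollary:ce} as a direct consequence of part~II of Theorem~\ref{thm:geometric_interpret}: once the \KS has converged to the Neural Indicator Kernel, the \textit{activation} outputs settle into a rigid limiting configuration, and a single orthogonal change of coordinates turns that configuration into one-hot, ``softmax-like'' outputs whose cross-entropy against the one-hot labels vanishes.

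First I would pin down the geometry at \textit{activation}. Since $\af$ is a Gaussian-kernel feature map, $\kf(x_i,x_i)^l = 1$ for every $i$ and $||\af(W_l^T r_i) - \af(W_l^T r_j)||^2 = 2 - 2\,\kf(x_i,x_j)^l$, so all $n$ activation vectors lie on the unit sphere of the subspace $\mathcal{V}$ they span (which varies with $l$), and their Gram matrix is exactly the layer-$l$ kernel matrix $K_l$ with entries $\kf(x_i,x_j)^l$. By part~II of Theorem~\ref{thm:geometric_interpret}, $K_l \to K^*$, the block matrix with entries $1$ on within-class pairs and $0$ on between-class pairs. Hence within each class the pairwise distances $2 - 2\,\kf(x_i,x_j)^l \to 0$, so the class collapses to a common unit vector $v_k$, while $\langle v_k, v_{k'}\rangle \to 0$ for $k \ne k'$; the $\nclass$ classes therefore come to occupy $\nclass$ mutually orthogonal axes of the unit sphere, precisely the configuration described after the theorem.

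Next I would exhibit the change of bases. Complete $\{v_1,\dots,v_\nclass\}$ to an orthonormal basis of $\mathcal{V}$ and let $Q$ be the orthogonal map sending $v_k \mapsto e_k$ for $k \le \nclass$ (and the remaining basis vectors to $e_{\nclass+1},\dots$); composing ``activation $\to Q \to$ keep the first $\nclass$ coordinates'' (followed, if needed, by a softmax) sends the activation of every class-$k$ sample to a vector $\hat y_i$ converging to the one-hot vector $e_k$, which is exactly what a softmax emits in its limit of infinite confidence --- so this realignment ``simulates the softmax output.'' For a class-$k$ sample the one-hot label is $y_i = e_k$, so the per-sample cross-entropy equals $-\sum_j y_{ij}\log \hat y_{ij} = -\log \hat y_{ik} \to -\log 1 = 0$; averaging over the finite dataset gives $\ce \to 0$, the global infimum of cross-entropy against one-hot labels. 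Thus driving $\hsic_l \to \hsic^*$ drives $\ce$ to its minimum through a change of bases --- the angular counterpart of Corollary~\ref{corollary:mse}, where the parallel collapse at \textit{preactivation} is matched to the labels by a translation rather than a rotation, and where, consistent with the relaxation discussed earlier, the raw output never has to equal $Y$, only to equal it up to an equivalence transformation.

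The main obstacle is not conceptual depth --- the statement really is a corollary of Theorem~\ref{thm:geometric_interpret} --- but bookkeeping around the limit, on two points. (i) For every finite $l$ the kernel entries lie strictly in $(0,1)$, so the activations never exactly coincide with $\{v_k\}$ and a rotated unit vector is not literally a probability distribution; the claim must therefore be read as ``$\ce$ approaches its infimum $0$ as $l \to \infty$'' (equivalently, for every $\epsilon>0$ there is an $L$ with $\ce < \epsilon$, using Theorem~\ref{thm:hsequence}(I)), possibly after composing with a softmax at a growing inverse temperature, which preserves this limit and makes ``simulates the softmax output'' precise. (ii) Convergence $K_l \to K^*$ determines $\{v_k\}$, and hence $Q$, only up to a global isometry that may itself depend on $l$; one has to argue --- by a routine compactness and continuity argument on the sphere --- that the point configuration (not just its Gram matrix) converges after an $l$-dependent rotation, so that a single limiting ``change of bases'' is well defined.
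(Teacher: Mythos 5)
Your proposal is correct and follows essentially the same route as the paper's own proof: both use part~II of Theorem~\ref{thm:geometric_interpret} to collapse each class onto a single unit vector with the class vectors mutually orthogonal, then define a change of bases (your $Q$, the paper's $\Xi$ built from one representative output per class) sending these vectors to the standard one-hot bases so that the realigned output matches $Y$ and $\ce$ attains its infimum. Your added care about the finite-$l$ approximation and the $l$-dependence of the limiting configuration is a refinement the paper glosses over, but it does not change the argument.
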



\textbf{Optimal $W^*$ at Each Layer. } 
The analysis of \RS yields a simple $W_s$ that provided us with an interesting neuroscience interpretation. However, \RS can also perform analysis from an optimization perspective. We performed this analysis and show in App.~\ref{app:lemma:W_not_optimal} that $W_s$ is surprisingly not the optimal solution at each layer, or more accurately, we found that $\frac{\partial}{ \partial W_l}\hsic_{l \ne L}(W_s) \ne 0.$
This result suggests that satisfying the layer-wise optimality is not a prerequisite for \RS to eventually converge to the global optimum. Moreover, it also indicates the existence of a potentially superior $W_l^*$ that may even outperform $W_s$. Using a similar derivation proposed by \citet{wu2018iterative, Wu2019SolvingIK}, we set the derivative of Eq.~(\ref{eq:main_obj}) to 0 and found that the optimal $W_l^*$ at each layer is  the most dominant eigenvectors of
\begin{equation}
    \mathcal{Q}_{l^i} = R_{l-1}^T (
    \hat{\Gamma}
 - \text{Diag}(\hat{\Gamma} 1_n))  R_{l-1}
    ,
    \label{eq:phi}
\end{equation}
where $\hat{\Gamma}$ is a function of $W_{l^\mathbf{i}}$ computed with $\hat{\Gamma} = \Gamma \odot K_{R_{l-1}W_{l^{\mathbf{i}}}}$ and the symbol $\odot$ indicates the element-wise product. Unlike $W_s$, this solution guarantees at layer $l$ to achieve  $\frac{\partial}{ \partial W_l}\hsic_l(W^*_l) = 0$. While $W_s$ is a closed-form solution inspired by the brain, \RS also gives us $W^*$ as an optimal solution, highlighting the impressive versatility of \RS as a tool to analyze layer-wise networks.

\section{GENERALIZATION} 
Besides being an optimum solution, $W_l^*$ exhibits many advantages over $W_s$. For example, while $W_s$ experimentally performs well, $W^*$ converges with fewer layers and superior generalization. This raises a well-known question on generalization. It is known that overparameterized MLPs can generalize even without any explicit regularizer \citep{Zhang2017UnderstandingDL}. This observation contradicts classical learning theory and has been a longstanding puzzle \citep{cao2019generalization,brutzkus2017sgd,allen2019learning}. 
Therefore, by being overparameterized with an infinitely wide network, \kn's ability under HSIC to generalize raises similar questions. In both cases, $W_s$ and $W^*$, the HSIC objective employs an infinitely wide network that should result in overfitting.  We ask theoretically, under our framework, what makes HSIC and $W^*$ special? 

Recently, \citet{poggio2020complexity} have proposed that  traditional MLPs generalize  because gradient methods implicitly regularize the normalized weights given an exponential objective (like our HSIC). We discovered a similar impact the process of finding $W^*$ has on HSIC, i.e., HSIC can be reformulated to isolate out $n$ functions $[D_1(W_l), ..., D_n(W_l)]$ that act as a penalty term during optimization. Let $\cS_i$ be the set of samples that belongs to the $i_{th}$ class and let $\cS^c_i$ be its complement, then each function $D_i(W_l)$ is defined as \begin{equation}
\begin{split}
    D_i(W_l) =  &
    \frac{1}{\sigma^2}
    \sum_{j \in \cS_i}
    \Gij \kf_{W_l}(r_i,r_j)
    - \\
    &
    \frac{1}{\sigma^2}
    \sum_{j \in \cS^c_i}
    |\Gij| \kf_{W_l}(r_i,r_j).
    \label{eq:penalty_term}
\end{split}
\end{equation}
Notice that $D_i(W_l)$ is simply \eq{eq:similarity_hsic} for a single sample scaled by $\frac{1}{\sigma^2}$. Therefore, improving $W_l$ also leads to an increase and decrease of $\kf_{W_l}(r_i,r_j)$ associated with $\cS_i$ and $\cS^c_i$ in \eq{eq:penalty_term}, thereby increasing the size of the penalty term  $D_i(W_l)$.  To appreciate how $D_i(W_l)$ penalizes $\hsic$, we propose an equivalent formulation in the theorem below with its derivation in App~\ref{app:thm:regularizer}.
\begin{theorem}
\label{thm:regularizer}
\eq{eq:similarity_hsic} is equivalent to 
\begin{equation}
    \label{eq:generalization_formulation}
\begin{split}
    \max_{W_l} 
    \sum_{i,j} &
    \frac{\Gij}{\sigma^2}
    \ISMexp
    (r_i^TW_lW_l^Tr_j) 
    \\
    & -
    \sum_{i}
    D_i(W_l)
    ||W_l^Tr_i||_2.
\end{split}
\end{equation}
\end{theorem}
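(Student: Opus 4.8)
The statement is a pointwise algebraic identity between the two objectives, so the plan is to start from the penalty functions of \eq{eq:penalty_term} and fold them back into \eq{eq:similarity_hsic}. The first observation is that, because $\Gij>0$ for pairs in $\cS$ while $\Gij=-|\Gij|<0$ for pairs in $\cS^c$, the two sums defining $D_i(W_l)$ in \eq{eq:penalty_term} collapse to $D_i(W_l)=\frac{1}{\sigma^2}\sum_j\Gij\,\kf_{W_l}(r_i,r_j)$ — i.e.\ $D_i(W_l)$ is exactly the part of \eq{eq:similarity_hsic} involving sample $i$, divided by $\sigma^2$. Hence $\sum_i D_i(W_l)\,\|W_l^Tr_i\|_2=\frac{1}{\sigma^2}\sum_{i,j}\Gij\,\kf_{W_l}(r_i,r_j)\,\|W_l^Tr_i\|_2$.

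The second step is to symmetrize: since $\Gij$ and $\kf_{W_l}(r_i,r_j)$ are invariant under $i\leftrightarrow j$, relabeling gives $\sum_{i,j}\Gij\,\kf_{W_l}(r_i,r_j)\,\|W_l^Tr_i\|_2=\frac12\sum_{i,j}\Gij\,\kf_{W_l}(r_i,r_j)\,(\|W_l^Tr_i\|_2+\|W_l^Tr_j\|_2)$. Adding \eq{eq:similarity_hsic}, namely $\sum_{i,j}\Gij\,\kf_{W_l}(r_i,r_j)$, to $\sum_i D_i(W_l)\|W_l^Tr_i\|_2$ then yields $\sum_{i,j}\frac{\Gij}{\sigma^2}\,\kf_{W_l}(r_i,r_j)\left(\sigma^2+\frac12(\|W_l^Tr_i\|_2+\|W_l^Tr_j\|_2)\right)$. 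It remains to recognize the summand as $\frac{\Gij}{\sigma^2}\ISMexp(r_i^TW_lW_l^Tr_j)$: expanding the Gaussian exponent of \eq{eq:kernel_def} through $\|W_l^Tr_i-W_l^Tr_j\|_2^2=\|W_l^Tr_i\|_2^2+\|W_l^Tr_j\|_2^2-2\,r_i^TW_lW_l^Tr_j$ rewrites $\kf_{W_l}(r_i,r_j)$, and hence this whole expression, purely in terms of the two preactivation norms and the inner product $r_i^TW_lW_l^Tr_j$ — precisely the quantity that $\ISMexp(\cdot)$ abbreviates. Rearranging leaves \eq{eq:similarity_hsic} equal to $\sum_{i,j}\frac{\Gij}{\sigma^2}\ISMexp(r_i^TW_lW_l^Tr_j)-\sum_i D_i(W_l)\|W_l^Tr_i\|_2$, which is \eq{eq:generalization_formulation}.

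The step that needs the most care is the last identification — checking that the assembled expression $\kf_{W_l}(r_i,r_j)\left(\sigma^2+\frac12(\|W_l^Tr_i\|_2+\|W_l^Tr_j\|_2)\right)$ matches, as a function of the two norms and the inner product, exactly the $\ISMexp$ appearing in the paper, and, should $\ISMexp$ be stated via a first-order expansion of the exponential rather than in closed form, supplying the corresponding mean-value/Taylor argument. One must also track the diagonal $i=j$ terms (where $\kf_{W_l}(r_i,r_i)=1$ and the kernel argument reduces to $\|W_l^Tr_i\|_2^2$) and the bookkeeping of the $i\leftrightarrow j$ symmetrization so that no term is double-counted. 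The remaining ingredients — the sign split of $\Gij$ across $\cS_i$ and $\cS_i^c$ and the symmetry relabeling — are routine, and because the identity holds pointwise in $W_l$ it invokes neither optimality nor the constraint $W_l^TW_l=I$.
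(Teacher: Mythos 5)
Your proposal reads Thm.~\ref{thm:regularizer} as a pointwise algebraic identity in $W_l$, and the argument collapses at exactly the step you flag as delicate. After your (correct) observations that $D_i(W_l)=\frac{1}{\sigma^2}\sum_j \Gij\,\kf_{W_l}(r_i,r_j)$ and the $i\leftrightarrow j$ symmetrization, you need, pair by pair,
\[
\kf_{W_l}(r_i,r_j)\Bigl(\sigma^2+\tfrac{1}{2}\bigl(\|W_l^Tr_i\|_2+\|W_l^Tr_j\|_2\bigr)\Bigr)
=\kf_{W_l}(r_i,r_j)\,\bigl(r_i^TW_lW_l^Tr_j\bigr),
\]
because $\ISMexp$ is just the Gaussian factor $e^{-\Tr(W^TA_{i,j}W)/2\sigma^2}=\kf_{W_l}(r_i,r_j)$ and the inner product in \eq{eq:generalization_formulation} sits as a multiplicative prefactor, not inside the exponent. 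That forces $\sigma^2+\frac12(\|W_l^Tr_i\|_2+\|W_l^Tr_j\|_2)=r_i^TW_lW_l^Tr_j$, which is false for generic $W_l$; expanding $\|W_l^Tr_i-W_l^Tr_j\|^2$ only rewrites the exponent of $\kf_{W_l}$ and can never extract $r_i^TW_lW_l^Tr_j$ as a prefactor, and no Taylor or mean-value device turns an approximation into the claimed equality. Indeed the two objectives are simply not equal as functions of $W_l$: already for a single within-class pair the left side is $\Gij\,\kf_{W_l}(r_i,r_j)$ while the right side carries the extra factor $r_i^TW_lW_l^Tr_j/\sigma^2$ on the kernel term, so no bookkeeping of diagonal terms or double counting can close the gap.

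The intended ``equivalence'' is an equivalence of optimization problems, not of functions, and your closing remark that the proof ``invokes neither optimality nor the constraint $W_l^TW_l=I$'' is exactly backwards: both are essential. The paper forms the Lagrangian of \eq{eq:similarity_hsic} under $W_l^TW_l=I$, sets the gradient to zero to obtain the eigenproblem $\mathcal{Q}W=W\Lambda$ with $\mathcal{Q}=\frac12\sum_{i,j}\hat{\Gamma}_{i,j}A_{i,j}$ and $\hat{\Gamma}_{i,j}=\frac{\Gij}{\sigma^2}\kf_{W_l}(r_i,r_j)$, invokes the ISM result that the maximizer consists of extreme eigenvectors of $\mathcal{Q}$, and thereby replaces \eq{eq:similarity_hsic} with the surrogate $\max_W \Tr\bigl(W^TR^T[\hat{\Gamma}-D_{\hat{\Gamma}}]RW\bigr)$ via the graph-Laplacian identity $\sum_{i,j}\hat{\Gamma}_{i,j}(r_i-r_j)(r_i-r_j)^T=2R^T(D_{\hat{\Gamma}}-\hat{\Gamma})R$. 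The degree matrix $D_{\hat{\Gamma}}$ is precisely where the penalty terms $D_i(W_l)\|W_l^Tr_i\|_2$ come from (and note $[RWW^TR^T]_{i,i}=\|W^Tr_i\|_2^2$, so the paper's unsquared norm is itself a slip worth recording). A correct proof therefore must pass through the stationarity conditions and this Laplacian decomposition; a purely pointwise manipulation of the kernel sums cannot yield the theorem.
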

Based on Thm.~\ref{thm:regularizer}, $D_i(W_l)$ adds a negative variable cost to the sample norm, $||W_l^Tr_i||_2$, prescribing an implicit regularizer on HSIC. Therefore, as $W_l$ improve HSIC, it also imposes an incrementally heavier penalty on \eq{eq:generalization_formulation}, severely constraining $W_l$.

\section{EXPERIMENTS}
\textbf{Experiment Settings. }
    Our analysis from Thm.~\ref{thm:hsequence} shows that the Gaussian kernel's $\sigma_l$ can be incrementally decreased to guarantee a monotonic increase. Our experiments learn the $\sigma_l$ by incrementally decreasing its value until an improved HSIC is achieved. For results that involves $W^*$, the Gaussian kernel's $\sigma_l$ is set to the value that maximizes HSIC, see App. \ref{app:opt_sigma}. 
    
    The activation function is approximated with RFF of width 300 \citep{rahimi2008random}. The convergence threshold for \RS is set at $\hsic_l > 0.99$. The network structures discovered by $W^*$ for every dataset are recorded and provided in App.~\ref{app:W_dimensions}. The MLPs that use $\mse$ and $\ce$ have weights initialized via the method used in \citet{he2015delving}. All datasets are centered to 0 and scaled to a standard deviation of 1. All sources are written in Python using Numpy, Sklearn and Pytorch \citep{numpy,sklearn_api,paszke2017automatic}, and ran on an Intel Xeon(R) CPU E5-2630 v3 @ 2.40GHz x 16 with 16 total cores.
\begin{figure*}[t]
\center
    \includegraphics[width=14cm,height=3.5cm]{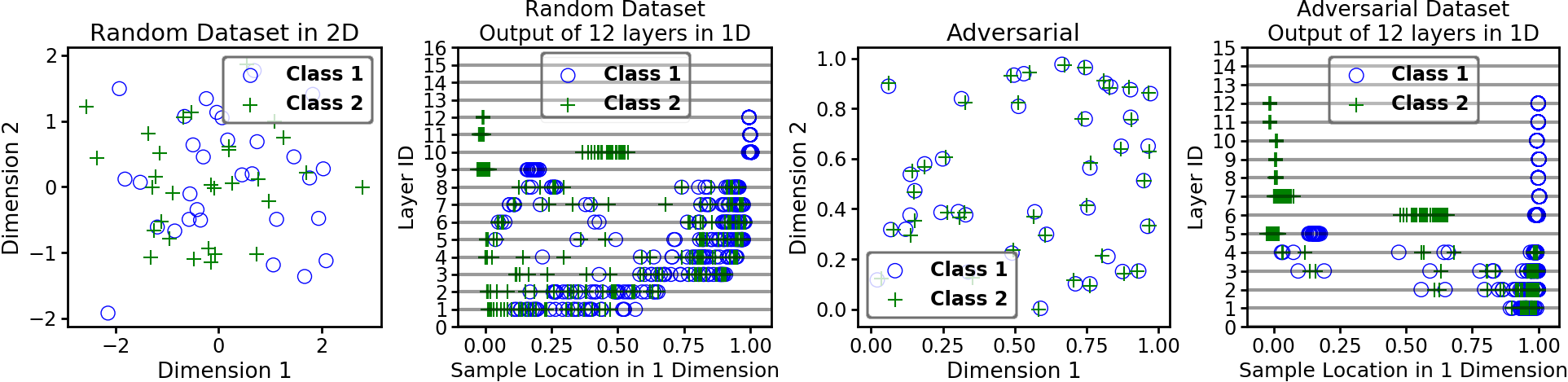}
    \caption{Thm.~\ref{thm:hsequence} is simulated on two highly complex datasets, Random and Adversarial. The 2D representation is shown, and next to it, the 1D output of each layer is displayed over each line. As we add more layers, we can see the samples from the two classes gradually separate from each other. Both datasets achieved the global optimum $\hsic^*$ at the $12_{th}$ layers. For additional results, see App.~\ref{app:sigma_values}}.
    \label{fig:thm_proof}
\end{figure*}

\begin{figure*}[t]
\center
    \includegraphics[width=14cm,height=4cm]{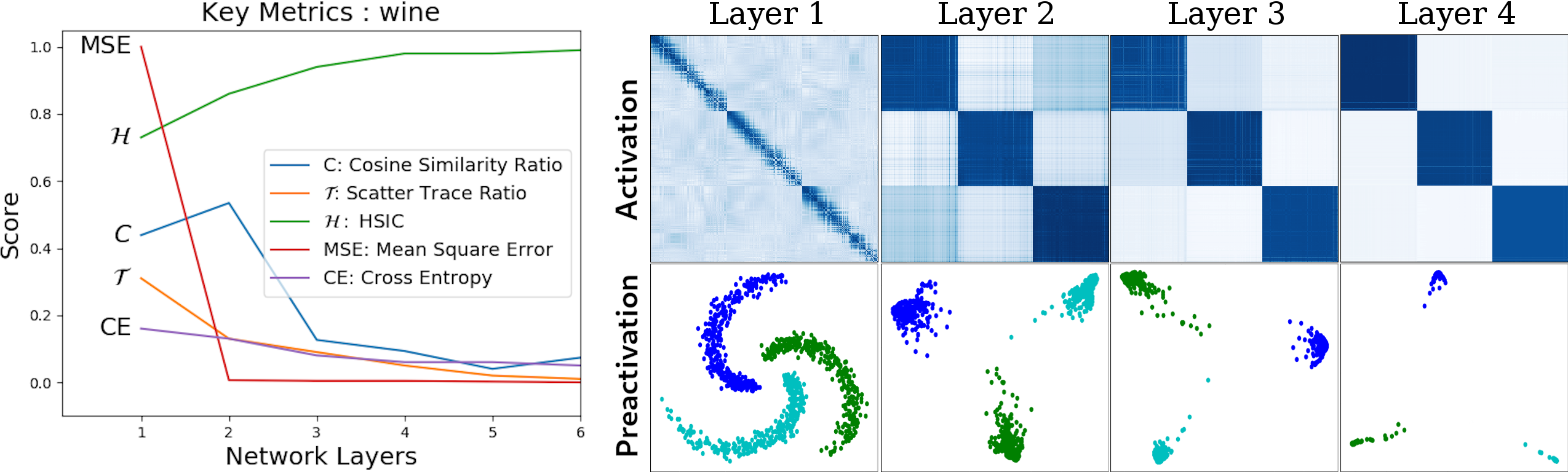}
    \caption{\textbf{Left - Key evaluation metrics at each layer: } Showing key statistics computed from the output of each layer. Notice $\hsic$ monotonically increases while $\sil$ and $\csr$ decrease.  
    \textbf{Right - A visual confirmation of Thm.~\ref{thm:geometric_interpret}: } The top row plots out the kernel matrix from the output of the first 4 layers. Layer 1 shows the original kernel, while layer 4 represents the kernel when it has nearly converged to \kn. The perfect block diagonal structure at layer 4 confirms that the \KS is indeed converging toward \kn. Below the kernels, we also plot out the convergent pattern in \textit{preactivation}. Samples from the same class converge towards a single point while being pushed away from samples from different classes.}
    \label{fig:progression_kdiscovery}
\end{figure*}

\textbf{Datasets. }
The experiments are divided into two sections. Since our contributions are completely theoretical in nature, the majority of the experiments will focus on supporting our thesis by verifying the various components of the theoretical claims. We use three synthetic datasets (Random, Adversarial, and Spiral) and five popular UCI benchmark datasets: wine, cancer, car, face, and divorce ~\citep{Dua:2017}. They are included along with the source code in the supplementary, and their download link and statistics are in App.~\ref{app:data_detail}. All theoretical claims are experimentally reproducible with its source code and datasets publicly available in the Supplementary and at \url{https://github.com/endsley}.

\textbf{Competing Kernel Methods. } We next compare $W^*$ and $W_s$ to several popular kernel networks that have publicly available code: NTK \citep{Arora2020HarnessingTP, Yu2019}, GP \citep{maka892017, Wilson2016DeepKL}, Arc-cos \citep{gionuno2017,Cho2009KernelMF}. 
For this comparison, we additionally included CIFAR10 \citep{Krizhevsky09learningmultiple}.
The images are preprocessed with a convolutional layer that outputs vectorized samples of $x_i \in \mathbb{R}^{10}$. The preprocessing code, the network output, and the network weights are included in the supplementary.


\begin{table*}[t]
\tiny
\centering
\setlength{\tabcolsep}{3.5pt}   
\renewcommand{\arraystretch}{1.1} 
\begin{tabular}{|c|c|c|c|c|c|c|c|c|c|}
	\hline
	 & obj &
		 Train Acc $\uparrow$&
		 Test Acc $\uparrow$&
		 Time(s) $\downarrow$&
		 $\hsic$ $\uparrow$&
		 $\mse$ $\downarrow$&
		 $\ce$ $\downarrow$&
		 $C$ $\downarrow$&
		 $\sil$ $\downarrow$\\ 
		\hline 
	\parbox[t]{2mm}{\multirow{4}{*}{\rotatebox[origin=c]{90}{\textbf{random}}}} &
	 $W^*$ &
		 \textbf{\black{1.00 $\pm$ 0.00}} &
		 \black{0.38 $\pm$ 0.21} &
		 \textbf{\black{0.40 $\pm$ 0.37}} &
		 \textbf{\black{1.00 $\pm$ 0.01}} &
		 \textbf{\black{0.00 $\pm$ 0.01}} &
		 \black{0.05 $\pm$ 0.00} &
		 \textbf{\black{0.00 $\pm$ 0.06}} &
		 \black{0.02 $\pm$ 0.0} \\ 
    & $W_s$ &
    	0.99 $\pm$ 0.01 & 
    	0.45 $\pm$ 0.20 & 
    	0.52 $\pm$ 0.05 & 
    	0.98 $\pm$ 0.01 & 
    	2.37 $\pm$ 1.23 & 
    	0.06 $\pm$ 0.13 & 
    	0.05 $\pm$ 0.02 & 
    	0.13 $\pm$ 0.01\\
	 & $\ce$ &
		 \textbf{\black{1.00 $\pm$ 0.00}} &
		 \black{0.48 $\pm$ 0.17} &
		 \black{25.07 $\pm$ 5.55} &
		 \textbf{\black{1.00 $\pm$ 0.00}} &
		 \black{10.61 $\pm$ 11.52} &
		 \textbf{\black{0.0 $\pm$ 0.0}} &
		 \textbf{\black{0.0 $\pm$ 0.0}} &
		 \textbf{\black{0.0 $\pm$ 0.0}} \\ 
	 & $\mse$ &
		 \black{0.98 $\pm$ 0.04} &
		 \textbf{\black{0.63 $\pm$ 0.21}} &
		 \black{23.58 $\pm$ 8.38} &
		 \black{0.93 $\pm$ 0.12} &
		 \black{0.02 $\pm$ 0.04} &
		 \black{0.74 $\pm$ 0.03} &
		 \black{0.04 $\pm$ 0.04} &
		 \black{0.08 $\pm$ 0.1} \\ 
		\hline 
	\parbox[t]{2mm}{\multirow{4}{*}{\rotatebox[origin=c]{90}{\textbf{Advers}}}} &
    $W^*$ &
		 \textbf{\black{1.00 $\pm$ 0.00}} &
		 \black{0.38 $\pm$ 0.10} &
		 \textbf{\black{0.52 $\pm$ 0.51}} &
		 \textbf{\black{1.00 $\pm$ 0.00}} &
		 \textbf{\black{0.00 $\pm$ 0.00}} &
		 \textbf{\black{0.04 $\pm$ 0.00}} &
		 \textbf{\black{0.01 $\pm$ 0.08}} &
		 \textbf{\black{0.01 $\pm$ 0.0}} \\ 
    & $W_s$ &
    	0.99 $\pm$ 0.04 & 
    	\textbf{0.42 $\pm$ 0.18} & 
    	2.82 $\pm$ 0.81 & 
    	0.99 $\pm$ 0.19 & 
    	15.02 $\pm$ 11.97 & 
    	0.32 $\pm$ 0.15 & 
    	0.30 $\pm$ 0.18 & 
    	0.34 $\pm$ 0.19\\
	 & $\ce$ &
		 \black{0.59 $\pm$ 0.04} &
		 \black{0.29 $\pm$ 0.15} &
		 \black{69.54 $\pm$ 24.14} &
		 \black{0.10 $\pm$ 0.07} &
		 \black{0.65 $\pm$ 0.16} &
		 \black{0.63 $\pm$ 0.04} &
		 \black{0.98 $\pm$ 0.03} &
		 \black{0.92 $\pm$ 0.0} \\ 
	 & $\mse$ &
		 \black{0.56 $\pm$ 0.02} &
		 \black{0.32 $\pm$ 0.20} &
		 \black{113.75 $\pm$ 21.71} &
		 \black{0.02 $\pm$ 0.01} &
		 \black{0.24 $\pm$ 0.01} &
		 \black{0.70 $\pm$ 0.00} &
		 \black{0.99 $\pm$ 0.02} &
		 \black{0.95 $\pm$ 0.0} \\ 
		\hline 
		\hline 
	\parbox[t]{2mm}{\multirow{4}{*}{\rotatebox[origin=c]{90}{\textbf{spiral}}}} &
	 $W^*$ &
		 \textbf{\black{1.00 $\pm$ 0.00}} &
		 \textbf{\black{1.00 $\pm$ 0.00}} &
		 \textbf{\black{0.87 $\pm$ 0.08}} &
		 \black{0.98 $\pm$ 0.01} &
		 \black{0.01 $\pm$ 0.00} &
		 \black{0.02 $\pm$ 0.01} &
		 \black{0.04 $\pm$ 0.03} &
		 \black{0.02 $\pm$ 0.0} \\ 
    & $W_s$ &
    	\textbf{1.00 $\pm$ 0.00} & 
    	\textbf{1.00 $\pm$ 0.00} & 
    	13.54 $\pm$ 5.66 & 
    	0.88 $\pm$ 0.03 & 
    	38.60 $\pm$ 25.24 & 
    	0.06 $\pm$ 0.02 & 
    	0.08 $\pm$ 0.04 & 
    	0.08 $\pm$ 0\\
	 & $\ce$ &
		 \textbf{\black{1.00 $\pm$ 0}} &
		 \textbf{\black{1.00 $\pm$ 0}} &
		 \black{11.59 $\pm$ 5.52} &
		 \textbf{\black{1.00 $\pm$ 0}} &
		 \black{57.08 $\pm$ 31.25} &
		 \textbf{\black{0 $\pm$ 0}} &
		 \textbf{\black{0 $\pm$ 0}} &
		 \textbf{\black{0 $\pm$ 0}} \\ 
	 & $\mse$ &
		 \textbf{\black{1.00 $\pm$ 0}} &
		 \black{0.99 $\pm$ 0.01} &
		 \black{456.77 $\pm$ 78.83} &
		 \textbf{\black{1.00 $\pm$ 0}} &
		 \textbf{\black{0 $\pm$ 0}} &
		 \black{1.11 $\pm$ 0.04} &
		 \black{0.40 $\pm$ 0.01} &
		 \textbf{\black{0 $\pm$ 0}} \\ 
		\hline 
	\parbox[t]{2mm}{\multirow{4}{*}{\rotatebox[origin=c]{90}{\textbf{wine}}}} &
	 $W^*$ &
		 \black{0.99 $\pm$ 0} &
		 \textbf{\black{0.99 $\pm$ 0.05}} &
		 \textbf{\black{0.28 $\pm$ 0.04}} &
		 \black{0.98 $\pm$ 0.01} &
		 \black{0.01 $\pm$ 0} &
		 \black{0.07 $\pm$ 0.01} &
		 \black{0.04 $\pm$ 0.03} &
		 \black{0.02 $\pm$ 0} \\ 
    & $W_s$ &
    	0.98 $\pm$ 0.01 & 
    	0.94 $\pm$ 0.04 & 
    	0.78 $\pm$ 0.09 & 
    	0.93 $\pm$ 0.01 & 
    	2.47 $\pm$ 0.26 & 
    	0.06 $\pm$ 0.01 & 
    	0.05 $\pm$ 0.01 & 
    	0.08 $\pm$ 0.01\\
	 & $\ce$ &
		 \textbf{\black{1.00 $\pm$ 0.00}} &
		 \black{0.94 $\pm$ 0.06} &
		 \black{3.30 $\pm$ 1.24} &
		 \textbf{\black{1.00 $\pm$ 0.00}} &
		 \black{40.33 $\pm$ 35.5} &
		 \textbf{\black{0 $\pm$ 0}} &
		 \textbf{\black{0 $\pm$ 0}} &
		 \textbf{\black{0 $\pm$ 0}} \\ 
	 & $\mse$ &
		 \textbf{\black{1.00 $\pm$ 0}} &
		 \black{0.89 $\pm$ 0.17} &
		 \black{77.45 $\pm$ 45.40} &
		 \textbf{\black{1.00 $\pm$ 0}} &
		 \textbf{\black{0 $\pm$ 0}} &
		 \black{1.15 $\pm$ 0.07} &
		 \black{0.49 $\pm$ 0.02} &
		 \textbf{\black{0 $\pm$ 0}} \\ 
		\hline 
	\parbox[t]{2mm}{\multirow{4}{*}{\rotatebox[origin=c]{90}{\textbf{cancer}}}} &
	 $W^*$ &
		 \black{0.99 $\pm$ 0} &
		 \textbf{\black{0.98 $\pm$ 0.02}} &
		 \textbf{\black{2.58 $\pm$ 1.07}} &
		 \black{0.96 $\pm$ 0.01} &
		 \black{0.02 $\pm$ 0.01} &
		 \black{0.04 $\pm$ 0.01} &
		 \black{0.02 $\pm$ 0.04} &
		 \black{0.04 $\pm$ 0.0} \\ 
    & $W_s$ &
    	0.98 $\pm$ 0.01 & 
    	0.96 $\pm$ 0.03 & 
    	6.21 $\pm$ 0.36 & 
    	0.88 $\pm$ 0.01 & 
    	41.31 $\pm$ 56.17 & 
    	0.09 $\pm$ 0.01 & 
    	0.09 $\pm$ 0.02 & 
    	0.16 $\pm$ 0.03\\
	 & $\ce$ &
		 \textbf{\black{1.00 $\pm$ 0}} &
		 \black{0.97 $\pm$ 0.01} &
		 \black{82.03 $\pm$ 35.15} &
		 \textbf{\black{1.00 $\pm$ 0}} &
		 \black{2330 $\pm$ 2915} &
		 \textbf{\black{0 $\pm$ 0}} &
		 \textbf{\black{0 $\pm$ 0}} &
		 \textbf{\black{0 $\pm$ 0}} \\ 
	 & $\mse$ &
		 \textbf{\black{1.00 $\pm$ 0.00}} &
		 \black{0.97 $\pm$ 0.03} &
		 \black{151.81 $\pm$ 27.27} &
		 \textbf{\black{1.00 $\pm$ 0}} &
		 \textbf{\black{0 $\pm$ 0}} &
		 \black{0.66 $\pm$ 0.06} &
		 \textbf{\black{0 $\pm$ 0}} &
		 \textbf{\black{0 $\pm$ 0}} \\ 
		\hline 
	\parbox[t]{2mm}{\multirow{4}{*}{\rotatebox[origin=c]{90}{\textbf{car}}}} &
	 $W^*$ &
		 \textbf{\black{1.00 $\pm$ 0}} &
		 \textbf{\black{1.00 $\pm$ 0.01}} &
		 \textbf{\black{1.51 $\pm$ 0.35}} &
		 \black{0.99 $\pm$ 0} &
		 \textbf{\black{0 $\pm$ 0}} &
		 \black{0.01 $\pm$ 0.00} &
		 \black{0.04 $\pm$ 0.03} &
		 \black{0.01 $\pm$ 0} \\ 
    & $W_s$ &
    	\textbf{1.00 $\pm$ 0} & 
    	\textbf{1.00 $\pm$ 0} & 
    	5.15 $\pm$ 1.07 & 
    	0.93 $\pm$ 0.02 & 
    	12.89 $\pm$ 2.05 & 
    	0 $\pm$ 0 & 
    	0.06 $\pm$ 0.02 & 
    	0.08 $\pm$ 0.02\\
	 & $\ce$ &
		 \textbf{\black{1.00 $\pm$ 0}} &
		 \textbf{\black{1.00 $\pm$ 0}} &
		 \black{25.79 $\pm$ 18.86} &
		 \textbf{\black{1.00 $\pm$ 0}} &
		 \black{225.11 $\pm$ 253} &
		 \textbf{\black{0 $\pm$ 0}} &
		 \textbf{\black{0 $\pm$ 0}} &
		 \textbf{\black{0 $\pm$ 0}} \\ 
	 & $\mse$ &
		 \textbf{\black{1.00 $\pm$ 0}} &
		 \textbf{\black{1.00 $\pm$ 0}} &
		 \black{504 $\pm$ 116.6} &
		 \textbf{\black{1.00 $\pm$ 0}} &
		 \textbf{\black{0 $\pm$ 0}} &
		 \black{1.12 $\pm$ 0.07} &
		 \black{0.40 $\pm$ 0} &
		 \textbf{\black{0 $\pm$ 0}} \\ 
		\hline 
	\parbox[t]{2mm}{\multirow{4}{*}{\rotatebox[origin=c]{90}{\textbf{face}}}} &
	 $W^*$ &
		 \textbf{\black{1.00 $\pm$ 0}} &
		 \textbf{\black{0.99 $\pm$ 0.01}} &
		 \textbf{\black{0.78 $\pm$ 0.08}} &
		 \black{0.97 $\pm$ 0} &
		 \textbf{\black{0 $\pm$ 0}} &
		 \black{0.17 $\pm$ 0} &
		 \black{0.01 $\pm$ 0} &
		 \textbf{\black{0 $\pm$ 0}} \\ 
    & $W_s$ &
    	0.98 $\pm$ 0.01 & 
    	0.94 $\pm$ 0.26 & 
    	0.86 $\pm$ 0.04 & 
    	3.15 $\pm$ 3.05 & 
    	2.07 $\pm$ 1.04 & 
    	0.28 $\pm$ 0.51 & 
    	0.04 $\pm$ 0.01 & 
    	0.01 $\pm$ 0\\
	 & $\ce$ &
		 \textbf{\black{1.00 $\pm$ 0}} &
		 \black{0.79 $\pm$ 0.31} &
		 \black{23.70 $\pm$ 8.85} &
		 \textbf{\black{1.00 $\pm$ 0}} &
		 \black{16099 $\pm$ 16330} &
		 \textbf{\black{0 $\pm$ 0}} &
		 \textbf{\black{0 $\pm$ 0}} &
		 \textbf{\black{0 $\pm$ 0}} \\ 
	 & $\mse$ &
		 \black{0.92 $\pm$ 0.10} &
		 \black{0.52 $\pm$ 0.26} &
		 \black{745.2 $\pm$ 282} &
		 \black{0.94 $\pm$ 0.07} &
		 \black{0.11 $\pm$ 0.12} &
		 \black{3.50 $\pm$ 0.28} &
		 \black{0.72 $\pm$ 0.01} &
		 \textbf{\black{0 $\pm$ 0}} \\ 
		\hline 
	\parbox[t]{2mm}{\multirow{4}{*}{\rotatebox[origin=c]{90}{\textbf{divorce}}}} &
	 $W^*$ &
		 \black{0.99 $\pm$ 0.01} &
		 \black{0.98 $\pm$ 0.02} &
		 \textbf{\black{0.71 $\pm$ 0.41}} &
		 \black{0.99 $\pm$ 0.01} &
		 \black{0.01 $\pm$ 0.01} &
		 \black{0.03 $\pm$ 0} &
		 \textbf{\black{0 $\pm$ 0.05}} &
		 \black{0.02 $\pm$ 0} \\ 
    & $W_s$ &
    	0.99 $\pm$ 0 & 
    	0.95 $\pm$ 0.06 & 
    	1.54 $\pm$ 0.13 & 
    	0.91 $\pm$ 0.01 & 
    	60.17 $\pm$ 70.64 & 
    	0.04 $\pm$ 0.01 & 
    	0.05 $\pm$ 0.01 & 
    	0.08 $\pm$ 0\\
	 & $\ce$ &
		 \textbf{\black{1.00 $\pm$ 0}} &
		 \textbf{\black{0.99 $\pm$ 0.02}} &
		 \black{2.62 $\pm$ 1.21} &
		 \textbf{\black{1.00 $\pm$ 0}} &
		 \black{14.11 $\pm$ 12.32} &
		 \textbf{\black{0 $\pm$ 0}} &
		 \textbf{\black{0 $\pm$ 0}} &
		 \textbf{\black{0 $\pm$ 0}} \\ 
	 & $\mse$ &
		 \textbf{\black{1.00 $\pm$ 0}} &
		 \black{0.97 $\pm$ 0.03} &
		 \black{47.89 $\pm$ 24.31} &
		 \textbf{\black{1.00 $\pm$ 0}} &
		 \textbf{\black{0 $\pm$ 0}} &
		 \black{0.73 $\pm$ 0.07} &
		 \textbf{\black{0 $\pm$ 0.01}} &
		 \black{0.01 $\pm$ 0} \\ 
	\hline
\end{tabular}
\vspace{3pt}
\caption{Each dataset contains 4 rows comparing the $W^*$ and $W_s$ against traditional MLPs trained using MSE and CE via SGD given the same network width and depth. The best results are in bold with $\uparrow/\downarrow$ indicating larger/smaller values preferred.}
\label{table:main}
\end{table*}
\textbf{Evaluation Metrics.} We report the HSIC objective as $\hsic$ at convergence along with the training/test accuracy for each dataset. Here, $\hsic$ is normalized to the range between 0 to 1 using the method proposed by \citet{cortes2012algorithms}. To corroborate Corollaries~\ref{corollary:mse} and \ref{corollary:ce}, we also record $\mse$ and $\ce$. To evaluate the sample geometry predicted by \eq{eq:scatter_ratio}, we recorded the scatter trace ratio $\sil$ to measure the compactness of samples within and between classes. The angular distance between samples in $\cS$ and $\cS^c$ as predicted by \eq{eq:converged_kernel} is evaluated with the Cosine Similarity Ratio ($\csr$). The equations for normalized $\hsic$ and $\csr$ are
    $\hsic = \frac{\hsic(\fm(X),Y)}{\sqrt{\hsic(\fm(X),\fm(X)) \hsic(Y,Y)}}$ and 
    $\csr = \frac
        {\sum_{i,j \in \mathcal{S}^c} \langle \fm(x_i), \fm(x_j) \rangle}
         {\sum_{i,j \in \mathcal{S}} \langle \fm(x_i), \fm(x_j) \rangle}$.

\textbf{Confirming Theorem 1. }
Since Thm.~\ref{thm:hsequence} guarantees an optimal convergence for \textit{any} dataset given $W_s$, we designed an Adversarial dataset of high complexity, i.e., the sample pairs in $\cS^c$ are intentionally placed significantly closer than samples pairs in $\cS$. We next designed a Random dataset with completely random labels. We then simulated Thm.~\ref{thm:hsequence} in Python and plotted the sample behavior in Fig.~\ref{fig:thm_proof}. The original 2-dimensional data is shown next to its 1-dimensional results: each line represents the 1D output at that layer. As predicted by the theorem, the \RS for both datasets converges to the global optimum at the 12th layer and perfectly separated the samples based on labels. \textit{This experiment simultaneously confirms Thm.~\ref{thm:hsequence} and the idea that simple and repetitive patterns as weights can incrementally improve a network to classify any pattern during training. It also supports the argument that gradients might not be theoretically necessary to optimize deep networks.}

\begin{table*}[t]
\centering
\tiny
\setlength{\tabcolsep}{1.5pt}
\renewcommand{\arraystretch}{1.3}
\begin{tabular}{|c|c|c|c|c|c||c|c|c|c|c|}
	\hline
	 & 	 \multicolumn{5}{|c||}{Training Accuracy} & 
	 \multicolumn{5}{|c|}{Test Accuracy} \\ 
	\hline
	 & 	W* & Ws & GP & Arc-cos & NTK & W* & Ws & GP & Arc-cos & NTK \\ 
	\hline
	\textbf{adversarial} &  
	\textbf{1.00 $\pm$ 0.00} &
	\textbf{1.00 $\pm$ 0.00} &
	0.56 $\pm$ 0.02 &
	0.53 $\pm$ 0.02 &
	0.52 $\pm$ 0.01&
	\textbf{0.53 $\pm$ 0.00} &
	0.50 $\pm$ 0.16 &
	0.17 $\pm$ 0.15 &
	0.25 $\pm$ 0.08 &
	0.30 $\pm$ 0.06
\\ 
	\textbf{random} &  
	\textbf{1.00 $\pm$ 0.00} &
	\textbf{1.00 $\pm$ 0.00} &
	0.95 $\pm$ 0.02 &
	0.66 $\pm$ 0.02 &
	0.63 $\pm$ 0.03&
	0.40 $\pm$ 0.02 &
	0.32 $\pm$ 0.14 &
	\textbf{0.55 $\pm$ 0.22} &
	0.53 $\pm$ 0.21 &
	0.37 $\pm$ 0.21
\\ 
	\textbf{spiral} &  
	\textbf{1.00 $\pm$ 0.00} &
	\textbf{1.00 $\pm$ 0.00} &
	0.99 $\pm$ 0.00 &
	0.99 $\pm$ 0.00 &
	0.99 $\pm$ 0.00&
	\textbf{0.99 $\pm$ 0.01} &
	0.97 $\pm$ 0.00 &
	0.99 $\pm$ 0.01 &
	\textbf{0.99 $\pm$ 0.01} &
	0.98 $\pm$ 0.02
\\ 
	\textbf{wine} &  
	0.99 $\pm$ 0.00 &
	1.00 $\pm$ 0.00 &
	\textbf{1.0 $\pm$ 0.0} &
	\textbf{1.0 $\pm$ 0.0} &
	\textbf{1.0 $\pm$ 0.0} &
	\textbf{0.99 $\pm$ 0.03} &
	0.94 $\pm$ 0.03 &
	0.86 $\pm$ 0.11 &
	0.94 $\pm$ 0.07 &
	0.96 $\pm$ 0.04
\\ 
	\textbf{cancer} &  
	\textbf{0.99 $\pm$ 0.00} &
	\textbf{0.99 $\pm$ 0.00} &
	\textbf{0.99 $\pm$ 0.00} &
	\textbf{0.99 $\pm$ 0.00} &
	0.98 $\pm$ 0.00&
	\textbf{0.98 $\pm$ 0.00} &
	0.98 $\pm$ 0.02 &
	0.97 $\pm$ 0.02 &
	0.97 $\pm$ 0.02 &
	0.97 $\pm$ 0.02
\\ 
	\textbf{car} &  
	\textbf{1 $\pm$ 0.0} &
	\textbf{1 $\pm$ 0.0} &
	\textbf{1 $\pm$ 0.0} &
	\textbf{1 $\pm$ 0.0} &
	\textbf{1 $\pm$ 0.0} &
	\textbf{1 $\pm$ 0.0} &
	0.99 $\pm$ 0.00 &
	0.99 $\pm$ 0.01 &
	\textbf{1.0 $\pm$ 0.0} &
	\textbf{1.0 $\pm$ 0.0} 
\\ 
	\textbf{face} &  
	\textbf{1.0 $\pm$ 0.0} &
	\textbf{1.0 $\pm$ 0.0} &
	0.55 $\pm$ 0.02 &
	\textbf{1.0 $\pm$ 0.0} &
	\textbf{1.0 $\pm$ 0.0} &
	\textbf{1.0 $\pm$ 0.0} &
	0.99 $\pm$ 0.01 &
	0.22 $\pm$ 0.05 &
	0.79 $\pm$ 0.32 &
	0.61 $\pm$ 0.39
\\ 
	\textbf{divorce} &  
	0.99 $\pm$ 0.01 &
	0.99 $\pm$ 0.01 &
	\textbf{1.00 $\pm$ 0.0} &
	\textbf{1.00 $\pm$ 0.0} &
	\textbf{1.00 $\pm$ 0.0} &
	\textbf{0.99 $\pm$ 0.01} &
	0.97 $\pm$ 0.05 &
	0.94 $\pm$ 0.06 &
	0.95 $\pm$ 0.12 &
	0.97 $\pm$ 0.07
\\ 
	\textbf{cifar10} &  
	\textbf{0.99 $\pm$ 0.01} &
	0.81 $\pm$ 0.00 &
	0.77 $\pm$ 0.01 &
	0.94 $\pm$ 0.01 &
	0.94 $\pm$ 0.01 &
	\textbf{0.93 $\pm$ 0.01} &
	0.74 $\pm$ 0.01 &
	0.72 $\pm$ 0.01 &
	\textbf{0.93 $\pm$ 0.01} &
	\textbf{0.93 $\pm$ 0.01}
\\ 
	\hline
\end{tabular}
\caption{Comparing Train and test accuracy between recent kernel networks against $W_s/W^*$. Notice that $W^*$ consistently achieves the highest test Accuracy while $W_s$ performs comparatively to GP. Also, note that $W_s$ and $W^*$ were the only models with a sufficiently large function class to shatter the Adversarial and random dataset. The 10-fold dataset is reshuffled to contrast against Table~\ref{table:main}. }
\label{table:comparison_table}
\end{table*}
 
\textbf{Does $\hsic$ Improve Monotonically? } Thm.~\ref{thm:hsequence} also claims that \RS should be monotonically increasing. Fig.~\ref{fig:progression_kdiscovery} (Left) plots out all key metrics during training \textit{at each layer}. Here, the \RS is clearly monotonic and converging towards a $\hsic^* \approx 1$. Moreover, the trends for $\sil$ and $\csr$ indicate an incremental clustering of samples into separate partitions. Corresponding to low $\sil$ and $\csr$ values, the low MSE and $\ce$ errors at convergence further reinforces the claims of Corollaries~\ref{corollary:mse} and \ref{corollary:ce}. 
These patterns are highly repeatable, as shown across all datasets included in App.~\ref{app:metric_graphs}.

\textbf{Can Layer-wise Networks with $W^*/W_s$ Compete Against MLPs with BP? }  We conduct 10-fold cross-validation spanning 8 datasets and report their mean and the standard deviation for all key metrics in Table~\ref{table:main}. Once our model is trained and has learned its structure, we use the same depth and width to train 2 additional MLPs via SGD, where instead of HSIC, $\mse$ and $\ce$ are used as $\hsic$. \textit{This allows us to compare \kn to traditional networks of identical sizes that are trained by BP/SGD.}

Observing Table~\ref{table:main}, first note that $\mathbf{\hsic} \approx 1$ for both $W_s$ and $W^*$. This corroborates with Thm.~\ref{thm:hsequence} where $\hsic^{*}$ is guaranteed given enough layers. Next, notice that $\boldsymbol{\sil \approx 0}$,  $\boldsymbol{\csr \approx 0}$, \textbf{MSE} $\boldsymbol{\approx 0}$, and \textbf{CE} $\boldsymbol{\approx 0}$. These results are aligned with Thm.~\ref{thm:geometric_interpret} and its corollaries since they indicate that samples of the same class are merging into a single point while minimizing MSE and CE for classification. Moreover, note that MSE/CE failed to shatter the adversarial dataset while $W_s/W^*$ easily handled the data. This suggest that given the same network size, $W_s/W^*$ is significantly more flexible.

Lastly, notice how $W_s$ and $W^*$ perform favorably against traditional MLPs on almost all benchmarks, confirming that layer-wise network weights can be obtained via basic operations to achieve comparable performance. These results affirmatively answer question 1 while validating layer-wise networks as a promising alternative to BP. \textit{Our theoretical and experimental results strongly suggest that layer-wise networks with "closed-form weights" can match MLP performance. }

\textbf{What is the Speed Difference? } The execution time is also included for reference in Table~\ref{table:main}. Since \kn can be obtained via a single forward pass while SGD requires many iterations of backpropagation, \kn \textit{should be faster}. The Time column of Table~\ref{table:main} confirms this expectation by a wide margin. The biggest difference can be observed by comparing the face dataset: $W^*/W_s$ finished in 0.78/0.86 seconds while $\mse$ required 745 seconds, \textit{which is almost a 1000 times difference.}

\textbf{Does $W_s$ and $W^*$ Experimentally Generalize? }
   With the exception of the two random datasets, the Test Accuracy of $W_s$ consistently performed well against MLPs trained using BP/SGD. This suggests that $W_s$ is a trivially obtainable network solution that generalizes.
    From an optimization perspective, $W^*$ impressively generalized even better across all datasets. It further differentiates itself on a high dimension Face dataset where it was the only method that avoided overfitting. While the generalizability of $W_s$ and $W^*$ is still ongoing research, the experimental results are promising.

\textbf{Is the Network Converging to the Neural Indicator Kernel? } 
A visual pattern of \KS converging toward NIK is shown on the right of Fig.~\ref{fig:progression_kdiscovery}. We rearrange the samples of the same class to be adjacent to each other. This allows us to evaluate the kernel quality via its block diagonal structure quickly. Since Gaussian kernels are restricted to values between 0 and 1, we let white and dark blue be 0 and 1 respectively, where the gradients reflect values in between. Our proof predicts that the \KS converges to NIK, evolving from an uninformative kernel into a highly discriminating kernel of perfect \textit{block diagonal structures}. Corresponding to the top row, the bottom row plots out the \textit{preactivation} at each layer. As predicted by Thm.~\ref{thm:geometric_interpret}, the samples of the same class incrementally converge towards a single point. This pattern is consistently observed on all datasets, and the complete collection of the \textit{kernel sequences} for each dataset can be found in App.~\ref{app:kernel_sequence_graph}. 



\textbf{Comparing to Other Deep Kernel Frameworks. }
The development of \RS primarily focused on the biological motivation to model layer-wise networks; it was not designed to automatically outperforming all existing networks. Table~\ref{table:main} satisfies our primary objective by showing that it already performs comparably against traditional MLPs trained with BP. 
For the kernel community, here we supply additional experiments comparing $W_s/W^*$ against several recently proposed kernel networks to demonstrate surprisingly competitive results in Table~\ref{table:comparison_table}. First, note that $W_s$ reliably achieves comparable test accuracy as GP while $W^*$ consistently outperform all kernel networks. This suggests that \RS yields networks that not only generalizes competitively against traditional MLPs, it is also comparable to some recently proposed kernel networks. Next notice for the Training Accuracy, only $W_s$ and $W^*$ were sufficiently expressive to shatter both the adversarial and random dataset, confirming the expressiveness of layer-wise networks. 


\textbf{Conclusion.} 
We have comprehensively tested each theoretical claim, demonstrating how a layer-wise network modeled as an \RS can yield closed-form solutions that perform comparably to MLPs. The convincing results from our experiments strongly align with the predictions made by our theorems, answering the two central questions of this exploratory work. 

Indeed, a repetition of simple rules can incrementally construct powerful networks capable of classifying any pattern, bypassing both BP and SGD. By modeling MLPs as a \KS, it allows us to design convergent behaviors for layer-wise networks to achieve classification while identifying the network depth.

\textbf{Limitations. }
    While \RS can be designed to bypass BP and "aspirationally" mimic the brain, it \textit{cannot and does not claim} that the sequence itself models the brain; it only models layer-wise networks. Although the similarities are compelling, \textit{any} connection relating \RS to the brain is currently premature. We also emphasize that \RS was not proposed to outperform existing commercial networks. It is a theoretical framework intended for \textit{analysis} of layer-wise networks to \textit{explore potential biological connections}. Therefore, theoretically proving the existence of a closed-form solution is the primary contribution. While \RS yields networks that perform comparably to traditional MLPs and recent kernel networks, carefully engineered networks will likely perform better, e.g., ResNet \citep{He2016DeepRL}.  
    
\textbf{Climate Implication. } This work is originally motivated by the desire to reduce computational requirements for deep networks, lowering their carbon footprint. For how this work hopes to help fight climate change, please refer to App.~\ref{app:motivation}.

\textbf{Acknowledgements}
The work described was supported in part by Award Numbers U2COD023375, UH3OD023251, U01 HL089897, U01 HL089856, R01 HL124233, and R01 HL147326 from the NIH, the National Heart, Lung, and Blood Institute, the FDA Center
for Tobacco Products (CTP), and NSF IIS 1546428.
The authors would also like to thank Davin Hill, Chengzhi Shi, Zulqarnain Khan and Kathia Kirschner for helping to review the paper and provide fruitful discussions.

\clearpage



\clearpage
\bibliography{reference}

\begin{thebibliography}{68}
\providecommand{\natexlab}[1]{#1}
\providecommand{\url}[1]{\texttt{#1}}
\expandafter\ifx\csname urlstyle\endcsname\relax
  \providecommand{\doi}[1]{doi: #1}\else
  \providecommand{\doi}{doi: \begingroup \urlstyle{rm}\Url}\fi

\bibitem[Allen-Zhu et~al.(2019)Allen-Zhu, Li, and Liang]{allen2019learning}
Zeyuan Allen-Zhu, Yuanzhi Li, and Yingyu Liang.
\newblock Learning and generalization in overparameterized neural networks,
  going beyond two layers.
\newblock In \emph{Advances in neural information processing systems}, pages
  6155--6166, 2019.

\bibitem[Ansuini et~al.(2019)Ansuini, Laio, Macke, and
  Zoccolan]{ansuini2019intrinsic}
Alessio Ansuini, Alessandro Laio, Jakob~H Macke, and Davide Zoccolan.
\newblock Intrinsic dimension of data representations in deep neural networks.
\newblock \emph{NeurIPS}, 2019.

\bibitem[Arora et~al.(2020)Arora, Du, Li, Salakhutdinov, Wang, and
  Yu]{Arora2020HarnessingTP}
S.~Arora, S.~Du, Zhiyuan Li, R.~Salakhutdinov, Ruosong Wang, and Dingli Yu.
\newblock Harnessing the power of infinitely wide deep nets on small-data
  tasks.
\newblock \emph{ArXiv}, abs/1910.01663, 2020.

\bibitem[Arora et~al.(2019)Arora, Du, Hu, Li, Salakhutdinov, and
  Wang]{arora2019exact}
Sanjeev Arora, Simon~S Du, Wei Hu, Zhiyuan Li, Ruslan Salakhutdinov, and
  Ruosong Wang.
\newblock On exact computation with an infinitely wide neural net.
\newblock \emph{NeurIPS}, 2019.

\bibitem[Belilovsky et~al.(2019)Belilovsky, Eickenberg, and
  Oyallon]{belilovsky2019greedy}
Eugene Belilovsky, Michael Eickenberg, and Edouard Oyallon.
\newblock Greedy layerwise learning can scale to imagenet.
\newblock In \emph{International conference on machine learning}, pages
  583--593. PMLR, 2019.

\bibitem[Bengio et~al.(2015)Bengio, Lee, Bornschein, Mesnard, and
  Lin]{bengio2015towards}
Yoshua Bengio, Dong-Hyun Lee, Jorg Bornschein, Thomas Mesnard, and Zhouhan Lin.
\newblock Towards biologically plausible deep learning.
\newblock \emph{arXiv preprint arXiv:1502.04156}, 2015.

\bibitem[Bengio et~al.(2017)Bengio, Mesnard, Fischer, Zhang, and
  Wu]{bengio2017stdp}
Yoshua Bengio, Thomas Mesnard, Asja Fischer, Saizheng Zhang, and Yuhuai Wu.
\newblock Stdp-compatible approximation of backpropagation in an energy-based
  model.
\newblock \emph{Neural computation}, 29\penalty0 (3):\penalty0 555--577, 2017.

\bibitem[Brutzkus et~al.(2017)Brutzkus, Globerson, Malach, and
  Shalev-Shwartz]{brutzkus2017sgd}
Alon Brutzkus, Amir Globerson, Eran Malach, and Shai Shalev-Shwartz.
\newblock Sgd learns over-parameterized networks that provably generalize on
  linearly separable data.
\newblock \emph{ICLR}, 2017.

\bibitem[Buitinck et~al.(2013)Buitinck, Louppe, Blondel, Pedregosa, Mueller,
  Grisel, Niculae, Prettenhofer, Gramfort, Grobler, Layton, VanderPlas, Joly,
  Holt, and Varoquaux]{sklearn_api}
Lars Buitinck, Gilles Louppe, Mathieu Blondel, Fabian Pedregosa, Andreas
  Mueller, Olivier Grisel, Vlad Niculae, Peter Prettenhofer, Alexandre
  Gramfort, Jaques Grobler, Robert Layton, Jake VanderPlas, Arnaud Joly, Brian
  Holt, and Ga{\"{e}}l Varoquaux.
\newblock {API} design for machine learning software: experiences from the
  scikit-learn project.
\newblock In \emph{ECML PKDD Workshop: Languages for Data Mining and Machine
  Learning}, pages 108--122, 2013.

\bibitem[Cao and Gu(2019)]{cao2019generalization}
Yuan Cao and Quanquan Gu.
\newblock Generalization bounds of stochastic gradient descent for wide and
  deep neural networks.
\newblock In \emph{Advances in Neural Information Processing Systems}, pages
  10835--10845, 2019.

\bibitem[Cho and Saul(2009)]{Cho2009KernelMF}
Youngmin Cho and L.~Saul.
\newblock Kernel methods for deep learning.
\newblock In \emph{NIPS}, 2009.

\bibitem[Cortes et~al.(2012)Cortes, Mohri, and
  Rostamizadeh]{cortes2012algorithms}
Corinna Cortes, Mehryar Mohri, and Afshin Rostamizadeh.
\newblock Algorithms for learning kernels based on centered alignment.
\newblock \emph{Journal of Machine Learning Research}, 13\penalty0
  (Mar):\penalty0 795--828, 2012.

\bibitem[Crick(1989)]{crick1989recent}
Francis Crick.
\newblock The recent excitement about neural networks.
\newblock \emph{Nature}, 337\penalty0 (6203):\penalty0 129--132, 1989.

\bibitem[Cybenko(1989)]{cybenko1989approximation}
George Cybenko.
\newblock Approximation by superpositions of a sigmoidal function.
\newblock \emph{Mathematics of control, signals and systems}, 2\penalty0
  (4):\penalty0 303--314, 1989.

\bibitem[Davis et~al.(2011)Davis, Lii, and Politis]{davis2011remarks}
Richard~A Davis, Keh-Shin Lii, and Dimitris~N Politis.
\newblock Remarks on some nonparametric estimates of a density function.
\newblock In \emph{Selected Works of Murray Rosenblatt}, pages 95--100.
  Springer, 2011.

\bibitem[Dheeru and Karra~Taniskidou(2017)]{Dua:2017}
Dua Dheeru and Efi Karra~Taniskidou.
\newblock {UCI} machine learning repository, 2017.
\newblock URL \url{http://archive.ics.uci.edu/ml}.

\bibitem[Duan et~al.(2020)Duan, Yu, Chen, and Principe]{duan2020kernel}
Shiyu Duan, Shujian Yu, Yunmei Chen, and Jose~C Principe.
\newblock On kernel method--based connectionist models and supervised deep
  learning without backpropagation.
\newblock \emph{Neural computation}, 32\penalty0 (1):\penalty0 97--135, 2020.

\bibitem[Duvenaud et~al.(2014)Duvenaud, Rippel, Adams, and
  Ghahramani]{duvenaud2014avoiding}
David Duvenaud, Oren Rippel, Ryan Adams, and Zoubin Ghahramani.
\newblock Avoiding pathologies in very deep networks.
\newblock In \emph{Artificial Intelligence and Statistics}, pages 202--210,
  2014.

\bibitem[Fahlman and Lebiere(1990)]{fahlman1990cascade}
Scott~E Fahlman and Christian Lebiere.
\newblock The cascade-correlation learning architecture.
\newblock Technical report, CARNEGIE-MELLON UNIV PITTSBURGH PA SCHOOL OF
  COMPUTER SCIENCE, 1990.

\bibitem[Fisher(1936)]{fisher1936use}
Ronald~A Fisher.
\newblock The use of multiple measurements in taxonomic problems.
\newblock \emph{Annals of eugenics}, 7\penalty0 (2):\penalty0 179--188, 1936.

\bibitem[gionuno(2017)]{gionuno2017}
gionuno.
\newblock Arc cosine kernels.
\newblock \url{https://github.com/gionuno/arc_cosine_kernels}, 2017.

\bibitem[Gretton et~al.(2005)Gretton, Bousquet, Smola, and
  Sch{\"o}lkopf]{gretton2005measuring}
Arthur Gretton, Olivier Bousquet, Alex Smola, and Bernhard Sch{\"o}lkopf.
\newblock Measuring statistical dependence with hilbert-schmidt norms.
\newblock In \emph{International conference on algorithmic learning theory},
  pages 63--77. Springer, 2005.

\bibitem[Grossberg(1987)]{grossberg1987competitive}
Stephen Grossberg.
\newblock Competitive learning: From interactive activation to adaptive
  resonance.
\newblock \emph{Cognitive science}, 11\penalty0 (1):\penalty0 23--63, 1987.

\bibitem[Guerguiev et~al.(2017)Guerguiev, Lillicrap, and
  Richards]{guerguiev2017towards}
Jordan Guerguiev, Timothy~P Lillicrap, and Blake~A Richards.
\newblock Towards deep learning with segregated dendrites.
\newblock \emph{Elife}, 6:\penalty0 e22901, 2017.

\bibitem[Hayou et~al.(2019)Hayou, Doucet, and Rousseau]{hayou2019impact}
Soufiane Hayou, Arnaud Doucet, and Judith Rousseau.
\newblock On the impact of the activation function on deep neural networks
  training.
\newblock \emph{ICML}, 2019.

\bibitem[He et~al.(2015)He, Zhang, Ren, and Sun]{he2015delving}
Kaiming He, Xiangyu Zhang, Shaoqing Ren, and Jian Sun.
\newblock Delving deep into rectifiers: Surpassing human-level performance on
  imagenet classification.
\newblock In \emph{Proceedings of the IEEE international conference on computer
  vision}, pages 1026--1034, 2015.

\bibitem[He et~al.(2016)He, Zhang, Ren, and Sun]{He2016DeepRL}
Kaiming He, X.~Zhang, Shaoqing Ren, and Jian Sun.
\newblock Deep residual learning for image recognition.
\newblock \emph{2016 IEEE Conference on Computer Vision and Pattern Recognition
  (CVPR)}, pages 770--778, 2016.

\bibitem[Hinton(2007)]{hinton2007backpropagation}
Geoffrey Hinton.
\newblock How to do backpropagation in a brain.
\newblock In \emph{Invited talk at the NIPS’2007 deep learning workshop},
  volume 656, 2007.

\bibitem[Hornik(1991)]{hornik1991approximation}
Kurt Hornik.
\newblock Approximation capabilities of multilayer feedforward networks.
\newblock \emph{Neural networks}, 4\penalty0 (2):\penalty0 251--257, 1991.

\bibitem[Jacot et~al.(2018)Jacot, Gabriel, and Hongler]{jacot2018neural}
Arthur Jacot, Franck Gabriel, and Cl{\'e}ment Hongler.
\newblock Neural tangent kernel: Convergence and generalization in neural
  networks.
\newblock In \emph{Advances in neural information processing systems}, pages
  8571--8580, 2018.

\bibitem[Jones et~al.(2001--)Jones, Oliphant, Peterson, et~al.]{numpy}
Eric Jones, Travis Oliphant, Pearu Peterson, et~al.
\newblock {SciPy}: Open source scientific tools for {Python}, 2001--.
\newblock URL \url{http://www.scipy.org/}.
\newblock [Online; accessed <today>].

\bibitem[Krizhevsky(2009)]{Krizhevsky09learningmultiple}
Alex Krizhevsky.
\newblock Learning multiple layers of features from tiny images.
\newblock Technical report, MIT, 2009.

\bibitem[Kulkarni and Karande(2017)]{kulkarni2017layer}
Mandar Kulkarni and Shirish Karande.
\newblock Layer-wise training of deep networks using kernel similarity.
\newblock \emph{arXiv preprint arXiv:1703.07115}, 2017.

\bibitem[Lee et~al.(2017)Lee, Bahri, Novak, Schoenholz, Pennington, and
  Sohl-Dickstein]{lee2017deep}
Jaehoon Lee, Yasaman Bahri, Roman Novak, Samuel~S Schoenholz, Jeffrey
  Pennington, and Jascha Sohl-Dickstein.
\newblock Deep neural networks as gaussian processes.
\newblock \emph{ICLR}, 2017.

\bibitem[Lee et~al.(2019)Lee, Xiao, Schoenholz, Bahri, Sohl-Dickstein, and
  Pennington]{lee2019wide}
Jaehoon Lee, Lechao Xiao, Samuel~S Schoenholz, Yasaman Bahri, Jascha
  Sohl-Dickstein, and Jeffrey Pennington.
\newblock Wide neural networks of any depth evolve as linear models under
  gradient descent.
\newblock \emph{NeurIPS}, 2019.

\bibitem[Liao et~al.(2016)Liao, Leibo, and Poggio]{liao2016important}
Qianli Liao, Joel~Z Leibo, and Tomaso Poggio.
\newblock How important is weight symmetry in backpropagation?
\newblock In \emph{Thirtieth AAAI Conference on Artificial Intelligence}, 2016.

\bibitem[Lillicrap et~al.(2007)Lillicrap, Santoro, Marris, Akerman, and
  Hinton]{Lilli2020Backpropagation}
T.P. Lillicrap, A.~Santoro, L.~Marris, C.~Akerman, and G.~Hinton.
\newblock Backpropagation and the brain.
\newblock In \emph{Nat Rev Neurosci (2020)}, volume 656, 2007.

\bibitem[Lindsey and Litwin-Kumar(2020)]{lindsey2020learning}
Jack Lindsey and Ashok Litwin-Kumar.
\newblock Learning to learn with feedback and local plasticity.
\newblock \emph{arXiv preprint arXiv:2006.09549}, 2020.

\bibitem[L{\"o}we et~al.(2019)L{\"o}we, O'Connor, and
  Veeling]{Lwe2019PuttingAE}
Sindy L{\"o}we, Peter O'Connor, and Bastiaan~S. Veeling.
\newblock Putting an end to end-to-end: Gradient-isolated learning of
  representations.
\newblock In \emph{NeurIPS}, 2019.

\bibitem[Ma et~al.(2019)Ma, Lewis, and Kleijn]{ma2019hsic}
Wan-Duo~Kurt Ma, JP~Lewis, and W~Bastiaan Kleijn.
\newblock The hsic bottleneck: Deep learning without back-propagation.
\newblock \emph{AAAI}, 2019.

\bibitem[Mairal et~al.(2014)Mairal, Koniusz, Harchaoui, and
  Schmid]{mairal2014convolutional}
Julien Mairal, Piotr Koniusz, Zaid Harchaoui, and Cordelia Schmid.
\newblock Convolutional kernel networks.
\newblock In \emph{Advances in neural information processing systems}, pages
  2627--2635, 2014.

\bibitem[maka89(2017)]{maka892017}
maka89.
\newblock neural-tangent-kernel-uci.
\newblock \url{https://github.com/maka89/Deep-Kernel-GP}, 2017.

\bibitem[Matthews et~al.(2018)Matthews, Rowland, Hron, Turner, and
  Ghahramani]{matthews2018gaussian}
Alexander G de~G Matthews, Mark Rowland, Jiri Hron, Richard~E Turner, and
  Zoubin Ghahramani.
\newblock Gaussian process behaviour in wide deep neural networks.
\newblock \emph{ICLR}, 2018.

\bibitem[McLachlan(2004)]{mclachlan2004discriminant}
Geoffrey~J McLachlan.
\newblock \emph{Discriminant analysis and statistical pattern recognition},
  volume 544.
\newblock John Wiley \& Sons, 2004.

\bibitem[Montavon et~al.(2011)Montavon, Braun, and Matller]{montavon2011kernel}
Grasgoire Montavon, Mikio~L Braun, and Klaus-Robert Matller.
\newblock Kernel analysis of deep networks.
\newblock \emph{Journal of Machine Learning Research}, 12\penalty0
  (Sep):\penalty0 2563--2581, 2011.

\bibitem[Muandet et~al.(2016)Muandet, Fukumizu, Sriperumbudur, and
  Sch{\"o}lkopf]{muandet2016kernel}
Krikamol Muandet, Kenji Fukumizu, Bharath Sriperumbudur, and Bernhard
  Sch{\"o}lkopf.
\newblock Kernel mean embedding of distributions: A review and beyond.
\newblock \emph{arXiv preprint arXiv:1605.09522}, 2016.

\bibitem[Neal(2012)]{neal2012bayesian}
Radford~M Neal.
\newblock \emph{Bayesian learning for neural networks}, volume 118.
\newblock Springer Science \& Business Media, 2012.

\bibitem[Niu et~al.(2010)Niu, Dy, and Jordan]{niu2010multiple}
Donglin Niu, Jennifer~G Dy, and Michael~I Jordan.
\newblock Multiple non-redundant spectral clustering views.
\newblock In \emph{Proceedings of the 27th international conference on machine
  learning (ICML-10)}, pages 831--838, 2010.

\bibitem[N{\o}kland and Eidnes(2019)]{nokland2019training}
Arild N{\o}kland and Lars~Hiller Eidnes.
\newblock Training neural networks with local error signals.
\newblock In \emph{International Conference on Machine Learning}, pages
  4839--4850. PMLR, 2019.

\bibitem[Paszke et~al.(2017)Paszke, Gross, Chintala, Chanan, Yang, DeVito, Lin,
  Desmaison, Antiga, and Lerer]{paszke2017automatic}
Adam Paszke, Sam Gross, Soumith Chintala, Gregory Chanan, Edward Yang, Zachary
  DeVito, Zeming Lin, Alban Desmaison, Luca Antiga, and Adam Lerer.
\newblock Automatic differentiation in pytorch.
\newblock \emph{PyTorch.org}, 2017.

\bibitem[Poggio et~al.(2020)Poggio, Liao, and Banburski]{poggio2020complexity}
Tomaso Poggio, Qianli Liao, and Andrzej Banburski.
\newblock Complexity control by gradient descent in deep networks.
\newblock \emph{Nature Communications}, 11\penalty0 (1):\penalty0 1--5, 2020.

\bibitem[Pogodin and Latham(2020)]{Pogodin2020KernelizedIB}
Roman Pogodin and P.~Latham.
\newblock Kernelized information bottleneck leads to biologically plausible
  3-factor hebbian learning in deep networks.
\newblock \emph{ArXiv}, abs/2006.07123, 2020.

\bibitem[Rahimi and Recht(2008)]{rahimi2008random}
Ali Rahimi and Benjamin Recht.
\newblock Random features for large-scale kernel machines.
\newblock In \emph{Advances in neural information processing systems}, pages
  1177--1184, 2008.

\bibitem[Sacramento et~al.(2018)Sacramento, Costa, Bengio, and
  Senn]{sacramento2018dendritic}
Jo{\~a}o Sacramento, Rui~Ponte Costa, Yoshua Bengio, and Walter Senn.
\newblock Dendritic cortical microcircuits approximate the backpropagation
  algorithm.
\newblock In \emph{Advances in Neural Information Processing Systems}, pages
  8721--8732, 2018.

\bibitem[Strubell et~al.(2019)Strubell, Ganesh, and
  McCallum]{strubell2019energy}
Emma Strubell, Ananya Ganesh, and Andrew McCallum.
\newblock Energy and policy considerations for deep learning in nlp.
\newblock \emph{ACL}, 2019.

\bibitem[van~den Oord et~al.(2018)van~den Oord, Li, and
  Vinyals]{Oord2018RepresentationLW}
A{\"a}ron van~den Oord, Y.~Li, and Oriol Vinyals.
\newblock Representation learning with contrastive predictive coding.
\newblock \emph{ArXiv}, abs/1807.03748, 2018.

\bibitem[Walker(2019)]{walker2020Deep}
Edgar~Y Walker.
\newblock Deep neural networks uncover what the brain likes to see.
\newblock \emph{Nature Neuroscience}, 10\penalty0 (1):\penalty0 1--7, 2019.

\bibitem[Whittington and Bogacz(2017)]{whittington2017approximation}
James~CR Whittington and Rafal Bogacz.
\newblock An approximation of the error backpropagation algorithm in a
  predictive coding network with local hebbian synaptic plasticity.
\newblock \emph{Neural computation}, 29\penalty0 (5):\penalty0 1229--1262,
  2017.

\bibitem[Whittington and Bogacz(2019)]{whittington2019theories}
James~CR Whittington and Rafal Bogacz.
\newblock Theories of error back-propagation in the brain.
\newblock \emph{Trends in cognitive sciences}, 2019.

\bibitem[Wilson et~al.(2016)Wilson, Hu, Salakhutdinov, and
  Xing]{Wilson2016DeepKL}
A.~Wilson, Zhiting Hu, R.~Salakhutdinov, and E.~Xing.
\newblock Deep kernel learning.
\newblock \emph{ArXiv}, abs/1511.02222, 2016.

\bibitem[Wu et~al.(2018)Wu, Ioannidis, Sznaier, Li, Kaeli, and
  Dy]{wu2018iterative}
Chieh Wu, Stratis Ioannidis, Mario Sznaier, Xiangyu Li, David Kaeli, and
  Jennifer Dy.
\newblock Iterative spectral method for alternative clustering.
\newblock In \emph{International Conference on Artificial Intelligence and
  Statistics}, pages 115--123, 2018.

\bibitem[Wu et~al.(2019)Wu, Miller, Chang, Sznaier, and Dy]{Wu2019SolvingIK}
Chieh~T Wu, J.~Miller, Y.~Chang, M.~Sznaier, and Jennifer~G. Dy.
\newblock Solving interpretable kernel dimensionality reduction.
\newblock In \emph{NeurIPS}, 2019.

\bibitem[Yang(2019)]{yang2019scaling}
Greg Yang.
\newblock Scaling limits of wide neural networks with weight sharing: Gaussian
  process behavior, gradient independence, and neural tangent kernel
  derivation.
\newblock \emph{arXiv preprint arXiv:1902.04760}, 2019.

\bibitem[Yu(2019)]{Yu2019}
Leo Yu.
\newblock neural-tangent-kernel-uci.
\newblock \url{https://github.com/LeoYu/neural-tangent-kernel-UCI}, 2019.

\bibitem[Zador(2019)]{zador2019critique}
Anthony~M Zador.
\newblock A critique of pure learning and what artificial neural networks can
  learn from animal brains.
\newblock \emph{Nature communications}, 10\penalty0 (1):\penalty0 1--7, 2019.

\bibitem[Zhang et~al.(2017)Zhang, Bengio, Hardt, Recht, and
  Vinyals]{Zhang2017UnderstandingDL}
C.~Zhang, S.~Bengio, M.~Hardt, B.~Recht, and Oriol Vinyals.
\newblock Understanding deep learning requires rethinking generalization.
\newblock \emph{ICLR}, abs/1611.03530, 2017.

\bibitem[Zhou(2020)]{zhou2020universality}
Ding-Xuan Zhou.
\newblock Universality of deep convolutional neural networks.
\newblock \emph{Applied and computational harmonic analysis}, 48\penalty0
  (2):\penalty0 787--794, 2020.

\bibitem[Zhuang et~al.(2011)Zhuang, Tsang, and Hoi]{zhuang2011two}
Jinfeng Zhuang, Ivor~W Tsang, and Steven~CH Hoi.
\newblock Two-layer multiple kernel learning.
\newblock In \emph{Proceedings of the Fourteenth International Conference on
  Artificial Intelligence and Statistics}, pages 909--917, 2011.

\end{thebibliography}


\clearpage
\appendix

\thispagestyle{empty}

\onecolumn \makesupplementtitle

\begin{appendices} 
\section{How This Work Relates to Climate Change}
\label{app:motivation}    
    Finding an alternative to BP also has significant climate implications.   \citet{strubell2019energy} have shown that some standard AI models can emit over 626,000 pounds of carbon dioxide; a carbon footprint five times greater than the lifetime usage of a car. This level of emission is simply not sustainable in light of our continual explosive growth. Therefore, the environmental impact of BP necessitates a cheaper alternative. Looking at nature, we can be inspired by the brain's learning capability using only a fraction of the energy. Perhaps artificial neurons can also train without the high energy cost to the environment. This is the moral and the foundational motivation for this work in identifying the existence of $W_s$. A closed-form solution holds the potential to significantly reduce the computational requirement and carbon footprint.  Even if our work ultimately failed to mimic the brain, we hope to inspire the community to identify other closed-form solutions and go beyond BP. \\ \\
    This paper aims to promote the discussion of viewing backpropagation alternatives not only as an academic exercise but also as a climate imperative. Yet, this topic is largely ignored by the community. The authors believe the energy costs of training Neural Networks are having a detrimental climate impact and should be an added topic of interest. The earth also needs an advocate, why not us? Therefore, we as a community, must begin addressing how we can ameliorate our own carbon footprint. This exploratory work aims to share a potential path forward for further research that may address these concerns with the community. While $W_s$ is still not ready for commercial usage, we sincerely hope that the community begins to build novel algorithms over our work on \RS and identify a simpler and cheaper path to train our networks. \\
    
\end{appendices} 
\begin{appendices} 
\section{Proof for Theorem \ref{thm:hsequence}}
\label{app:thm:hsequence}

\textbf{Theorem \ref{thm:hsequence}: }
\textit{
For any $\hsic_0$, there exists a set of bandwidths $\sigma_l$ and a \KS $\{\fm_{l^{\circ}}\}_{l=1}^L$ parameterized by $W_l = W_s$ in \eq{eq:trivial_W} such that: }
\begin{enumerate}[topsep=0pt, partopsep=0pt, label=\Roman*.]
    \item 
     $\hsic_L$ can approach arbitrarily close to $\hsic^*$ such that for any $L>1$ and $\delta>0$ we can achieve
     \begin{equation}
        \hsic^{*} - \hsic_L \le \delta,
        \label{arb_close_append}
    \end{equation}   
    \item 
    as $L \rightarrow \infty$, the \RS converges to the global optimum where
    \begin{equation}
        \lim_{L \rightarrow \infty}  \hsic_L = \hsic^*,
    \end{equation}   
    \item 
    the convergence is strictly monotonic where 
    \begin{equation}
    \hsic_{l} > \hsic_{l-1} \quad \forall l \ge 1.
    \end{equation}   
\end{enumerate}

\begin{lemma}
\label{app:lemma:lowerbound}
Given $\sigma_0$ and $\sigma_1$ as the $\sigma$ values from the last layer and the current layer, then there exists a lower bound for $\hsic_l$, denoted as $\lb\Lsigma$ such that

\begin{equation}
    \hsic_l \ge \lb\Lsigma.
\end{equation}
\end{lemma}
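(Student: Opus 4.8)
\textbf{Proof proposal for Lemma~\ref{app:lemma:lowerbound}.}
The plan is to obtain an explicit lower bound on $\hsic_l$ by working directly with the ``similarity'' form of HSIC in \eq{eq:similarity_hsic}, namely $\hsic_l = \sums \Gij \kf_{W_l}(r_i,r_j) - \sumsc |\Gij| \kf_{W_l}(r_i,r_j)$, where the layer input $R_{l-1}$ was itself produced by the Gaussian feature map at bandwidth $\sigma_0$ and the current layer applies the closed-form weight $W_s$ from \eq{eq:trivial_W} with bandwidth $\sigma_1$. First I would substitute $W_l = W_s$ into the Gaussian kernel \eq{eq:kernel_def}, so that $\kf_{W_s}(r_i,r_j) = \exp\{-\|W_s^T r_i - W_s^T r_j\|^2/(2\sigma_1^2)\}$, and expand the exponent $\|W_s^T(r_i-r_j)\|^2 = (r_i-r_j)^T W_s W_s^T (r_i-r_j)$. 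Because $W_s$ is built from the \kme of the previous layer's representations (a sum of feature maps), the quadratic form $W_sW_s^T$ is expressible purely through inner products $r_i^T r_j$, which in turn equal $\kf_{\sigma_0}$ evaluations on the layer-$(l-1)$ inputs; this lets me re-express everything in terms of $\sigma_0$ and $\sigma_1$ only, giving a function $\lb(\sigma_0,\sigma_1)$.

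The key steps, in order, are: (i) write $\hsic_l$ via \eq{eq:similarity_hsic} and plug in $W_s$; (ii) bound each term $\kf_{W_s}(r_i,r_j)$ from below on the positive ($\cS$) pairs and from above on the negative ($\cS^c$) pairs — for the $\cS$ terms use that $\exp(-t)\ge 1-t$ so $\sums \Gij \kf_{W_s}(r_i,r_j) \ge \sums \Gij - \frac{1}{2\sigma_1^2}\sums \Gij \|W_s^T(r_i-r_j)\|^2$, and for the $\cS^c$ terms use the crude bound $\kf_{W_s}\le 1$ or, better, $\kf_{W_s}(r_i,r_j)\le \exp\{-c(\sigma_0)/(2\sigma_1^2)\}$ where $c(\sigma_0)>0$ is the minimum over $\cS^c$ pairs of $\|W_s^T(r_i-r_j)\|^2$ (strictly positive precisely because no two identical samples carry conflicting labels, the standing hypothesis); (iii) collect these into a single closed expression $\lb(\sigma_0,\sigma_1)$, so that $\hsic_l \ge \lb(\sigma_0,\sigma_1)$; (iv) record the qualitative behaviour that will be needed by Theorem~\ref{thm:hsequence}, i.e.\ that as $\sigma_1 \to 0^+$ the negative contribution is suppressed like $\exp(-c/2\sigma_1^2)$ while the positive contribution approaches $\sums\Gij$, so $\lb(\sigma_0,\sigma_1)$ can be pushed arbitrarily close to the global upper bound $\hsic^* \le \sums \Gij$.

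The main obstacle I anticipate is step (ii): making the dependence on $\sigma_0$ through $W_s$ fully explicit and showing the separation constant $c(\sigma_0)$ is genuinely bounded away from zero. One has to argue that, after the previous layer's Gaussian map, distinct samples remain distinct in the embedded space — which follows from injectivity of the Gaussian feature map on the finite data set together with the no-conflicting-labels assumption — and then that projecting onto the \kme directions $W_s$ does not collapse a $\cS^c$ pair to zero. Handling the normalizer $\zeta$ (absorbed into $W_s = \frac{1}{\sqrt\zeta}\W$) cleanly, so that it does not interfere with the $\sigma_1$ scaling, is the other fiddly point; since the paper notes $\zeta$ ``can be ignored in practice,'' I would carry it symbolically and verify it only rescales the quadratic form by a constant, leaving the $\exp(-c/2\sigma_1^2)$ decay intact. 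Everything else is routine manipulation of the HSIC sum and elementary inequalities on the exponential.
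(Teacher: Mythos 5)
Your proposal follows essentially the same route as the paper's proof: split the HSIC sum into its $\cS$ and $\cS^c$ parts, exploit the structure of $W_s$ as sums of previous-layer feature vectors so that every coordinate of $W_s^T(r_i-r_j)$ is controlled by the maximal off-diagonal inner product $\ub$ at bandwidth $\sigma_0$, and bound the same-class kernels from below and the cross-class kernels from above to assemble $\lb\Lsigma$ — the only cosmetic difference is that you linearize the positive terms via $e^{-t}\ge 1-t$ where the paper keeps the exponential with a worst-case exponent $\mathscr{N}_g\ub^2/\sigma_1^2$, and both choices yield a valid bound (your separation constant $c(\sigma_0)$ is exactly the paper's $\mathscr{N}_{g_1,g_2}(\ub)$, positive once $\ub$ is small enough that $n_{\max}\ub<1$, and harmlessly vacuous otherwise). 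One caution on your step (iv), which is outside the lemma but matters downstream: with $\sigma_0$ fixed, letting $\sigma_1\to 0^+$ sends your linearized positive contribution to $-\infty$ (and the paper's exponential version to $0$), not to $\sums\Gij$; the correct order, used in the subsequent lemmas, is to send $\sigma_0\to 0$ first so that $\ub\to 0$ and the $\cS$ part tends to $\sums\Gij$ for any fixed $\sigma_1$, and only then let $\sigma_1\to 0$ to suppress the residual $\cS^c$ term $e^{-1/(\zeta\sigma_1^2)}$.
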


\textbf{Basic Background, Assumptions, and Notations. }
\begin{enumerate}
    \item
    The simulation of this theorem for Adversarial and Random data is also publicly available on \url{https://github.com/anonymous}.
    \item  Here we show that this bound can be established given the last 2 layers.  
    \item  $\sigma_0$ is the $\sigma$ value of the previous layer
    \item  $\sigma_1$ is the $\sigma$ value of the current layer
    \item  $\nclass$ is the number of classes
    \item  $n$ is total number of samples
    \item  $n_i$ is number of samples in the $i^{th}$ class
    \item  $\cS$ is a set of all $i,j$ sample pairs where $r_i$ and $r_j$ belong to the same class.
    \item  $\cS^c$ is a set of all $i,j$ sample pairs where $r_i$ and $r_j$ belong to different same classes.
    \item  $\cS^{\beta}$ is a set of all $i,j$ sample pairs that belongs to the same $\beta^{th}$ classes.
    \item  $r_i^{(\alpha)}$ is the $i^{th}$ sample in the $\alpha^{th}$ class among $\nclass$ classes.
    \item  We assume no $r_i \ne r_j$ pair are equal $\forall i \ne j$. 
    \item  
    Among all $r_i \ne r_j$ pairs, there exists an optimal $r_i^*, r_j^*$ pair where  
    $\langle r_i^*, r_j^* \rangle \ge \langle r_i, r_j \rangle$ $\forall r_i \ne r^*_i$ and $r_j \ne r^*_j$. We denote this maximum inner product as 
    \begin{equation}
        \ub = \langle r_i^*, r_j^* \rangle.
    \end{equation}
    \item Here, each $r_i$ sample is assumed to be a sample in the RKHS of the Gaussian kernel, therefore all inner products are bounded such that
    \begin{equation}
        0 \le \langle r_i, r_j \rangle 
        \le
        \ub.
    \end{equation}
    \item We let $W$ be 
    \begin{equation}
        W_s = \frac{1}{\sqrt{\zeta}} \W.
    \end{equation}
    Instead of using an optimal $W^*$ defined as $W^{*} = \argmax_{W} H_{l}(W)$, we use a suboptimal $W_s$ where each dimension is simply the average direction of each class: $\frac{1}{\sqrt{\zeta}}$ is a unnecessary normalizing constant $\zeta = ||W_{s}||_{2}^{2}$. By using $W_s$, this implies that the $\hsic$ we obtain is already a lower bound compare $\hsic$ obtained by $W^*$. But, we will use this suboptimal $W_s$ to identify an even lower bound. Note that based on the definition $W^{*}$, we have the property $\hsic(W^{*}) \geq \hsic(W) \, \forall W$.

    \item  We note that the objective $\hsic$ is
    \begin{equation}
        \hsic = 
        \underbrace{
        \sums \Gij \ISMexpC{}{}}_{\within}
        -
        \underbrace{
        \sumsc |\Gij| \ISMexpC{}{}}_{\between}
    \end{equation}
    where we let $\within$ be the summation of terms associated with the within cluster pairs, and let $\between$ be the summation of terms associated with the between cluster pairs.
\end{enumerate}
    
    
\begin{proof} $\hspace{1pt}$
\begin{adjustwidth}{0.5cm}{0.0cm}
    The equation is further divided into smaller parts organized into multiple sections.
    
    \textbf{For sample pairs in $\cS$. } The first portion of the function can be split into multiple classes where
    \begin{equation}
    \within = 
    \underbrace{
    \sum_{\cS^1} \Gij \ISMexpC{(1)}{(1)}
    }_\text{$\within_1$}
    + ... + 
    \underbrace{
    \sum_{\cS^{\nclass}} \Gij \ISMexpC{(\nclass)}{(\nclass)}}_\text{$\within_{\nclass}$}
    \end{equation}
    Realize that to find the lower bound, we need to determine the minimum possible value of each term which translates to \textbf{maximum} possible value of each exponent. Without of loss of generality we can find the lower bound for one term and generalize its results to other terms due to their similarity. Let us focus on the numerator of the exponent from $\within_1$. Given $W_s$ as $W$, our goal is identify the \textbf{maximum} possible value for
    \begin{equation}
        \underbrace{
        \rij{1}{1}^TW}_{\Pi_1}
        \underbrace{
        W^T\rij{1}{1}}_{\Pi_2}. 
    \end{equation}
    Zoom in further by looking only at $\Pi_1$, we have 
    the following relationships
    \begin{equation}
       \Pi_1 = 
        \underbrace{
            r_i^{(1)^T} W}_{\xi_1}
        - 
        \underbrace{r_j^{(1)^T} W}_{\xi_2} 
    \end{equation}
    \begin{align}
        \xi_1 & = \frac{1}{\sqrt{\zeta}} r_i^{(1)^T} \W \\
        & = \frac{1}{\sqrt{\zeta}}
            r_i^{(1)^T} 
            \begin{bmatrix}
                (r_1^{(1)} + ... + r_{n_1}^{(1)}) &
                ... &
                (r_1^{(\nclass)} + ... + r_{n_{\nclass}}^{(\nclass)})
            \end{bmatrix}
    \end{align}
    \begin{align}
        \xi_2 & = 
        \frac{1}{\sqrt{\zeta}}
        r_j^{(1)^T} \W \\
        & = 
        \frac{1}{\sqrt{\zeta}}
        r_j^{(1)^T} 
            \begin{bmatrix}
                (r_1^{(1)} + ... + r_{n_1}^{(1)}) &
                ... &
                (r_1^{(\nclass)} + ... + r_{n_{\nclass}}^{(\nclass)})
            \end{bmatrix}
    \end{align}   
    By knowing that the inner product is constrained between $[0,\ub]$, we know the maximum possible value for $\xi_1$ and the minimum possible value for $\xi_2$ to be
     \begin{align}
        \xi_1 & = 
        \frac{1}{\sqrt{\zeta}}
            \begin{bmatrix}
                1 + (n_1 - 1)\ub &
                n_2 \ub &
                n_3 \ub &
                ... &
                n_{\nclass} \ub
            \end{bmatrix} \\
         \xi_2 & = 
         \frac{1}{\sqrt{\zeta}}
            \begin{bmatrix}
                1 &
                0 &
                0 &
                ... &
                0
            \end{bmatrix}. 
    \end{align}   
    Which leads to  
    \begin{equation}
        \Pi_1 = 
        \frac{1}{\sqrt{\zeta}}
        (\xi_1 - \xi_2) = 
        \frac{1}{\sqrt{\zeta}}
            \begin{bmatrix}
                (n_1 - 1)\ub &
                n_2 \ub &
                n_3 \ub &
                ... &
                n_{\nclass} \ub
            \end{bmatrix}
    \end{equation}
    Since $\Pi_2^{T}  = \Pi_1$ we have 
    \begin{align}
        \Pi_1 \Pi_2 & = 
            \frac{1}{\zeta}
            [
            (n_1 - 1)^2\ub^2 + 
            n_2^2 \ub^2 +
            n_3^2 \ub^2 +
                ... +
            n_{\nclass}^2 \ub^2] \\
        & = 
            \frac{1}{\zeta}
            [(n_1 - 1)^2+ 
            n_2^2 +
            n_3^2 +
                ... +
            n_{\nclass}^2] \ub^2
    \end{align}
    The lower bound for just the $\within_1$ term emerges as 
    \begin{equation}
        \within_1 \ge 
        \sum_{\cS^1} \Gij e^{
        - \frac{ 
            [(n_1 - 1)^2+ 
            n_2^2 +
            n_3^2 +
                ... +
            n_{\nclass}^2] \ub^2 }{2 \zeta \sigma_1^2}}. 
    \end{equation}
    To further condense the notation, we define the following constant
    \begin{equation}
        \mathscr{N}_g = 
            \frac{1}{2 \zeta}
            [n_1^2+ 
            n_2^2 +
            ... +
            (n_g - 1)^2 + 
            ... + 
            n_{\tau}^2].
    \end{equation}
    Therefore, the lower bound for $\within_1$ can be simplified as 
    \begin{equation}
        \within_1 \ge \sum_{\cS^1} \Gij e^{-\frac{\mathscr{N}_1 \ub^2}{\sigma_1^2}}
    \end{equation}
    and the general pattern for any $\within_g$ becomes
    \begin{equation}
         \within_g \ge \sum_{\cS^i} \Gij e^{-\frac{\mathscr{N}_g \ub^2}{\sigma_1^2}}. 
    \end{equation}
    The lower bound for the entire set of $\cS$ then becomes
    \begin{equation}
        \sums \Gij \ISMexpC{}{} = 
        \within_1 + ... + \within_{\nclass} \ge 
        \underbrace{
        \sum_{g=1}^{\nclass} \sum_{\cS^g} \Gij
        e^{-\frac{\mathscr{N}_g \ub^2}{\sigma_1^2}}}_{\text{Lower bound}}.
    \end{equation}
    
    \textbf{For sample pairs in $\cS^c$. } 
    To simplify the notation, we note that
    \begin{align}
        -\between_{g_1,g_2} & = 
            -
            \sum_{i \in \cS^{g_1}} 
            \sum_{j \in \cS^{g_2}}
            |\Gij| \ISMexpC{(g_1)}{(g_2)} \\
            & = -
            \sum_{i \in \cS^{g_1}} 
            \sum_{j \in \cS^{g_2}}
            |\Gij| e^{-\frac
            {\Tr(W^T (\rij{g_1}{g_1})(\rij{g_1}{g_2})^T W)}
            {2 \sigma_1^2}} \\                
            & = -
            \sum_{i \in \cS^{g_1}} 
            \sum_{j \in \cS^{g_2}}
            |\Gij| e^{-\frac{\Tr(W^T A_{i,j}^{(g_1, g_2)} W)}
            {2 \sigma_1^2}} \\           
    \end{align}
    We now derived the lower bound for the sample pairs in $\cS^c$. We start by writing out the entire summation sequence for $\between$.  
    \begin{equation}
    \begin{split}
    \between & = 
    -\underbrace{
    \sum_{i \in \cS^1} 
    \sum_{j \in \cS^2}
    |\Gij| 
    e^{-\frac{\Tr(W^T A_{i,j}^{(1, 2)} W)}
    {2 \sigma_1^2}}
    }_\text{$\between_{1,2}$} 
    - \underbrace{...}_{\between_{g_1 \ne g_2}}
    -\underbrace{
    \sum_{i \in \cS^1} 
    \sum_{j \in \cS^{\nclass}}
    |\Gij| 
    e^{-\frac{\Tr(W^T A_{i,j}^{(1, \nclass)} W)}
    {2 \sigma_1^2}}
    }_\text{$\between_{1,\nclass}$} 
    \\
    & 
    -\underbrace{
    \sum_{i \in \cS^2} 
    \sum_{j \in \cS^1}
    |\Gij| 
    e^{-\frac{\Tr(W^T A_{i,j}^{(2, 1)} W)}
    {2 \sigma_1^2}}
    }_\text{$\between_{2,1}$} 
    - \underbrace{...}_{\between_{g_1 \ne g_2}}
    -\underbrace{
    \sum_{i \in \cS^2} 
    \sum_{j \in \cS^{\nclass}}
    |\Gij| 
    e^{-\frac{\Tr(W^T A_{i,j}^{(2, \nclass)} W)}
    {2 \sigma_1^2}}
    }_\text{$\between_{2,\nclass}$} 
    \\   
    & ... \\
    & 
    -\underbrace{
    \sum_{i \in \cS^\nclass} 
    \sum_{j \in \cS^1}
    |\Gij| 
    e^{-\frac{\Tr(W^T A_{i,j}^{(\nclass, 1)} W)}
    {2 \sigma_1^2}}
    }_\text{$\between_{\nclass,1}$} 
    - \underbrace{...}_{\between_{g_1 \ne g_2}}
    -\underbrace{
    \sum_{i \in \cS^{\nclass-1}} 
    \sum_{j \in \cS^{\nclass}}
    |\Gij| 
    e^{-\frac{\Tr(W^T A_{i,j}^{(\nclass-1, \nclass)} W)}
    {2 \sigma_1^2}}
    }_\text{$\between_{\nclass-1,\nclass}$} 
    \end{split}
    \end{equation}
    
    Using a similar approach with the terms from $\within$, note that $\between$ is a negative value, so we need to maximize this term to obtain a lower bound. Consequently, the key is to determine the \textbf{minimal} possible values for each exponent term. Since every one of them will behave very similarly, we can simply look at the numerator of the exponent from $\between_{1,2}$ and then arrive to a more general conclusion. Given $W_s$ as $W$, our goal is to identify the \textbf{minimal} possible value for
    \begin{equation}
        \underbrace{
        \rij{1}{2}^TW}_{\Pi_1}
        \underbrace{
        W^T\rij{1}{2}}_{\Pi_2}. 
    \end{equation}
  
    Zoom in further by looking only at $\Pi_1$, we have 
    the following relationships
    \begin{equation}
       \Pi_1 = 
        \underbrace{
            r_i^{(1)^T} W}_{\xi_1}
        - 
        \underbrace{r_j^{(2)^T} W}_{\xi_2} 
    \end{equation}
    \begin{align}
        \xi_1 & = 
        \frac{1}{\sqrt{\zeta}}
        r_i^{(1)^T} \W \\
        & = 
        \frac{1}{\sqrt{\zeta}}
        r_i^{(1)^T} 
            \begin{bmatrix}
                (r_1^{(1)} + ... + r_{n_1}^{(1)}) &
                ... &
                (r_1^{(\nclass)} + ... + r_{n_{\nclass}}^{(\nclass)})
            \end{bmatrix}
    \end{align}
    \begin{align}
        \xi_2 & = 
        \frac{1}{\sqrt{\zeta}}
        r_j^{(2)^T} \W \\
        & = 
        \frac{1}{\sqrt{\zeta}}
        r_j^{(2)^T} 
            \begin{bmatrix}
                (r_1^{(1)} + ... + r_{n_1}^{(1)}) &
                ... &
                (r_1^{(\nclass)} + ... + r_{n_{\nclass}}^{(\nclass)})
            \end{bmatrix}
    \end{align}   
    By knowing that the inner product is constrained between $[0,\ub]$, we know the \textbf{minimum} possible value for $\xi_1$ and the \textbf{maximum} possible value for $\xi_2$ to be
     \begin{align}
        \xi_1 & = 
            \frac{1}{\sqrt{\zeta}}
            \begin{bmatrix}
                1 &
                0 &
                0 &
                ... &
                0
            \end{bmatrix} \\
         \xi_2 & = 
            \frac{1}{\sqrt{\zeta}}
            \begin{bmatrix}
                n_1\ub &
                1 + (n_2 - 1) \ub &
                n_3 \ub &
                ... &
                n_{\nclass} \ub
            \end{bmatrix} 
    \end{align}   
    Which leads to  
    \begin{equation}
        \Pi_1 = 
        \frac{1}{\sqrt{\zeta}}
        (\xi_1 - \xi_2) = 
            \frac{1}{\sqrt{\zeta}}
            \begin{bmatrix}
                1 - n_1\ub &
                -(1 + (n_2 - 1) \ub) &
                -n_3 \ub &
                ... &
                -n_{\nclass} \ub
            \end{bmatrix}
    \end{equation}
    Since $\Pi_2^{T}  = \Pi_1$ we have     
    \begin{align}
        \Pi_1 \Pi_2 & = 
            \frac{1}{\zeta}
            [
            (1- n_1 \ub)^2 + 
            (1 + (n_2-1)\ub)^2 + 
            n_3^2 \ub^2 +
                ... +
            n_{\nclass}^2 \ub^2].
    \end{align}
    The lower bound for just the $\between_{1,2}$ term emerges as 
    \begin{equation}
        -\between_{1,2} \ge 
        - \sum_{\cS^1} \sum_{\cS^2} |\Gij| e^{
        - \frac{ 
            (1- n_1\ub)^2 + 
            (1 + (n_2-1)\ub)^2 + 
            n_3^2 \ub^2 +
                ... +
            n_{\nclass}^2 \ub^2}
            {2 \zeta \sigma_1^2}}. 
    \end{equation}
    To further condense the notation, we define the following function
    \begin{equation}
    \begin{split}
        \mathscr{N}_{g_1, g_2}(\ub) & = 
            \frac{1}{2 \zeta}
            [ n_1^2 \ub^2 + n_2^2 \ub^2 + ... \\ 
            & +  (1- n_{g_1}\ub)^2 + ... + 
            (1 + (n_{g_2}-1)\ub)^2 \\
            & + ...  + n_{\tau}^2 \ub^2].
    \end{split}
    \end{equation}
    Note that while for $\cS$, the $\ub$ term can be separated out. But here, we cannot, and therefore $\mathscr{N}$ here must be a function of $\ub$.  Therefore, the lower bound for $\between_{1,2}$ can be simplified into
    \begin{equation}
        - \between_{1,2} \ge - 
        \sum_{\cS^1} 
        \sum_{\cS^2}
        |\Gij| e^{-\frac{\mathscr{N}_{1,2}(\ub)}{\sigma_1^2}}
    \end{equation}
    and the general pattern for any $\between_{g_1, g_2}$ becomes
    \begin{equation}
         -\between_{g_1,g_2} \ge -
         \sum_{\cS^{g1}} 
        \sum_{\cS^{g2}}
         \Gij e^{-\frac{\mathscr{N}_{g_1,g_2}(\ub)}
         {\sigma_1^2}}. 
    \end{equation}    
    The lower bound for the entire set of $\cS^c$ then becomes
    \begin{align}
        -\sumsc |\Gij| \ISMexpC{}{} & = 
        - \between_{1,2} 
        - \between_{1,3} 
        - ... - 
        \between_{\nclass-1, \nclass} \\
        & \ge 
        - \underbrace{
        \sum_{g_1 \ne g_2}^{\nclass}
        \sum_{i \in \cS^{g_1}} 
        \sum_{j \in \cS^{g_2}} 
        |\Gij|
        e^{-\frac{\mathscr{N}_{g_1,g_2} (\ub)}{\sigma_1^2}}}_{\text{Lower bound}}.
    \end{align}    
    
    \textbf{Putting $\cS$ and $\cS^c$ Together. } 
    \begin{align}
        \hsic & = \within + \between    \\
        & \ge
        \underbrace{
        \sum_{g=1}^{\nclass} \sum_{\cS^g} \Gij
        e^{-\frac{\mathscr{N}_g \ub^2}{\sigma_1^2}}}_{\text{Lower bound of } \within}
        - \underbrace{
        \sum_{g_1 \ne g_2}^{\nclass} 
        \sum_{i \in \cS^{g_1}} 
        \sum_{j \in \cS^{g_2}} 
        |\Gij|
        e^{-\frac{\mathscr{N}_{g_1,g_2} (\ub)}{\sigma_1^2}}}_{\text{Lower bound of } \between}.
    \end{align}
    Therefore, we have identified a lower bound that is a function of $\sigma_0$ and $\sigma_1$ where
    \begin{equation}
        \lb\Lsigma = 
        \sum_{g=1}^{\tau} \sum_{\cS^g} \Gij
        e^{-\frac{\mathscr{N}_g \ub^2}{\sigma_1^2}}
        - 
        \sum_{g_1 \ne g_2}^{\tau} 
        \sum_{i \in \cS^{g_1}} 
        \sum_{j \in \cS^{g_2}} 
        |\Gij|
        e^{-\frac{\mathscr{N}_{g_1,g_2} (\ub)}{\sigma_1^2}}.
    \end{equation}
    From the lower bound, it is obvious why it is a function of $\sigma_1$. The lower bound is also a function of $\sigma_{0}$ because $\ub$ is actually a function of $\sigma_0$. To specifically clarify this point, we have the next lemma. 
\end{adjustwidth}
\end{proof}
\hspace{1pt}

\begin{lemma}
\label{app:lemma:ub_goes_to_zero}
The $\ub$ used in \citelemma{app:lemma:lowerbound} is a function of $\sigma_0$ where $\ub$ approaches to zero as $\sigma_{0}$ approaches to zero, i.e.
\begin{equation}
    \lim_{\sigma_0 \rightarrow 0}  
    \ub = 0.
\end{equation}
\end{lemma}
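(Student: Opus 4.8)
The plan is to express $\ub$ as a maximum of Gaussian-kernel values of the \emph{previous} layer and then let that layer's bandwidth collapse. In our construction the activation $\af$ of layer $l-1$ is the feature map of a Gaussian kernel of bandwidth $\sigma_0 \coloneqq \sigma_{l-1}$, so each input of the current layer has the form $r_i = \af(W_{l-1}^T \hat r_i)$, where $\hat r_i$ is the $i^{th}$ input to layer $l-1$ and does not depend on $\sigma_0$. By the reproducing property in \eq{eq:kernel_def},
\begin{equation}
    \langle r_i, r_j \rangle = \exp\!\left(-\frac{c_{ij}}{2\sigma_0^2}\right), \qquad c_{ij} \coloneqq \big\|W_{l-1}^T \hat r_i - W_{l-1}^T \hat r_j\big\|_2^2 ,
\end{equation}
which exhibits each pairwise inner product — and hence $\ub = \max_{i\ne j}\langle r_i,r_j\rangle$ — as an explicit function of $\sigma_0$.

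First I would note $c_{ij} > 0$ for every pair $i \ne j$. Indeed, if $W_{l-1}^T\hat r_i = W_{l-1}^T\hat r_j$ then $r_i = \af(W_{l-1}^T\hat r_i) = \af(W_{l-1}^T\hat r_j) = r_j$, contradicting the standing assumption in \citelemma{app:lemma:lowerbound} that no $r_i \ne r_j$ pair is equal. Crucially $c_{ij}$ is a fixed positive constant independent of $\sigma_0$, so for each fixed pair $\lim_{\sigma_0\to 0}\exp(-c_{ij}/(2\sigma_0^2)) = 0$.

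Since there are only finitely many pairs the maximum and the limit commute, and $\ub \ge 0$ throughout, giving
\begin{equation}
    0 \;\le\; \lim_{\sigma_0 \to 0} \ub
    \;=\; \lim_{\sigma_0 \to 0}\; \max_{i \ne j}\; e^{-c_{ij}/(2\sigma_0^2)}
    \;=\; \max_{i \ne j}\; \lim_{\sigma_0 \to 0}\; e^{-c_{ij}/(2\sigma_0^2)}
    \;=\; 0 ,
\end{equation}
which is the claim.

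The only point demanding care is the positivity of every $c_{ij}$ and, with it, the tacit fact that the $\hat r_i$ (the layer-$(l-1)$ inputs) are themselves distinct — which is where the ``no two identical samples'' hypothesis of Theorem~\ref{thm:hsequence} enters, and which, for a fully self-contained argument, should be propagated layer by layer via the injectivity of the Gaussian feature map together with the requirement that each weight matrix does not merge two distinct points. Everything else is the elementary observation that $e^{-c/(2\sigma_0^2)} \to 0$ for fixed $c>0$ and that a finite maximum is continuous.
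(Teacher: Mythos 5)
Your proof is correct and follows essentially the same route as the paper's: both express the pairwise inner products as Gaussian kernel values $e^{-c_{ij}/(2\sigma_0^2)}$ of the previous layer and use the fact that, over finitely many distinct pairs, each such value (and hence their maximum) vanishes as $\sigma_0 \to 0$. Your version is somewhat more careful in isolating the positivity of each $c_{ij}$ as the load-bearing hypothesis, a point the paper's proof leaves implicit in its remark that the exponent's argument ``is bounded and has a minimum.''
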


\textbf{Assumptions and Notations. }
\begin{enumerate}
    \item  
        We use Fig.~\ref{app:fig:two_layer_img} to help clarify the notations. We here only look at the last 2 layers.  
    \item  
        We let $\hsic_0$ be the $\hsic$ of the last layer, and $\hsic_1$, the $\hsic$ of the current layer.
    \item
        The input of the data is $X$ with each sample as $x_i$, and the output of the previous layer are denoted as $r_i$. $\psi_{\sigma_0}$ is the feature map of the previous layer using $\sigma_0$ and $\psi_{\sigma_1}$ corresponds to the current layer. 
        \begin{figure}[h]
        \center
            \includegraphics[width=7cm]{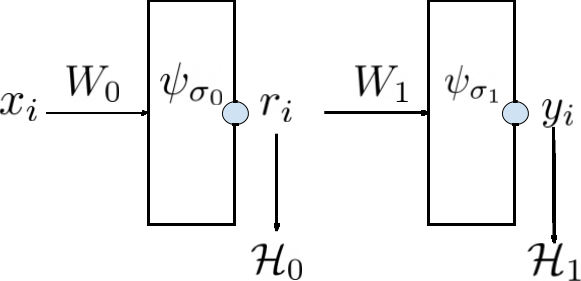}
            \caption{Figure of a 2 layer network.}
            \label{app:fig:two_layer_img}
        \end{figure}     
    \item  
        As defined from \citelemma{app:lemma:lowerbound}, 
        among all $r_i \ne r_j$ pairs, there exists an optimal $r_i^*, r_j^*$ pair where   $\langle r_i^*, r_j^* \rangle \ge \langle r_i, r_j \rangle$ $\forall r_i \ne r^*_i$ and $r_j \ne r^*_j$. We denote this maximum inner product as 
    \begin{equation}
        \ub = \langle r_i^*, r_j^* \rangle.
    \end{equation}
\end{enumerate}

\begin{proof} \hspace{1pt}
\begin{adjustwidth}{0.5cm}{0.0cm}
Given Fig.~\ref{app:fig:two_layer_img}, the equation for $\hsic_0$ is 
\begin{align}
    \hsic_0 
        & = \sums \Gij 
        e^{-\frac{(x_i - x_j)^TW W^T (x_i - x_j)}{2 \sigma_0^2}}
        -
        \sumsc |\Gij| 
        e^{-\frac{(x_i - x_j)^TW W^T (x_i - x_j)}{2 \sigma_0^2}}    \\
        & = 
        \sums \Gij
            \langle 
                \psi_{\sigma_0}(x_i), 
                \psi_{\sigma_0}(x_j)
            \rangle
            - \sumsc |\Gij|
            \langle 
                \psi_{\sigma_0}(x_i), 
                \psi_{\sigma_0}(x_j)
            \rangle
\end{align}
Notice that as $\sigma_0 \rightarrow 0$, we have 
\begin{equation}
    \lim_{\sigma_0 \rightarrow 0} 
    \langle 
        \psi_{\sigma_0}(x_i), 
        \psi_{\sigma_0}(x_j)
    \rangle   
    =  
    \begin{cases} 
    0 \quad \forall i \ne j
    \\
    1 \quad \forall i = j
    \end{cases}.
\end{equation}
In other words, as $\sigma_0 \rightarrow 0$, the samples $r_i$ in the RKHS of a Gaussian kernel approaches orthogonal to all other samples. Given this fact, it also implies that the $\sigma_0$ controls the inner product magnitude in RKHS space  of the maximum sample pair $r_i^*, r_j^*$. We define this maximum inner product as
\begin{equation}
    \langle 
        \psi_{\sigma_0}(x_i^*), 
        \psi_{\sigma_0}(x_j^*)
    \rangle  
    \ge 
     \langle 
        \psi_{\sigma_0}(x_i), 
        \psi_{\sigma_0}(x_j)
    \rangle      
\end{equation}
or equivalently
\begin{equation}
    \langle 
        r_i^*, 
        r_j^*
    \rangle  
    \ge 
     \langle 
        r_i,
        r_j
    \rangle      
\end{equation}

Therefore, given a $\sigma_0$, it controls the upper bound of the inner product. Notice that as $\sigma_0 \rightarrow 0$, every sample in RKHS becomes orthogonal. Therefore, the upper bound of $\langle r_i, r_j \rangle$ also approaches 0 when $r_i \ne r_j$. From this, we see the relationship
\begin{equation}
    \lim_{\sigma_0 \rightarrow 0} 
    \ub = \lim_{\sigma_0 \rightarrow 0}  \exp -(|.|/\sigma^{2}_{0}) = 0
\end{equation}, where $|.|$ is bounded and has a minimum and maximum, because we have finite number of samples.
\end{adjustwidth}
\end{proof}

\begin{lemma}
\label{app:lemma:as_ub_zero_L_approaches}
Given any fixed $\sigma_1 > 0 $, the lower bound $\lb\Lsigma$ is a function with respect to $\sigma_0$ and as $\sigma_0 \rightarrow 0$, $\lb\Lsigma$ approaches the function
    \begin{equation}
        \lb(\sigma_1) = 
        \sum_{g=1}^{\nclass} \sum_{\cS^g} \Gij
        - 
        \sum_{g_1 \ne g_2}^{\nclass}
        \sum_{i \in \cS^{g_1}} 
        \sum_{j \in \cS^{g_2}} 
        |\Gij|
        e^{-\frac{1}{\zeta \sigma_1^2}}.
    \end{equation}
At this point, if we let $\sigma_1 \rightarrow 0$, we have 
    \begin{align}
        \lim_{\sigma_1 \rightarrow 0}
        \lb(\sigma_1) & = \sums \Gij \\
        & = \hsic^*.
    \end{align}
\end{lemma}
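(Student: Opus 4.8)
The plan is to take the two limits in the statement one at a time, pushing each through the \emph{finite} sums that make up the lower bound $\lb\Lsigma$ of Lemma~\ref{app:lemma:lowerbound}, and then to identify the resulting constant with $\hsic^*$. First I would record that, for a fixed data set and a fixed $\sigma_1>0$, everything entering
\[
\lb\Lsigma = \sum_{g=1}^{\nclass} \sum_{\cS^g} \Gij\, e^{-\frac{\mathscr{N}_g \ub^2}{\sigma_1^2}} - \sum_{g_1 \ne g_2}^{\nclass} \sum_{i \in \cS^{g_1}} \sum_{j \in \cS^{g_2}} |\Gij|\, e^{-\frac{\mathscr{N}_{g_1,g_2}(\ub)}{\sigma_1^2}}
\]
besides $\ub$ --- namely the entries $\Gij$, the class sizes $n_g$, the normaliser $\zeta$, and hence the constants $\mathscr{N}_g$ --- is fixed, so $\lb\Lsigma$ depends on $\sigma_0$ only through $\ub$; this establishes the claim that it is a function of $\sigma_0$.

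Next I would invoke Lemma~\ref{app:lemma:ub_goes_to_zero}, which gives $\ub \to 0$ as $\sigma_0 \to 0$, and evaluate $\lim_{\sigma_0 \to 0}\lb\Lsigma$ termwise (legitimate since each of the finitely many summands is a continuous function of $\ub$). In the within-class sum $\mathscr{N}_g \ub^2 \to 0$, so every exponential tends to $1$. In the between-class sum, $\mathscr{N}_{g_1,g_2}(\ub)$ contains exactly two terms, $(1-n_{g_1}\ub)^2$ and $(1+(n_{g_2}-1)\ub)^2$, each tending to $1$, while all remaining terms have the form $n_k^2\ub^2 \to 0$; hence $\mathscr{N}_{g_1,g_2}(\ub) \to \tfrac{1}{2\zeta}(1+1) = \tfrac{1}{\zeta}$ for \emph{every} ordered pair $g_1 \ne g_2$. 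Since the limiting exponential $e^{-1/(\zeta\sigma_1^2)}$ is then common to the whole between-class double sum, it factors out and I obtain exactly
\[
\lim_{\sigma_0 \to 0}\lb\Lsigma = \sum_{g=1}^{\nclass}\sum_{\cS^g}\Gij - \sum_{g_1\ne g_2}^{\nclass}\sum_{i\in\cS^{g_1}}\sum_{j\in\cS^{g_2}} |\Gij|\, e^{-\frac{1}{\zeta\sigma_1^2}} = \lb(\sigma_1).
\]

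For the second limit I would let $\sigma_1 \to 0$ in $\lb(\sigma_1)$: the factor $e^{-1/(\zeta\sigma_1^2)} \to 0$, so the between-class term vanishes and, using that $\{\cS^g\}_{g=1}^{\nclass}$ partitions $\cS$, $\lim_{\sigma_1\to 0}\lb(\sigma_1) = \sum_{g=1}^{\nclass}\sum_{\cS^g}\Gij = \sums\Gij$. It then remains to see that $\sums\Gij = \hsic^*$: in the form of \eq{eq:similarity_hsic}, HSIC equals $\sums\Gij\,\kf_{W}(r_i,r_j) - \sumsc|\Gij|\,\kf_{W}(r_i,r_j)$ with $0\le\kf_{W}\le 1$, so it is bounded above by $\sums\Gij$, and this bound is approached by driving the kernel toward the indicator of $\cS$, i.e.\ the Neural Indicator Kernel, which is admissible under the standing assumption that no two identical samples carry conflicting labels. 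Hence $\hsic^* = \sums\Gij$ and the chain closes.

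Every individual step is routine --- the limit interchange is immediate because all sums are finite. The two places that warrant care are (i) checking that $\mathscr{N}_{g_1,g_2}(\ub) \to 1/\zeta$ holds \emph{uniformly} over all off-diagonal class pairs, which is what allows a single scalar exponential to be pulled out of the double sum, and (ii) the identification $\sums\Gij = \hsic^*$, where one must argue not merely that $\sums\Gij$ upper-bounds HSIC but that it is the attained optimum; I expect (ii) to be the main conceptual point, and it is exactly where the no-conflicting-labels hypothesis enters.
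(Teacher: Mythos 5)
Your proposal is correct and follows essentially the same route as the paper's proof: invoke Lemma~\ref{app:lemma:ub_goes_to_zero} to replace $\sigma_0 \to 0$ by $\ub \to 0$, pass the limit through the finite sums so that the within-class exponentials tend to $1$ and the between-class exponents tend to $1/\zeta$, and then let $\sigma_1 \to 0$ to kill the between-class term and recover $\sums \Gij = \hsic^*$. You actually spell out the termwise computation of $\lim_{\ub\to 0}\mathscr{N}_{g_1,g_2}(\ub) = 1/\zeta$ and the identification of $\sums\Gij$ with $\hsic^*$ in more detail than the paper does, but the argument is the same.
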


\begin{proof} \hspace{1pt}
\begin{adjustwidth}{0.5cm}{0.0cm}
    Given \citelemma{app:lemma:ub_goes_to_zero}, we know that 
     \begin{equation}
        \lim_{\sigma_0 \rightarrow 0} 
        \ub = 0.
    \end{equation}   
    Therefore, having $\sigma_0 \rightarrow 0$ is equivalent to having $\ub \rightarrow 0$. Since 
    \citelemma{app:lemma:lowerbound} provide the equation of a lower bound that is a function of $\ub$,  this lemma is proven by simply evaluating $\lb\Lsigma$ as $\ub \rightarrow 0$. Following these steps, we have
    \begin{align}
        \lb(\sigma_1) & = 
        \lim_{\ub \rightarrow 0}
        \sum_{g=1}^{\nclass} \sum_{\cS^g} \Gij
        e^{-\frac{\mathscr{N}_g \ub^2}{\sigma_1^2}}
        - 
        \sum_{g_1 \ne g_2}^{\nclass} 
        \sum_{i \in \cS^{g_1}} 
        \sum_{j \in \cS^{g_2}} 
        |\Gij|
        e^{-\frac{\mathscr{N}_{g_1,g_2} (\ub)}{\sigma_1^2}}, \\
        & = 
        \sum_{g=1}^{\nclass} \sum_{\cS^g} \Gij
        - 
        \sum_{g_1 \ne g_2}^{\nclass} 
        \sum_{i \in \cS^{g_1}} 
        \sum_{j \in \cS^{g_2}} 
        |\Gij|
        e^{-\frac{1}{\zeta \sigma_1^2}}.
    \end{align}    
At this point, as $\sigma_1 \rightarrow 0$, our lower bound reaches the global maximum 
    \begin{align}
        \lim_{\sigma_1 \rightarrow 0}
        \lb(\sigma_1) & = \sum_{g=1}^{\nclass} \sum_{\cS^g} \Gij
        = \sums \Gij \\
        & = \hsic^*.
    \end{align}
    
\end{adjustwidth}
\end{proof}

\begin{lemma}
\label{app:lemma:arbitrarily_close}
Given any $\hsic_{l-2}$, $\delta > 0$, there exists a $\sigma_0 > 0 $ and $\sigma_1 > 0$ such that
\begin{equation}
        \hsic^{*} - \hsic_l \le \delta.
        \label{app:eq:proof_I}
\end{equation}
\end{lemma}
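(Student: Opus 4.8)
The plan is to chain the three preceding lemmas and finish with a nested two-stage argument in the bandwidths $\sigma_0$ (previous layer) and $\sigma_1$ (current layer), both of which we are free to choose. First I would pass from $\hsic_l$ to a tractable surrogate via \citelemma{app:lemma:lowerbound}: since $\lb\Lsigma \le \hsic_l$ and $\hsic_l \le \hsic^{*}$ (the universal HSIC upper bound $\sums\Gij$), we have $0 \le \hsic^{*}-\hsic_l \le \hsic^{*}-\lb\Lsigma$, so it suffices to exhibit $\sigma_0,\sigma_1>0$ with $\hsic^{*}-\lb\Lsigma \le \delta$. Throughout, $R_{l-2}$ — equivalently the given $\hsic_{l-2}$ — is held fixed, so the closed-form weights at layers $l-1$ and $l$ are determined once the bandwidths are; for $l-2=0$ the fixed starting point is simply the initial risk $\hsic_0$.

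Second, I would pin down the current-layer bandwidth. The second half of \citelemma{app:lemma:as_ub_zero_L_approaches} gives $\lim_{\sigma_1\to 0}\lb(\sigma_1)=\hsic^{*}$, so there is a $\bar\sigma_1>0$ with $\hsic^{*}-\lb(\bar\sigma_1)\le \delta/2$; freeze this $\bar\sigma_1$.

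Third, with $\bar\sigma_1$ fixed, I would shrink the previous-layer bandwidth so as to replace $\lb(\bar\sigma_0,\bar\sigma_1)$ by $\lb(\bar\sigma_1)$ up to $\delta/2$. \citelemma{app:lemma:ub_goes_to_zero} says $\ub\to 0$ as $\sigma_0\to 0$ (this is where the no-duplicate-sample hypothesis enters: the smallest pairwise preactivation distance at layer $l-1$ is strictly positive, so the exponential defining $\ub$ genuinely vanishes), and the first half of \citelemma{app:lemma:as_ub_zero_L_approaches} then yields $\lb(\sigma_0,\bar\sigma_1)\to\lb(\bar\sigma_1)$ — which is nothing more than continuity at $t=0$ of the finitely many maps $t\mapsto e^{-\mathscr{N}_g t^2/\bar\sigma_1^2}$ and $t\mapsto e^{-\mathscr{N}_{g_1,g_2}(t)/\bar\sigma_1^2}$ appearing in $\lb$. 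Hence there is a $\bar\sigma_0>0$ with $|\lb(\bar\sigma_1)-\lb(\bar\sigma_0,\bar\sigma_1)|\le \delta/2$, and therefore
\begin{align*}
\hsic^{*}-\lb(\bar\sigma_0,\bar\sigma_1)
&=\bigl(\hsic^{*}-\lb(\bar\sigma_1)\bigr)+\bigl(\lb(\bar\sigma_1)-\lb(\bar\sigma_0,\bar\sigma_1)\bigr)\\
&\le \delta/2+\delta/2 = \delta .
\end{align*}
Combined with the first step, $\sigma_0=\bar\sigma_0,\ \sigma_1=\bar\sigma_1$ witness the claim.

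The one delicate point is the \emph{order} of the limits. The bandwidth $\sigma_1$ must be chosen first, on the strength of the $\sigma_1\to 0$ limit, and only afterwards may $\sigma_0$ be driven down, because the inner convergence $\lb(\sigma_0,\sigma_1)\to\lb(\sigma_1)$ holds at \emph{fixed} $\sigma_1$; a careless interchange would not deliver a uniform estimate. A minor bookkeeping item is that lowering $\sigma_0$ also perturbs the normalizer $\zeta$ at layer $l-1$, but $\zeta$ stays bounded above and below, so none of the cited limits are affected. Everything else is the elementary $\delta$-splitting shown above.
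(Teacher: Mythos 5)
Your proposal is correct and follows essentially the same route as the paper's proof: reduce to the lower bound $\lb\Lsigma \le \hsic_l$ from \citelemma{app:lemma:lowerbound}, first choose $\sigma_1$ so that $\hsic^{*}-\lb(\sigma_1)\le\delta/2$ via the $\sigma_1\to 0$ limit, then with $\sigma_1$ fixed choose $\sigma_0$ so that $\lb(\sigma_1)-\lb\Lsigma\le\delta/2$ via $\ub\to 0$, and add the two halves. Your explicit remarks on the order of the limits and on the harmlessness of the normalizer $\zeta$ are sound refinements of the same argument, not a different one.
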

\begin{proof} $\hspace{1pt}$

\begin{adjustwidth}{0.5cm}{0.0cm}
\textbf{Observation 1. }

Note that the objective of $\hsic_l$ is
\begin{equation}
\begin{split}
    \hsic_l = 
    \max_W
    &
    \sums \Gij 
    e^{-\frac{\rij{\cS}{\cS}^T WW^T \rij{\cS}{\cS}}{2\sigma_1^2}} \\
    - & \sumsc |\Gij| 
     e^{-\frac{\rij{\cS^c}{\cS^c}^T WW^T \rij{\cS^c}{\cS^c}}{2\sigma_1^2}}.
\end{split}
\end{equation}
Since the Gaussian kernel is bounded between 0 and 1, the theoretical maximum of $\hsic^*$ is when the kernel is 1 for $\cS$ and 0 for $\cS^c$ with the theoretical maximum as $\hsic^* = \sums \Gij$. Therefore \eq{app:eq:proof_I} inequality is equivalent to 
\begin{equation}
        \sums \Gij - \hsic_l \le \delta.
\end{equation}
\end{adjustwidth}

\begin{adjustwidth}{0.5cm}{0.0cm}
\textbf{Observation 2. }

If we choose a $\sigma_0$ such that

\begin{equation}
    \lb^*(\sigma_1) - \lb\Lsigma \le \frac{\delta}{2}
    \quad \text{and} \quad
    \hsic^* - \lb^*(\sigma_1) \le \frac{\delta}{2}
\end{equation}
then we have identified the condition where $\sigma_0 > 0$ and $\sigma_1 > 0$ such that
\begin{equation}
    \sums \Gij - \lb\Lsigma \le \delta.
\end{equation}
Note that the $\lb^*(\sigma_1)$ is a continuous function of $\sigma_1$. Therefore, a $\sigma_1$ exists such that $\lb^*(\sigma_1)$ can be set arbitraty close to $\hsic^{*}$. Hence, we choose an $\sigma_{1}$ that has the following property:
\begin{equation}
    \hsic^* - \lb^*(\sigma_1) \le \frac{\delta}{2}.
\end{equation}
We next fix $\sigma_{1}$, we also know $\lb\Lsigma$ is a continuous function of $\sigma_{0}$, and it has a limit $\lb^*(\sigma_1)$ as $\sigma_{0}$ approaches to 0, hence there exits a $\sigma_{0}$, where 
\begin{equation}
    \lb^*(\sigma_1) - \lb\Lsigma \le \frac{\delta}{2}
\end{equation}
Then we have:
\begin{equation}
    \lb^*(\sigma_1) - \lb\Lsigma \le \frac{\delta}{2}
    \quad \text{and} \quad
    \hsic^* - \lb^*(\sigma_1) \le \frac{\delta}{2}.
\end{equation}
By adding the two $\frac{\delta}{2}$, we conclude the proof.
\end{adjustwidth}
\end{proof}

\begin{lemma}
    \label{app:lemma:approach_optimal_H1}
    There exists a \KS $\{\fm_{l^{\circ}}\}_{l=1}^L$ parameterized by a set of weights $W_l$ and a set of bandwidths $\sigma_l$ such that    
    \begin{equation}
        \lim_{l \rightarrow \infty}  \hsic_l = \hsic^* , \quad \hsic_{l+1} > \hsic_l \quad \forall l
    \end{equation}
\end{lemma}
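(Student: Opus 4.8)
The plan is to assemble Lemmas~\ref{app:lemma:lowerbound}--\ref{app:lemma:arbitrarily_close} into a single recursive construction of the \KS. I would fix $W_l = W_s$ from \eq{eq:trivial_W} at every layer, leaving only the bandwidths $\{\sigma_l\}$ to be chosen, and I would build them one layer at a time so as to maintain two invariants: (i) strict growth, $\hsic_l > \hsic_{l-1}$, and (ii) contraction of the optimality gap $\hsic^* - \hsic_l$ toward $0$.

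For the gap (and hence convergence), suppose layers $1,\dots,l$ have already been fixed with $\hsic_l < \hsic^*$. I would set $\delta_l := \tfrac{1}{2}(\hsic^* - \hsic_l) > 0$ and apply Lemma~\ref{app:lemma:arbitrarily_close} with tolerance $\delta_l$, letting the current risk $\hsic_l$ play the role of ``$\hsic_{l-2}$'' there; this hands me bandwidths $\sigma_{l+1},\sigma_{l+2}$ for the next two layers with $\hsic^* - \hsic_{l+2} \le \tfrac{1}{2}(\hsic^* - \hsic_l)$. Iterating in blocks of two layers then forces $\hsic^* - \hsic_{2k} \le 2^{-k}(\hsic^* - \hsic_0) \to 0$, so the subsequence $\{\hsic_{2k}\}$ tends to $\hsic^*$; once invariant~(i) is in place the full sequence is nondecreasing and bounded above by $\hsic^*$, giving $\lim_{l \to \infty} \hsic_l = \hsic^*$.

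For strict monotonicity at the intermediate layers I would lean on Lemma~\ref{app:lemma:lowerbound}: $\hsic_l \ge \lb(\sigma_{l-1},\sigma_l)$, and $\sigma \mapsto \lb(\sigma_{l-1},\sigma)$ is continuous. By Lemma~\ref{app:lemma:ub_goes_to_zero} the previous-layer spread $\ub$ shrinks to $0$ as $\sigma_{l-1}\to 0$, and by Lemma~\ref{app:lemma:as_ub_zero_L_approaches} the lower bound then rises to $\lb^*(\sigma_l)$, a quantity that exceeds the fixed number $\hsic_{l-1}$ once $\sigma_l$ is also small, since $\lb^*(\sigma_l)\to\hsic^*>\hsic_{l-1}$. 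So in the block step I would additionally insist that the chosen bandwidths be small enough that $\lb(\sigma_{l-1},\sigma_l) > \hsic_{l-1}$ and $\lb(\sigma_l,\sigma_{l+1}) > \hsic_l$; these are only ``$\sigma$ small'' requirements, compatible with the tolerance condition, and they deliver invariant~(i). Combining the two invariants proves the lemma.

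The hard part will be exactly this compatibility. Lemma~\ref{app:lemma:arbitrarily_close} is stated for two consecutive bandwidths at once, so I must check that the extra bandwidth introduced at each recursive step produces a genuine strict increase rather than a transient dip --- that is, upgrade ``closer to $\hsic^*$ every two layers'' to ``strictly increasing every layer.'' This rests on taking the limits in the correct order ($\sigma_{l-1}\to 0$ before $\sigma_l\to 0$, exactly as in Lemma~\ref{app:lemma:as_ub_zero_L_approaches}) and on the sharper fact that $\lb(\sigma_{l-1},\sigma_l)$ can be pushed above the \emph{already attained} value $\hsic_{l-1}$, not merely made to approach $\hsic^*$ in the limit. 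Everything else --- the geometric decay of the gap, and the monotone-plus-bounded convergence argument --- is routine bookkeeping.
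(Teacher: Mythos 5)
Your proposal is correct and takes essentially the same route as the paper's proof: both repeatedly invoke Lemma~\ref{app:lemma:arbitrarily_close} with a vanishing tolerance sequence chosen strictly below the current optimality gap, so that $\hsic^{*} - \hsic_{l} \le \delta_{l} < \hsic^{*} - \hsic_{l-1}$ delivers strict monotonicity while $\delta_{l} \to 0$ delivers convergence, the only difference being bookkeeping (the paper rounds up to the harmonic grid $\mathcal{E}_{n} = \hsic^{*} - (\hsic^{*} - \hsic_{0})/n$ where you halve the gap, and then passes to the monotone-bounded limit exactly as you do). The subtlety you flag as ``the hard part'' --- that Lemma~\ref{app:lemma:arbitrarily_close} controls two consecutive bandwidths at once, so an intermediate layer could in principle dip --- is genuine but is simply not addressed in the paper, whose proof applies the lemma as though it gave per-layer control; note that your proposed patch still has to contend with the previous layer's bandwidth already being fixed when you ask the lower bound at the intermediate layer to exceed the attained risk, so it inherits rather than resolves that gap.
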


Before, the proof, we use the following figure, Fig.~\ref{app:fig:all_layers}, to illustrate the relationship between \KS $\{\phi_{l^\circ}\}_{l=1}^L$ that generates the \RS $\{\hsic_l\}_{l=1}^L$. By solving a network greedily, we separate the network into $L$ separable problems. At each additional layer, we rely on the weights learned from the previous layer. At each network, we find $\sigma_{l-1}$, $\sigma_l$, and $W_l$ for the next network. We also note that since we only need to prove the existence of a solution, this proof is done by \textit{\textbf{Proof by Construction}}, i.e, we only need to show an example of its existence. Therefore, this proof consists of us constructing a \RS which satisfies the lemma.
        \begin{figure}[h]
        \center
            \includegraphics[width=7cm]{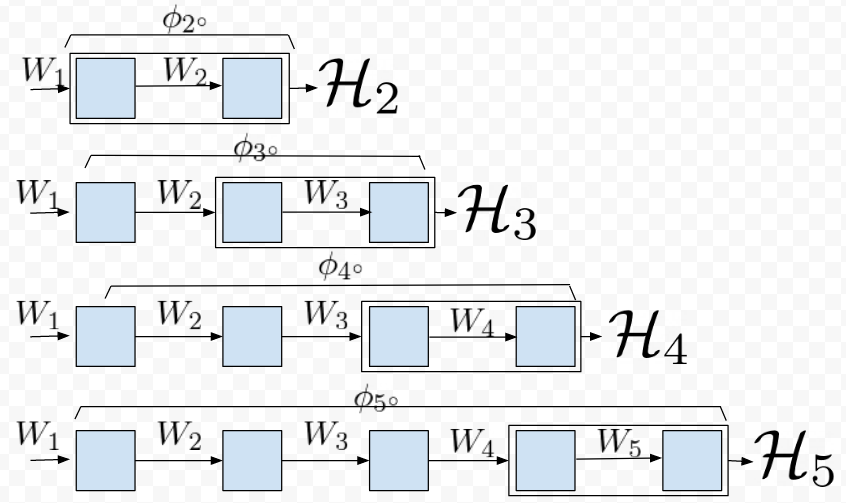}
            \caption{Relating \KS to \RS.}
            \label{app:fig:all_layers}
        \end{figure} 

\begin{proof} \hspace{1pt}
\begin{adjustwidth}{0.5cm}{0.0cm}
We first note that from \citelemma{app:lemma:arbitrarily_close}, we have previously proven given any $\hsic_{l-2}$, $\delta > 0$, there exists a $\sigma_0 > 0 $ and $\sigma_1 > 0$ such that
\begin{equation}
    \hsic^{*} - \hsic_{l} \leq \delta_{l}.
    \label{eq:cond1}
\end{equation}
This implies that based on Fig.~\ref{app:fig:all_layers}, at any given layer, we could reach arbitrarily close to $\hsic^*$. Given this, we list the 2 steps to build the \RS.


\textbf{Step 1: } Define $\{\mathcal{E}_{n}\}_{n =  1}^{\infty}$ as a sequence of numbers $\hsic^{*} - \frac{\hsic^{*} - \hsic_0}{n}$ on the real line. We have the following properties for this sequence:
\begin{equation}
  \lim_{n\rightarrow\infty} \mathcal{E}_{n} = \hsic^{*}
 , \quad \mathcal{E}_{1} = \mathcal{H}_{0}.   
\end{equation}

Using these two properties, for any $\hsic_{l-1} \in [\hsic_{0},\hsic^{*}]$ there exist an unique $n$, where  \begin{equation}
    \mathcal{E}_{n} \leq \hsic_{l-1} < \mathcal{E}_{n+1}.
    \label{eq:bounding_box}
\end{equation}

\textbf{Step 2: }  
    For any given $l$, we choose $\delta_{l}$ to satisfies \eq{eq:cond1} by the following procedure, First find an $n$ that satisfies
    \begin{equation}
        \mathcal{E}_{n} \leq \hsic_{l-1} <  \mathcal{E}_{n+1}, 
        \label{ineq:differential}
    \end{equation}
    and second define $\delta_l$ to be 
    \begin{equation}
        \delta_{l} = \hsic^{*} - \mathcal{E}_{n+1}.
        \label{eq:delta_define}
    \end{equation}

To satisfy \eq{eq:cond1}, the following must be true.  \begin{equation}
    \hsic^{*} - \hsic_{l-1} \leq \delta_{l-1}.
    \label{eq:delta_greater_than_l1}
\end{equation}
and further we found $n$ such that
\begin{equation}
    \mathcal{E}_{n} \leq \hsic_{l-1} <  \mathcal{E}_{n+1} \implies \hsic^{*} - \mathcal{E}_{n} \geq \hsic^{*} - \hsic_{l-1} > \hsic^{*} - \mathcal{E}_{n+1}.
    \label{eq:all_ineqals}
\end{equation}
Thus combining \eq{eq:delta_define}, \eq{eq:delta_greater_than_l1}, and \eq{eq:all_ineqals} we have
\begin{equation}
    \delta_{l-1} > \delta_{l}.
\end{equation}
Therefore, $\{ \delta_l \}$ is a decreasing sequence.

\textbf{Step 3: } 
Note that $\{\mathcal{E}_{n}\}$ is a converging sequence where 
\begin{equation}
    \lim_{n \rightarrow \infty}
    \hsic^{*} - \frac{\hsic^{*} - \hsic_0}{n} = \hsic^*.
\end{equation}
Therefore, $\{\Delta_n\} = \hsic^* - \{\mathcal{E}_n\}$ is also a converging sequence where 
\begin{equation}
    \lim_{n \rightarrow \infty}
    \hsic^* - \hsic^{*} + \frac{\hsic^{*} - \hsic_0}{n} = 0
\end{equation}
and $\{\delta_{l}\}$ is a subsequence of $\{\Delta_{l}\}$. Since any subsequence of a converging sequence also converges to the same limit, we know that
\begin{equation}
    \lim_{l \rightarrow \infty} \delta_l = 0.
\end{equation}


Following this construction, if we always choose $\hsic_l$ such that 
\begin{equation}
    \hsic^{*} - \hsic_{l} \leq \delta_{l}.
    \label{eq:key_inequality}
\end{equation}
As $l \rightarrow \infty$, the inequality becomes
\begin{align}
    \hsic^{*} - 
    \lim_{l \rightarrow \infty} 
    \hsic_{l} 
    &  \leq 
    \lim_{l \rightarrow \infty} 
    \delta_{l}, \\
    &  \leq 
    0.
\end{align}
Since we know that 
\begin{equation}
    \hsic^{*} - \hsic_l \geq 0\, \forall l. 
\end{equation}
The condition of 
\begin{equation}
    0 \leq
    \hsic^{*} - \lim_{l \rightarrow \infty} \hsic_l \leq 0
\end{equation}
is true only if 
\begin{equation}
    \hsic^{*} - \lim_{l \rightarrow \infty} \hsic_l = 0.
\end{equation}
This allows us to conclude 
\begin{equation}
    \hsic^{*} = \lim_{l \rightarrow \infty} \hsic_l.
\end{equation}




\textbf{Proof of the Monotonic Improvement. } 

Given \eq{eq:bounding_box} and \eq{eq:delta_define}, 
at each step we have the following: 
\begin{align}
    \hsic_{l-1} & < \mathcal{E}_{n+1} \\
    & \leq 
    \hsic^{*} - \delta_{l}. 
\end{align}
Rearranging this inequality, we have
\begin{equation}
    \delta_{l} < \hsic^{*} - \hsic_{l-1}.
    \label{eq:keyIneq}
\end{equation}
By combining the inequalities from \eq{eq:keyIneq} and  \eq{eq:key_inequality}, we have the following relationships.
\begin{align}
    \hsic^{*} - \hsic_{l} \leq \delta_{l} & < \hsic^{*} - \hsic_{l-1} \\
    \hsic^{*} - \hsic_{l} & < \hsic^{*} - \hsic_{l-1} \\
    - \hsic_{l} & < - \hsic_{l-1} \\
    \hsic_{l} & > \hsic_{l-1}, \\
\end{align}
which concludes the proof of theorem.

\end{adjustwidth}
\end{proof}

%
%
%

\end{appendices}

\begin{appendices} 
\section{Proof for Theorem \ref{thm:geometric_interpret}}
\label{app:thm:geometric_interpret}

\textbf{Theorem \ref{thm:geometric_interpret}: }
\textit{
As $l \rightarrow \infty$ and  $\hsic_l \rightarrow \hsic^*$, 
the following properties are satisfied: }

\begin{enumerate}[label=\Roman*]
    \item 
    the scatter ratio approaches 0 where
    \begin{equation}
    \lim_{l \rightarrow \infty} \frac{\Tr(S_w^{l})}{\Tr(S_b^{l})} =   0
\end{equation}

\item the \KS converges to the following kernel:
\begin{equation}
    \lim_{l \rightarrow \infty} 
    \kf(x_{i},x_{j})^{l} = 
    \kf^* =  
    \begin{cases} 
    0 \quad \forall i,j \in \mathcal{S}^c
    \\
    1 \quad \forall i,j \in \mathcal{S}
    \end{cases}.
\end{equation}
\end{enumerate}

\begin{proof}

We start by proving condition II starting from the $\hsic$ objective using a \rbfk
\begin{align}
    \max_{W} 
    \sums \Gij \kf_W(r_i, r_j) 
    -
    \sumsc |\Gij| \kf_W(r_i, r_j)\\
    \max_{W} 
    \sums \Gij  \ISMexp
    -
    \sumsc |\Gij| \ISMexp   
\end{align}
Given that $\hsic_l \rightarrow \hsic^*$,
and the fact that $0 \leq \kf_W \leq 1$,
this implies that the following condition must be true:
\begin{equation}
   \hsic^{*} = \sums \Gij =  
    \sums \Gij (1) 
    -
    \sumsc |\Gij| (0).
\end{equation}

Based on \eq{eq:cond1}, our construction at each layer ensures to satisfy 
\begin{equation}
    \hsic^{*} - \hsic_{l} \leq \delta_{l}.
\end{equation}
Substituting the definition of $\hsic^*$ and $\hsic_l$, we have
\begin{align}
    \sums \Gij (1) 
    -\left[\sums \Gij  \kf_W(r_i, r_j) 
    -
    \sumsc |\Gij| \kf_W(r_i, r_j) \right] \leq \delta_{l}
    \\
     \sums \Gij  (1-\kf_W(r_i, r_j)) 
    +
    \sumsc |\Gij| \kf_W(r_i, r_j)  \leq \delta_{l}.
    \label{eq:kernel_inequal}
\end{align}
Since every term within the summation in \eq{eq:kernel_inequal} is positive, this implies
\begin{align}\label{eq:limit kernal behaviour}
     1-\kf_W(r_i, r_j) \leq \delta_{l} \quad i,j \in \mathcal{S}
     \\
     \kf_W(r_i, r_j) \leq \delta_{l}\quad  i,j \in \mathcal{S}^{c}.
     \label{eq:limit kernal behaviour2}
\end{align}

So as $l \rightarrow \infty$ and  $\delta_{l} \rightarrow 0$, every component getting closer to limit Kernel, i.e, taking the limit from both sides and using the fact that is proven is theorem 1 $\lim_{l\rightarrow \infty} \delta_{l} = 0$ leads to
\begin{align}
   \lim_{l\rightarrow \infty}  1 \leq \kf_W(r_i, r_j)  \quad i,j \in \mathcal{S}
     \\
     \lim_{l\rightarrow \infty}\kf_W(r_i, r_j) \leq 0 \quad  i,j \in \mathcal{S}^{c}
\end{align}
both terms must instead be strictly equality. Therefore, we see that at the limit point $\kf_W$ would have the form 
\begin{equation}
     \kf^* =  
    \begin{cases} 
    0 \quad \forall i,j \in \mathcal{S}^c
    \\
    1 \quad \forall i,j \in \mathcal{S}
    \end{cases}.   
\end{equation}

\textbf{First Property}:

Using \eq{eq:limit kernal behaviour} and \eq{eq:limit kernal behaviour2} we have:

\begin{align}
         1-\delta_{l}\leq  \ISMexp  \quad i,j \in \mathcal{S}
     \\
     \ISMexp \leq \delta_{l}\quad  i,j \in \mathcal{S}^{c}.
\end{align}

As  $\lim_{l\rightarrow \infty} \delta_{l} = 0$, taking the limit from both side leads to:


\begin{equation}
    \begin{cases} 
    \ISMexp = 1 \quad \forall i,j \in \mathcal{S}\\
    \ISMexp = 0 \quad \forall i,j \in \mathcal{S}^c
    \end{cases}.   
\end{equation}
If we take the log of the conditions, we get
\begin{equation}
    \begin{cases} 
     \frac{1}{2\sigma^2}
    (r_i - r_j)^T W W^T(r_i - r_j) =0 
    \quad \forall i,j \in \mathcal{S}\\   
    \frac{1}{2\sigma^2}
    (r_i - r_j)^T W W^T(r_i - r_j) = \infty
    \quad \forall i,j \in \mathcal{S}^c
    \end{cases}.   
\end{equation}
This implies that as $l \rightarrow \infty$ we have
\begin{equation}
    \lim_{l \rightarrow \infty}
    \sums
    \frac{1}{2\sigma^2}
    (r_i - r_j)^T W W^T(r_i - r_j) 
    = 
    \lim_{l \rightarrow \infty}
    \Tr(S_w) = 0.
\end{equation}
\begin{equation}
    \lim_{l \rightarrow \infty}
    \sumsc \frac{1}{2\sigma^2}
    (r_i - r_j)^T W W^T(r_i - r_j) 
    = 
    \lim_{l \rightarrow \infty}
    \Tr(S_b)
    =
    \infty,
\end{equation}
This yields the ratio
\begin{equation}
    \lim_{\hsic_l \rightarrow \hsic^*} \frac{\Tr(S_w)}{\Tr(S_b)} = \frac{0}{\infty} = 0.
\end{equation}

\end{proof}
\end{appendices}
\begin{appendices}
\section{Proof for \texorpdfstring{$W_s$}\xspace Optimality}
\label{app:lemma:W_not_optimal}
\textit{Given $\hsic_l$ as the empirical risk at layer $l \ne L$, we have}
\begin{equation}
    \frac{\partial}{ \partial W_l}\hsic_l(W_s) \ne 0
\end{equation}

\begin{proof}
    Given $\frac{1}{\sqrt{\zeta}}$ as a normalizing constant for  $W_s = \frac{1}{\sqrt{\zeta}} \sum_{\alpha} r_{\alpha}$ such that $W^TW=I$. We start with the Lagrangian
    \begin{equation}
        \mathcal{L} = 
        - \sum_{i,j} \Gij \ISMexp - \Tr(\Lambda(W^TW - I)).
    \end{equation}
    If we now take the derivative with respect to the Lagrange, we get
    \begin{equation}
        \nabla \mathcal{L} = 
        \frac{1}{\sigma^2}
        \sum_{i,j} \Gij \ISMexp
        (r_i - r_j)(r_i - r_j)^TW 
        - 2W\Lambda.
    \end{equation}
    By setting the gradient to 0, we have
     \begin{align}
        \left[
        \frac{1}{2\sigma^2}
        \sum_{i,j} \Gij \ISMexp
        (r_i - r_j)(r_i - r_j)^T 
        \right]
        W 
        =&  W\Lambda. \\
        \mathcal{Q}_l W =& W \Lambda.
        \label{app:eq:ism_conclusion}
    \end{align}   
    From \eq{app:eq:ism_conclusion}, we see that $W$ is only the optimal solution when $W$ is the eigenvector of $Q_l$. Therefore, by setting $W$ to 
    $W_s = \frac{1}{\sqrt{\zeta}} \sum_{\alpha} r_{\alpha}$, it is not guaranteed to yield an optimal for all $\sigma_l$.
\end{proof}

\end{appendices}
\begin{appendices}
\section{Proof for Corollary \ref{corollary:mse} and \ref{corollary:ce}}
\label{app:corollary:ce}

\textbf{Corollary} \ref{corollary:mse}: 
    \textit{Given $\hsic_l \rightarrow \hsic^*$, the network output in IDS solves MSE via a translation of labels.}
   
\begin{proof} \hspace{1pt}
\begin{adjustwidth}{0.5cm}{0.0cm}
    As $\hsic_l \rightarrow \hsic^*$, Thm.~\ref{thm:geometric_interpret} shows that sample of the same class are mapped into the same point. Assuming that $\fm$ has mapped the sample into $c$ points $\alpha = [\alpha_1, ..., \alpha_c]$ that's different from the truth label
    $\xi = [\xi_1, ..., \xi_c]$. Then the $\mse$ objective is minimized by translating the $\fm$ output by 
    \begin{equation}
        \xi - \alpha.
    \end{equation}
\end{adjustwidth}
\end{proof}

\textbf{Corollary} \ref{corollary:ce}: 
    \textit{Given $\hsic_l \rightarrow \hsic^*$, the network output in RKHS solves $\ce$ via a change of bases.}
    
\textbf{Assumptions, and Notations. }
\begin{enumerate}
    \item  
        $n $ is the number of samples.
    \item
        $\nclass$ is the number of classes.
    \item
        $y_i \in \mathbb{R}^{\nclass}$ is the ground truth label for the $i^{th}$ sample. It is one-hot encoded where only the $j^{th}$ element is 1 if $x_i$ belongs to the $j^{th}$ class, all other elements would be 0.
    \item
        We denote $\fm$ as the network, and $\hat{y}_i \in \mathbb{R}^{\nclass}$ as the network output where $\hat{y}_i = \fm(x_i)$. We also assume that $\hat{y}_i$ is constrained on a probability simplex where $1 = \hat{y}_i^T \mathbf{1}_n$.
    \item
        We denote the $j^{th}$ element of $y_i$, and $\hat{y}_i$ as $y_{i,j}$ and $\hat{y}_{i,j}$ respectively.
    \item
        We define
    \begin{addmargin}[1em]{2em}
    \textbf{Orthogonality Condition: }
        A set of samples $\{\hat{y}_1, ..., \hat{y}_n\}$ satisfies the orthogonality condition if
        \begin{equation} 
        \begin{cases}
        \langle \hat{y_{i}}, \hat{y_{j}}\rangle =1 & \forall\quad i,j \textrm{ same class} \\
        \langle \hat{y_{i}}, \hat{y_{j}}\rangle=0 & \forall\quad i,j \textrm{ not in the same class}
        \end{cases}.
        \end{equation}
    \end{addmargin}
    \item
        We define the Cross-Entropy objective as 
        \begin{equation}
        \underset{\fm}{\argmin} -\sum_{i=1}^{n} \sum_{j=1}^{\nclass} y_{i,j} \log(\fm(x_{i})_{i,j}).
        \end{equation}
    \end{enumerate}
\begin{proof}\hspace{1pt}
\begin{adjustwidth}{0.5cm}{0.0cm}
From Thm.~\ref{thm:geometric_interpret}, we know that the network $\fm$ output, $\{ \hat{y}_1, \hat{y}_2, ..., \hat{y}_n \}$,  satisfy the orthogonality condition at $\hsic^*$. Then there exists a set of  orthogonal bases represented by $\Xi = [\xi_1, \xi_2, ..., \xi_c]$ that maps $\{ \hat{y}_1, \hat{y}_2, ..., \hat{y}_n \}$ to simulate the output of a softmax layer. Let $\xi_{i} = \hat{y}_{j} , j\in \cS^{i}$, i.e., for the $i_{th}$ class we arbitrary choose one of the samples from this class and assigns $\xi_i$ of that class to be equal to the sample's output. Realize in our problem we have $<\hat{y}_{i},\hat{y}_{i}> = 1$, so if $<\hat{y}_{i},\hat{y}_{j}> = 1$, then subtracting these two would lead to $<\hat{y}_{i},\hat{y}_{i}-\hat{y}_{j}> = 0$, which is the same as $\hat{y}_{i}=\hat{y}_{j}$.
So this representation is well-defined and its independent of choices of the sample from each group if they satisfy orthogonality condition.
Now we define transformed labels, $Y$ as:
\begin{equation}
    Y = \hat{Y} \Xi.
\end{equation}
Note that $Y = [y_1, y_2, ..., y_n]^T$ which each $y_{i}$ is a one hot vector representing the class membership of $i$ sample in $c$ classes.
Since given $\Xi$ as the change of basis, we can match $\hat{Y}$ to $Y$ exactly, $\ce$ is minimized.
\end{adjustwidth}
\end{proof}
\end{appendices}

\newpage
\begin{appendices}
\section{Dataset Details}
\label{app:data_detail}

No samples were excludes from any of the dataset. 

\textbf{Wine. } 
    This dataset has 13 features, 178 samples, and 3 classes. The features are continuous and heavily unbalanced in magnitude. The dataset can be downloaded at \url{https://archive.ics.uci.edu/ml/datasets/wine.}

 \textbf{Divorce. } 
    This dataset has 54 features, 170 samples, and 2 classes. The features are discrete and balanced in magnitude.  The dataset can be downloaded at \url{https://archive.ics.uci.edu/ml/datasets/Divorce+Predictors+data+set.}
    
 \textbf{Car. } 
     This dataset has 6 features, 1728 samples and 2 classes. The features are discrete and balanced in magnitude.  The dataset can be downloaded at \url{https://archive.ics.uci.edu/ml/datasets/Car+Evaluation.}       
 
\textbf{Cancer. } 
    This dataset has 9 features, 683 samples, and 2 classes. The features are discrete and unbalanced in magnitude. The dataset can be downloaded at \url{https://archive.ics.uci.edu/ml/datasets/Breast+Cancer+Wisconsin+(Diagnostic)}.
    
 \textbf{Face. } 
    This dataset consists of images of 20 people in various poses. The 624 images are vectorized into 960 features.  
    The dataset can be downloaded at 
    \url{https://archive.ics.uci.edu/ml/datasets/CMU+Face+Images}.

\textbf{Random. } 
    This dataset has 2 features, 80 samples and 2 classes. It is generate with a gaussian distribution where half of the samples are randomly labeled as 1 or 0.
    
\textbf{Adversarial. } 
    This dataset has 2 features, 80 samples and 2 classes. It is generate with the following code:
    \begin{lstlisting}    
    #!/usr/bin/env python
    
    n = 40
    X1 = np.random.rand(n,2)
    X2 = X1 + 0.01*np.random.randn(n,2)
    
    X = np.vstack((X1,X2))
    Y = np.vstack(( np.zeros((n,1)), np.ones((n,1)) ))
    \end{lstlisting}    
 
\textbf{CFAR10 Test. } The test set images from CIFAR10 are preprocessed with a convolutional layer that outputs vectorized samples of $x_i \in \mathbb{R}^{10}$. This dataset has 10 features and 10,000 samples.  The preprocessing code to map the images to $\mathbb{R}^{10}$ data is included in the supplementary. The link to download the data is at \url{https://www.cs.toronto.edu/~kriz/cifar.html}.

\textbf{Raman. } The dataset consists of 4306 samples, 700 frequencies, and 35 different cell types. Since this is proprietary data, a download link is not included. 

\end{appendices}
\newpage
\begin{appendices}
\section{Optimal Gaussian \texorpdfstring{$\sigma $ }\xspace for Maximum Kernel Separation}
\label{app:opt_sigma}
Although the Gaussian kernel is the most common kernel choice for kernel methods, its $\sigma$ value is a hyperparameter that must be tuned for each dataset. This work proposes to set the $\sigma$ value based on the maximum kernel separation. The source code is made publicly available on \url{https://github.com/anonamous}.

Let $X \in \mathbb{R}^{n \times d}$ be a dataset of $n$ samples with $d$ features and let $Y \in \mathbb{R}^{n \times \nclass}$ be the corresponding one-hot encoded labels where $\nclass$ denotes the number of classes. Given $\kappa_X(\cdot, \cdot)$ and $\kappa_Y(\cdot,\cdot)$ as two kernel functions that applies respectively to $X$ and $Y$ to construct kernel matrices $K_X \in \mathbb{R}^{n \times n}$ and $K_Y \in \mathbb{R}^{n \times n}$. Given a set $\mathcal{S}$, we denote $|\mathcal{S}|$ as the number of elements within the set. Also let $\mathcal{S}$ and $\mathcal{S}^c$ be sets of all pairs of samples of $(x_i,x_j)$ from a dataset $X$ that belongs to the same and different classes respectively, then the average kernel value for all $(x_i,x_j)$ pairs with the same class is
\begin{equation}
    d_{\mathcal{S}} = \frac{1}{|\mathcal{S}|}\sum_{i,j \in \mathcal{S}} e^{-\frac{||x_i - x_j||^2}{2\sigma^2}}
\end{equation}
and the average kernel value for all $(x_i,x_j)$ pairs between different classes is
\begin{equation}
    d_{\mathcal{S}^c} = 
    \frac{1}{|\mathcal{S}^c|}\sum_{i,j \in \mathcal{S}^c} e^{-\frac{||x_i - x_j||^2}{2\sigma^2}}. 
\end{equation}
We propose to find the $\sigma$ that maximizes the difference between $d_{\mathcal{S}}$ and $d_{\mathcal{S}^c}$ or 
\begin{equation}
    \underset{\sigma}{\max} \quad 
    \frac{1}{|\mathcal{S}|}\sum_{i,j \in \mathcal{S}} e^{-\frac{||x_i - x_j||^2}{2\sigma^2}} - 
    \frac{1}{|\mathcal{S}^c|}\sum_{i,j \in \mathcal{S}^c} e^{-\frac{||x_i - x_j||^2}{2\sigma^2}}.
    \label{eq:main_objective}
\end{equation}
It turns out that is expression can be computed efficiently. Let $g = \frac{1}{|\mathcal{S}|}$ and $\bar{g} = \frac{1}{|\mathcal{S}^c|}$, and let $\textbf{1}_{n \times n} \in \mathbb{R}^{n \times n}$ be a matrix of 1s, then we can define $Q$ as
\begin{equation}
    Q = -g K_Y + \bar{g} (\textbf{1}_{n \times n} - K_Y).
\end{equation}
Or $Q$ can be written more compactly as
\begin{equation}
    Q = \bar{g} \textbf{1}_{n \times n} - (g + \bar{g})K_Y. 
\end{equation}
Given $Q$, Eq.~(\ref{eq:main_objective}) becomes
\begin{equation}
    \underset{\sigma}{\min} \quad 
    \Tr(K_X Q).
    \label{eq:obj_compact}
\end{equation}
This objective can be efficiently solved with BFGS. 

Below in Fig.~\ref{fig:max_kernel_separation}, we plot out the average within cluster kernel and the between cluster kernel values as we vary $\sigma$. From the plot, we can see that the maximum separation is discovered via BFGS. 
    \begin{figure}[h]
        \centering
        \includegraphics[width=10cm,height=7cm]{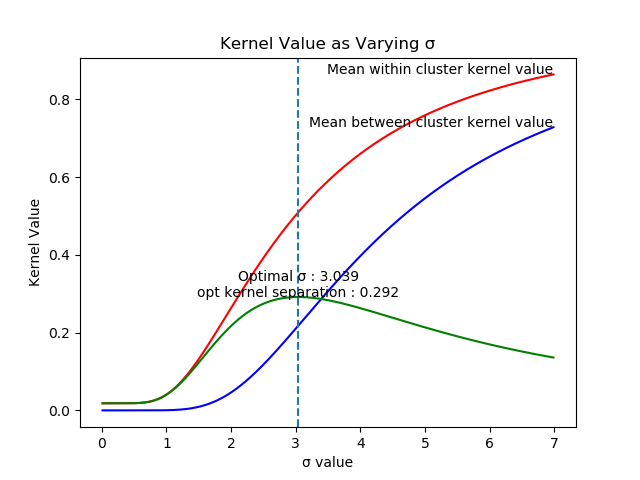}
        \caption{Maximum Kernel separation.}
        \label{fig:max_kernel_separation}
    \end{figure}

\textbf{Relation to HSIC. }    
From Eq.~(\ref{eq:obj_compact}), we can see that the $\sigma$ that causes maximum kernel separation is directly related to HSIC. Given that the HSIC objective is normally written as
\begin{equation}
    \underset{\sigma}{\min} \quad 
    \Tr(K_X H K_Y H),
\end{equation}
by setting $Q=HK_YH$, we can see how the two formulations are related. While the maximum kernel separation places the weight of each sample pair equally, HSIC weights the pair differently.
We also notice that the $Q_{i,j}$ element is positive/negative for $(x_i,x_j)$ pairs that are with/between classes respectively. Therefore, the argument for the global optimum should be relatively close for both objectives. Below in Figure~\ref{fig:max_HSIC}, we show a figure of HSIC values as we vary $\sigma$. Notice how the optimal $\sigma$ is almost equivalent to the solution from maximum kernel separation. For the purpose of \RS, we use $\sigma$ that maximizes the HSIC value.
    \begin{figure}[h]
        \centering
        \includegraphics[width=10cm,height=7cm]{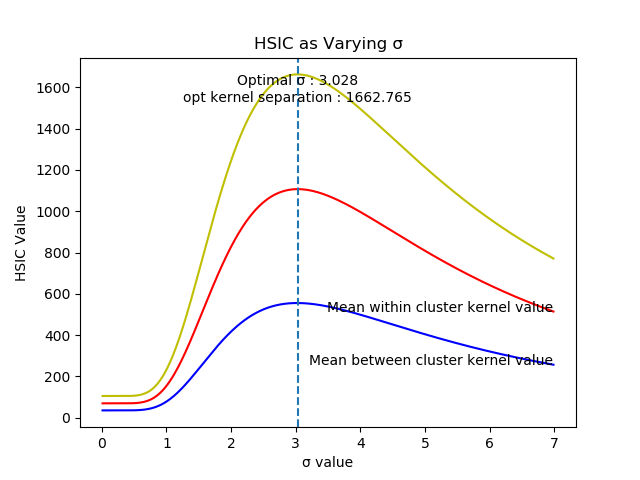}
        \caption{Maximal HSIC.}
        \label{fig:max_HSIC}
    \end{figure} 
\end{appendices}
\clearpage
\newpage
\begin{appendices}
\section{\texorpdfstring{$W_l$ }\xspace  Dimensions for each 10 Fold of each Dataset}
\label{app:W_dimensions}
We report the input and output dimensions of each $W_l$ for every layer of each dataset in the form of $(\alpha, \beta)$; the corresponding dimension becomes $W_l \in \mathbb{R}^{\alpha \times \beta}$. Since each dataset consists of 10-folds, the network structure for each fold is reported. We note that the input of the 1st layer is the dimension of the original data. However, after the first layer, the width of the RFF becomes the output of each layer; here we use 300. 

The $\beta$ value is chosen during the ISM algorithm. By keeping only the most dominant eigenvector of the $\Phi$ matrix, the output dimension of each layer corresponds with the rank of $\Phi$. It can be seen from each dataset that the first layer significantly expands the rank. The expansion is generally followed by a compression of fewer and fewer eigenvalues. These results conform with the observations made by \citet{montavon2011kernel} and \citet{ansuini2019intrinsic}.

\begin{tabular}{ll}
\centering
\tiny
\setlength{\tabcolsep}{7.0pt}
\renewcommand{\arraystretch}{1.2}
\begin{tabular}{cccccc|}
	\hline
Data & Layer 1 & Layer 2 & Layer 3 & Layer 4 \\ 
	\hline
adversarial 1 & (2, 2) & (300, 61) & (300, 35) \\ 
adversarial 2 & (2, 2) & (300, 61) & (300, 35) \\ 
adversarial 3 & (2, 2) & (300, 61) & (300, 8) & (300, 4) \\ 
adversarial 4 & (2, 2) & (300, 61) & (300, 29) \\ 
adversarial 5 & (2, 2) & (300, 61) & (300, 29) \\ 
adversarial 6 & (2, 2) & (300, 61) & (300, 7) & (300, 4) \\ 
adversarial 7 & (2, 2) & (300, 61) & (300, 34) \\ 
adversarial 8 & (2, 2) & (300, 12) & (300, 61) & (300, 30) \\ 
adversarial 9 & (2, 2) & (300, 61) & (300, 33) \\ 
adversarial 10 & (2, 2) & (300, 61) & (300, 33) \\ 
	\hline
\end{tabular}
&
\centering
\tiny
\setlength{\tabcolsep}{3.0pt}
\renewcommand{\arraystretch}{1.2}
\begin{tabular}{ccccc|}
	\hline
Data & Layer 1 & Layer 2 & Layer 3 \\ 
	\hline
Random 1 & (3, 3) & (300, 47) & (300, 25) \\ 
Random 2 & (3, 3) & (300, 46) & (300, 25) \\ 
Random 3 & (3, 3) & (300, 46) & (300, 25) \\ 
Random 4 & (3, 3) & (300, 47) & (300, 4) \\ 
Random 5 & (3, 3) & (300, 47) & (300, 25) \\ 
Random 6 & (3, 3) & (300, 45) & (300, 23) \\ 
Random 7 & (3, 3) & (300, 45) & (300, 25) \\ 
Random 8 & (3, 3) & (300, 45) & (300, 21) \\ 
Random 9 & (3, 3) & (300, 45) & (300, 26) \\ 
Random 10 & (3, 3) & (300, 47) & (300, 25) \\ 
	\hline
\end{tabular}
\end{tabular}

\begin{tabular}{ll}
\tiny
\setlength{\tabcolsep}{3.0pt}
\renewcommand{\arraystretch}{1.2}
\begin{tabular}{cccccccc|}
	\hline
Data & Layer 1 & Layer 2 & Layer 3 & Layer 4 & Layer 5 & Layer 6 \\ 
	\hline
spiral 1 & (2, 2) & (300, 15) & (300, 6) & (300, 7) & (300, 6) \\ 
spiral 2 & (2, 2) & (300, 13) & (300, 6) & (300, 7) & (300, 6) & (300, 6) \\ 
spiral 3 & (2, 2) & (300, 12) & (300, 6) & (300, 7) & (300, 6) & (300, 6) \\ 
spiral 4 & (2, 2) & (300, 13) & (300, 6) & (300, 7) & (300, 6) & (300, 6) \\ 
spiral 5 & (2, 2) & (300, 13) & (300, 6) & (300, 7) & (300, 6) \\ 
spiral 6 & (2, 2) & (300, 14) & (300, 6) & (300, 7) & (300, 6) \\ 
spiral 7 & (2, 2) & (300, 14) & (300, 6) & (300, 7) & (300, 6) \\ 
spiral 8 & (2, 2) & (300, 14) & (300, 6) & (300, 7) & (300, 6) & (300, 6) \\ 
spiral 9 & (2, 2) & (300, 13) & (300, 6) & (300, 7) & (300, 6) \\ 
spiral 10 & (2, 2) & (300, 14) & (300, 6) & (300, 7) & (300, 6) \\ 
	\hline
\end{tabular}
&
\tiny
\setlength{\tabcolsep}{3.0pt}
\renewcommand{\arraystretch}{1.2}
\begin{tabular}{cccccccc}
	\hline
Data & Layer 1 & Layer 2 & Layer 3 & Layer 4 & Layer 5 & Layer 6 \\ 
	\hline
wine 1 & (13, 11) & (300, 76) & (300, 6) & (300, 7) & (300, 6) & (300, 6) \\ 
wine 2 & (13, 11) & (300, 76) & (300, 6) & (300, 6) & (300, 6) & (300, 6) \\ 
wine 3 & (13, 11) & (300, 75) & (300, 6) & (300, 7) & (300, 6) & (300, 6) \\ 
wine 4 & (13, 11) & (300, 76) & (300, 6) & (300, 6) & (300, 6) & (300, 6) \\ 
wine 5 & (13, 11) & (300, 74) & (300, 6) & (300, 7) & (300, 6) & (300, 6) \\ 
wine 6 & (13, 11) & (300, 74) & (300, 6) & (300, 6) & (300, 6) & (300, 6) \\ 
wine 7 & (13, 11) & (300, 74) & (300, 6) & (300, 6) & (300, 6) & (300, 6) \\ 
wine 8 & (13, 11) & (300, 75) & (300, 6) & (300, 7) & (300, 6) & (300, 6) \\ 
wine 9 & (13, 11) & (300, 75) & (300, 6) & (300, 8) & (300, 6) & (300, 6) \\ 
wine 10 & (13, 11) & (300, 76) & (300, 6) & (300, 7) & (300, 6) & (300, 6) \\ 
	\hline
\end{tabular}
\end{tabular}

\begin{tabular}{ll}
\tiny
\setlength{\tabcolsep}{3.0pt}
\renewcommand{\arraystretch}{1.2}
\begin{tabular}{cccccccc|}
	\hline
Data & Layer 1 & Layer 2 & Layer 3 & Layer 4 & Layer 5 & Layer 6 \\ 
	\hline
car 1 & (6, 6) & (300, 96) & (300, 6) & (300, 8) & (300, 6) \\ 
car 2 & (6, 6) & (300, 96) & (300, 6) & (300, 8) & (300, 6) \\ 
car 3 & (6, 6) & (300, 91) & (300, 6) & (300, 8) & (300, 6) \\ 
car 4 & (6, 6) & (300, 88) & (300, 6) & (300, 8) & (300, 6) & (300, 6) \\ 
car 5 & (6, 6) & (300, 94) & (300, 6) & (300, 8) & (300, 6) \\ 
car 6 & (6, 6) & (300, 93) & (300, 6) & (300, 7) \\ 
car 7 & (6, 6) & (300, 92) & (300, 6) & (300, 8) & (300, 6) \\ 
car 8 & (6, 6) & (300, 95) & (300, 6) & (300, 7) & (300, 6) \\ 
car 9 & (6, 6) & (300, 96) & (300, 6) & (300, 9) & (300, 6) \\ 
car 10 & (6, 6) & (300, 99) & (300, 6) & (300, 8) & (300, 6) \\ 
	\hline
\end{tabular}
&
\tiny
\setlength{\tabcolsep}{3.0pt}
\renewcommand{\arraystretch}{1.2}
\begin{tabular}{ccccccc|}
	\hline
Data & Layer 1 & Layer 2 & Layer 3 & Layer 4 & Layer 5 \\ 
	\hline
divorce 1 & (54, 35) & (300, 44) & (300, 5) & (300, 5) \\ 
divorce 2 & (54, 35) & (300, 45) & (300, 4) & (300, 4) \\ 
divorce 3 & (54, 36) & (300, 49) & (300, 6) & (300, 6) \\ 
divorce 4 & (54, 36) & (300, 47) & (300, 7) & (300, 6) \\ 
divorce 5 & (54, 35) & (300, 45) & (300, 6) & (300, 6) \\ 
divorce 6 & (54, 36) & (300, 47) & (300, 6) & (300, 6) \\ 
divorce 7 & (54, 35) & (300, 45) & (300, 6) & (300, 6) & (300, 4) \\ 
divorce 8 & (54, 36) & (300, 47) & (300, 6) & (300, 7) & (300, 4) \\ 
divorce 9 & (54, 36) & (300, 47) & (300, 5) & (300, 5) \\ 
divorce 10 & (54, 36) & (300, 47) & (300, 6) & (300, 6) \\ 
	\hline
\end{tabular}
\end{tabular}

\begin{table}[h]
\tiny
\setlength{\tabcolsep}{3.0pt}
\renewcommand{\arraystretch}{1.2}
\begin{tabular}{cccccccccccc|}
	\hline
Data & Layer 1 & Layer 2 & Layer 3 & Layer 4 & Layer 5 & Layer 6 & Layer 7 & Layer 8 & Layer 9 & Layer 10 \\ 
	\hline
cancer 1 & (9, 8) & (300, 90) & (300, 5) & (300, 6) & (300, 6) & (300, 5) & (300, 4) & (300, 5) & (300, 6) & (300, 6) \\ 
cancer 2 & (9, 8) & (300, 90) & (300, 6) & (300, 7) & (300, 8) & (300, 11) & (300, 8) & (300, 4) \\ 
cancer 3 & (9, 8) & (300, 88) & (300, 5) & (300, 6) & (300, 7) & (300, 7) & (300, 6) & (300, 4) \\ 
cancer 4 & (9, 8) & (300, 93) & (300, 6) & (300, 7) & (300, 9) & (300, 11) & (300, 8) \\ 
cancer 5 & (9, 8) & (300, 93) & (300, 9) & (300, 10) & (300, 10) & (300, 11) & (300, 9) & (300, 7) \\ 
cancer 6 & (9, 8) & (300, 92) & (300, 7) & (300, 8) & (300, 8) & (300, 7) & (300, 7) \\ 
cancer 7 & (9, 8) & (300, 90) & (300, 4) & (300, 4) & (300, 5) & (300, 6) & (300, 6) & (300, 6) & (300, 6) \\ 
cancer 8 & (9, 8) & (300, 88) & (300, 5) & (300, 6) & (300, 7) & (300, 8) & (300, 7) & (300, 6) \\ 
cancer 9 & (9, 8) & (300, 88) & (300, 5) & (300, 7) & (300, 7) & (300, 7) & (300, 7) \\ 
cancer 10 & (9, 8) & (300, 97) & (300, 9) & (300, 11) & (300, 12) & (300, 13) & (300, 6) \\ 
	\hline
\end{tabular}
\end{table}

\begin{table}[h]
\tiny
\setlength{\tabcolsep}{3.0pt}
\renewcommand{\arraystretch}{1.2}
\begin{tabular}{cccccc|}
	\hline
Data & Layer 1 & Layer 2 & Layer 3 & Layer 4 \\ 
	\hline
face 1 & (960, 233) & (300, 74) & (300, 73) & (300, 46) \\ 
face 2 & (960, 231) & (300, 75) & (300, 73) & (300, 43) \\ 
face 3 & (960, 231) & (300, 76) & (300, 73) & (300, 44) \\ 
face 4 & (960, 232) & (300, 76) & (300, 74) & (300, 44) \\ 
face 5 & (960, 231) & (300, 77) & (300, 73) & (300, 43) \\ 
face 6 & (960, 232) & (300, 74) & (300, 72) & (300, 47) \\ 
face 7 & (960, 232) & (300, 76) & (300, 73) & (300, 45) \\ 
face 8 & (960, 230) & (300, 74) & (300, 74) & (300, 44) \\ 
face 9 & (960, 233) & (300, 76) & (300, 76) & (300, 45) \\ 
face 10 & (960, 231) & (300, 76) & (300, 70) & (300, 43) \\ 
	\hline
\end{tabular}
\end{table}

\end{appendices}

\clearpage
\newpage

\begin{appendices}
\section{Sigma Values used for Random and Adversarial Simulation}
\label{app:sigma_values}

The simulation of Thm.~\ref{thm:hsequence} as shown in Fig.~\ref{fig:thm_proof} spread the improvement across multiple layers. The $\sigma_l$ and $\hsic_l$ values are recorded here. We note that $\sigma_l$ are reasonably large and not approaching 0 and the improvement of $\hsic_l$ is monotonic. 

\begin{figure}[h]
\center
    \includegraphics[width=9cm]{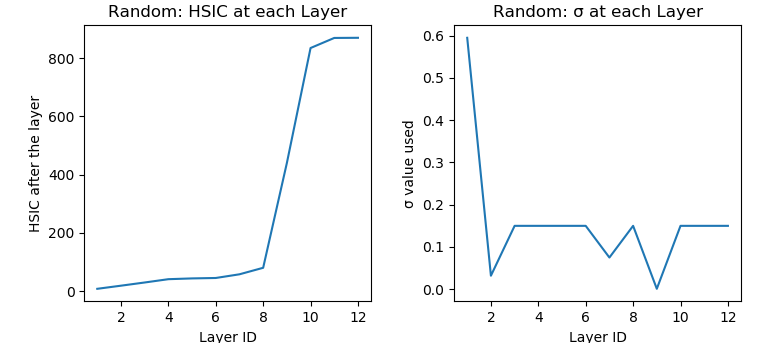}
    \caption{}
\end{figure} 

\begin{figure}[h]
\center
    \includegraphics[width=9cm]{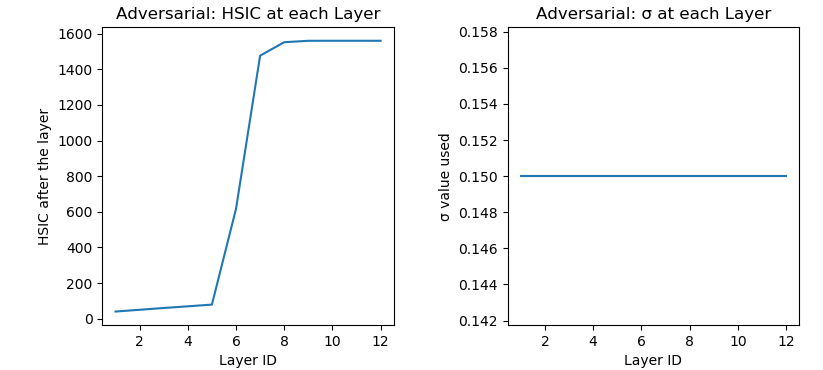}
    \caption{}
\end{figure}


Given a sufficiently small $\sigma_0$ and $\sigma_1$, Thm.~\ref{thm:hsequence} claims that it can come arbitrarily close to the global optimal using a minimum of 2 layers. We here simulate 2 layers using a relatively small $\sigma$ values ($\sigma_0 = 10^{-5}$) on the Random (left) and Adversarial (right) data and display the results of the 2 layers below. Notice that given 2 layer, it generated a clearly separable clusters that are pushed far apart.

    \begin{figure}[!h]
    \begin{minipage}[H]{6.0cm}
        \includegraphics[width=6cm]{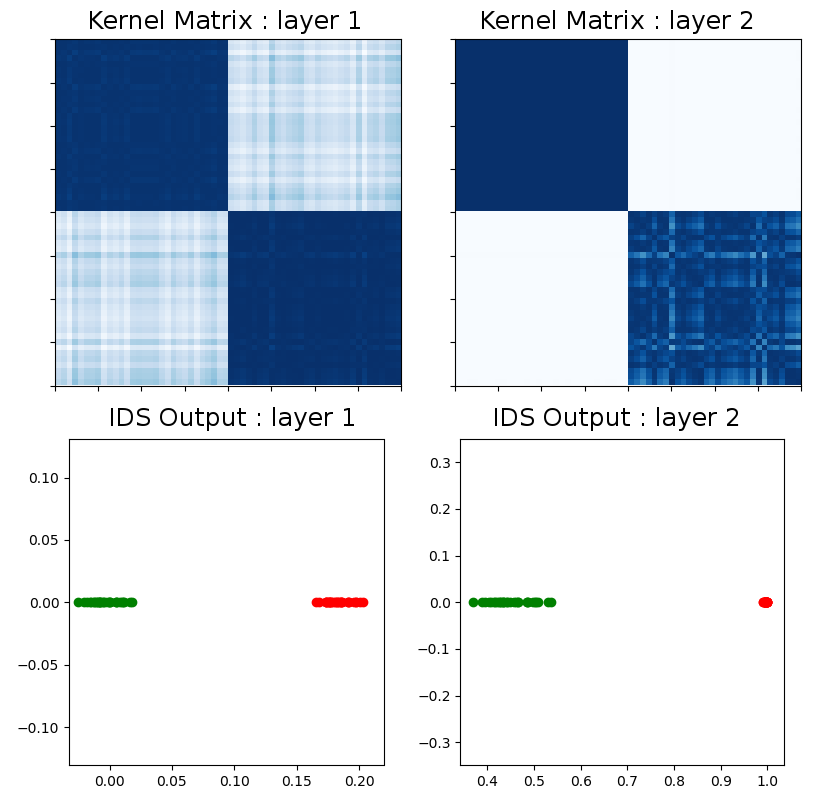}
        \caption{Random Dataset with 2\\layers and $\sigma=10^{-5}$}
    \end{minipage}%
    \begin{minipage}[H]{6.0cm}
        \includegraphics[width=6cm]{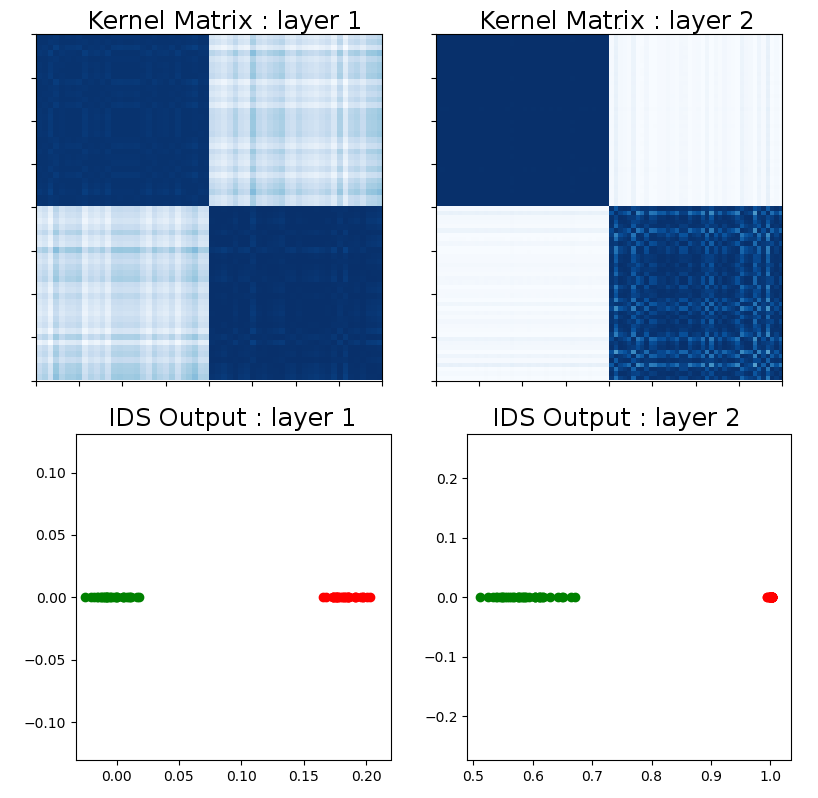}
        \caption{Adversarial Dataset with 2 \\layers and $\sigma=10^{-5}$}
    \end{minipage}
    \end{figure}          

\end{appendices}

\clearpage
\newpage

\begin{appendices}
\section{Evaluation Metrics Graphs}
\label{app:metric_graphs}

\begin{figure}[h]
\center
    \includegraphics[width=10cm]{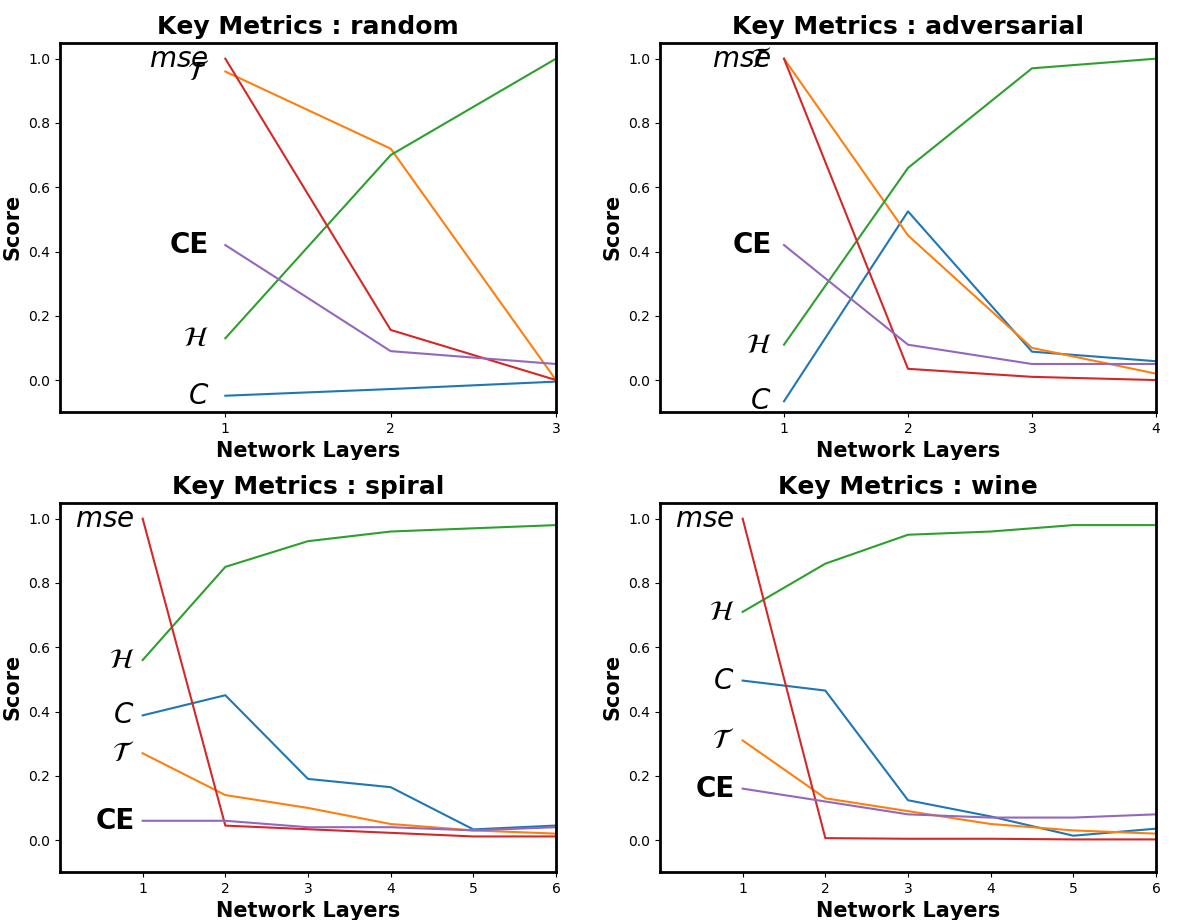}
    \caption{}
\end{figure} 

\begin{figure}[h]
\center
    \includegraphics[width=10cm]{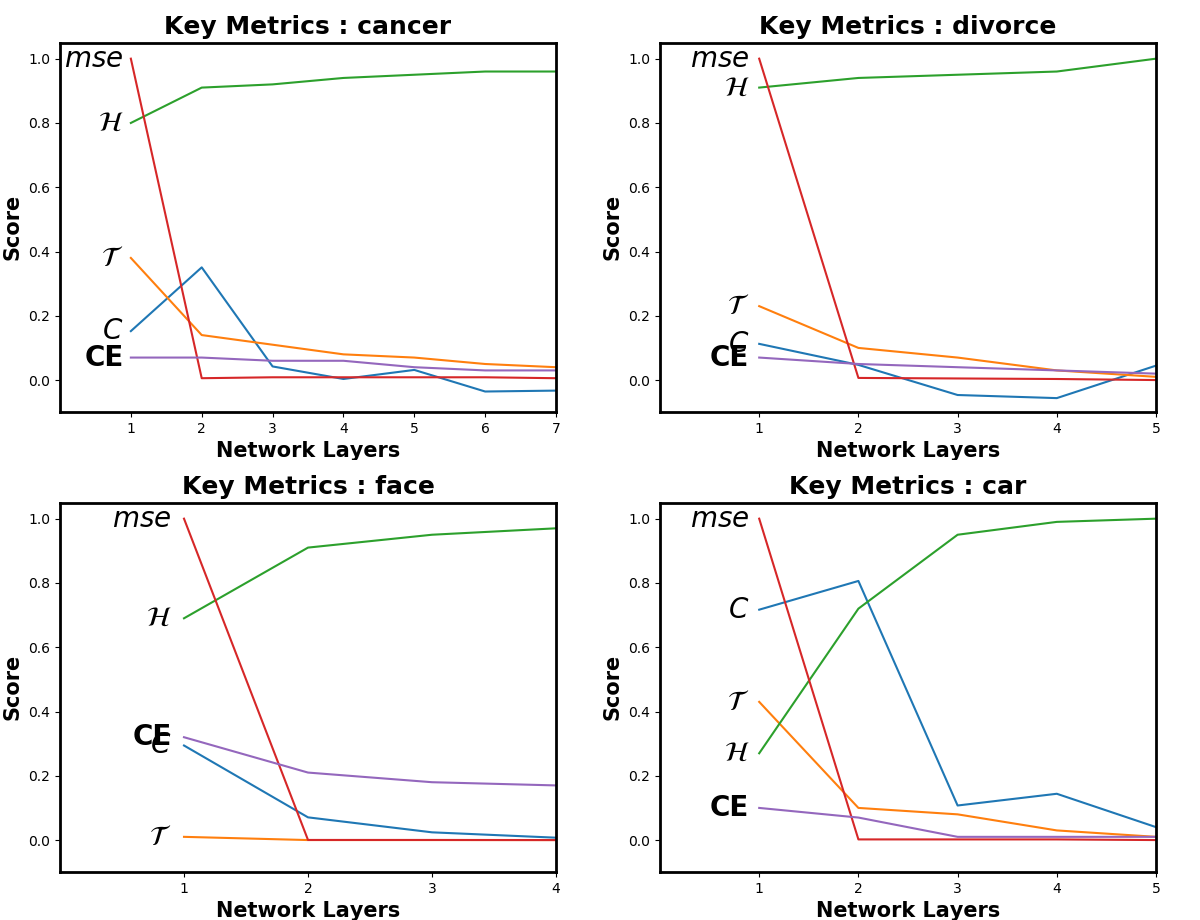}
    \caption{Figures of key metrics for all datasets as samples progress through the network. It is important to notice the uniformly and monotonically increasing \RS for each plot
    since this guarantees a converging kernel/risk sequence. As the $\mathcal{T}$ approach 0, samples of the same/difference classes in IDS are being pulled into a single point or pushed maximally apart respectively. As $C$ approach 0, samples of the same/difference classes in RKHS are being pulled into 0 or $\frac{\pi}{2}$ cosine similarity respectively.}
\end{figure} 

\end{appendices}

\clearpage

\begin{appendices}
\section{Graphs of Kernel Sequences}
\label{app:kernel_sequence_graph}

A representation of the \KS are displayed in the figures below for each dataset. The samples of the kernel matrix are previously organized to form a block structure by placing samples of the same class adjacent to each other. Since the Gaussian kernel is restricted to values between 0 and 1, we let white and dark blue be 0 and 1 respectively where the gradients reflect values in between. Our theorems predict that the \KS will evolve from an uninformative kernel into a highly discriminating kernel of perfect block structures. 

\begin{figure}[h]
\center
    \includegraphics[width=9cm]{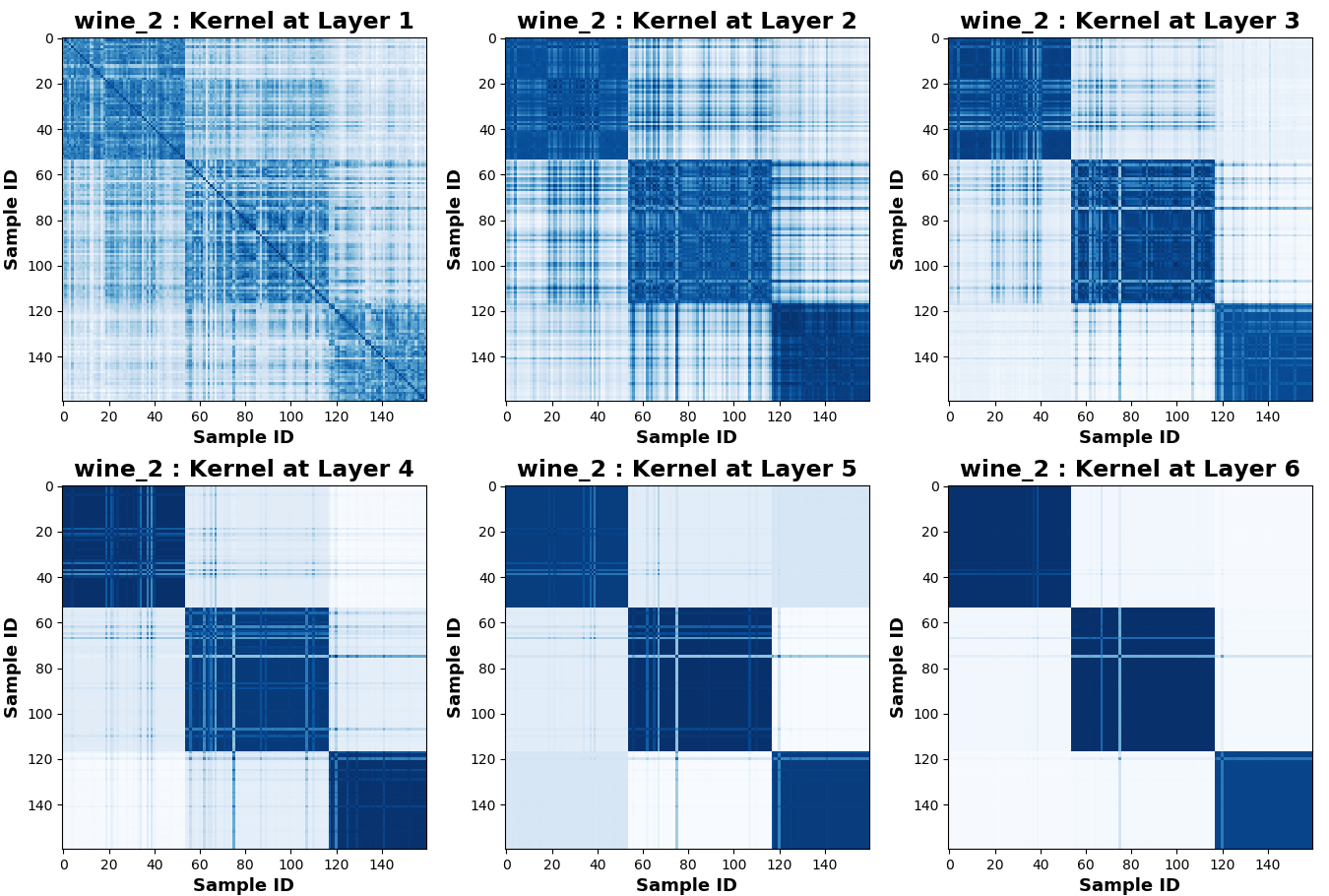}
    \caption{The kernel sequence for the wine dataset.}
\end{figure} 

\begin{figure}[h]
\center
    \includegraphics[width=9cm]{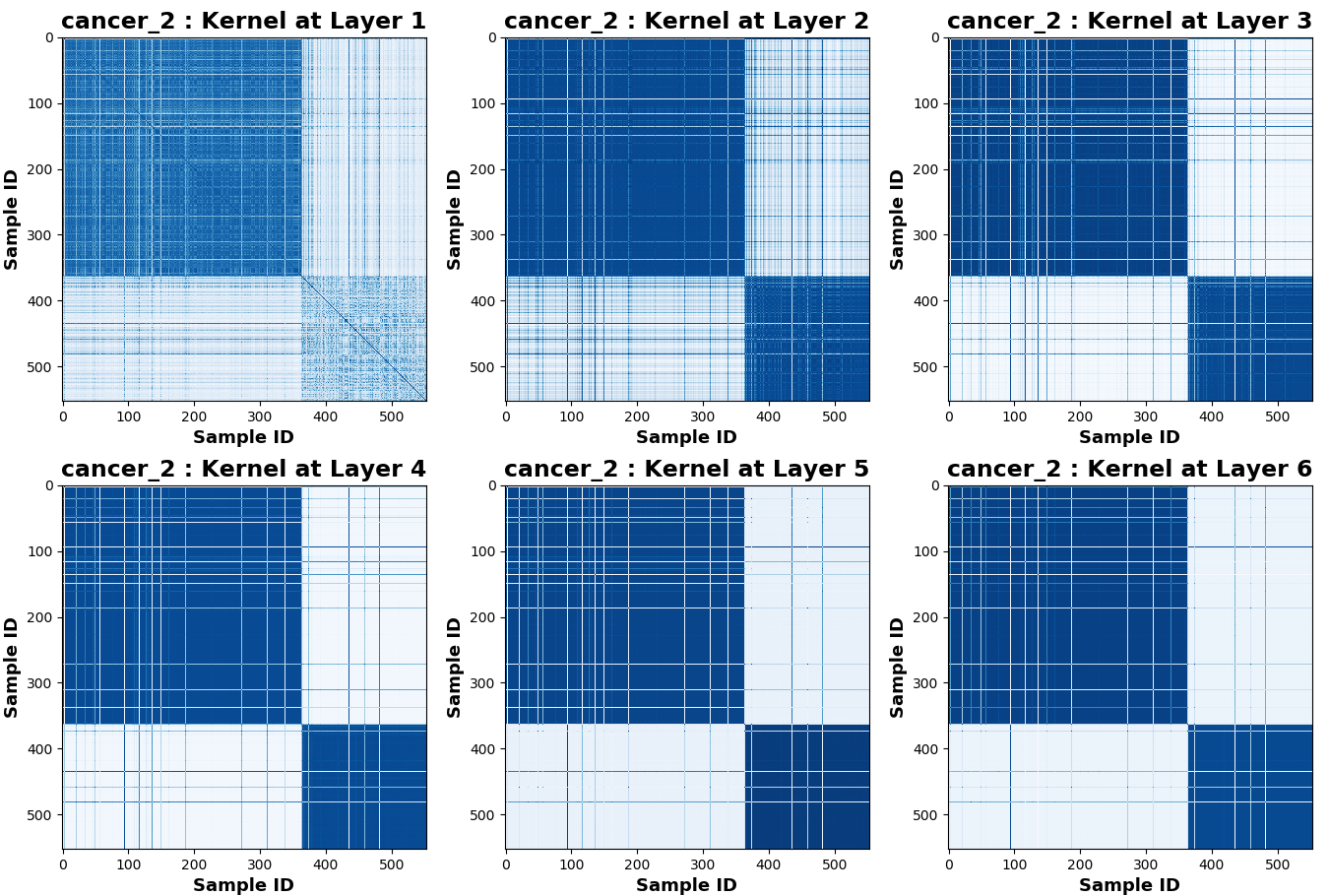}
    \caption{The kernel sequence for the cancer dataset.}
\end{figure} 

\begin{figure}[h]
\center
    \includegraphics[width=9cm]{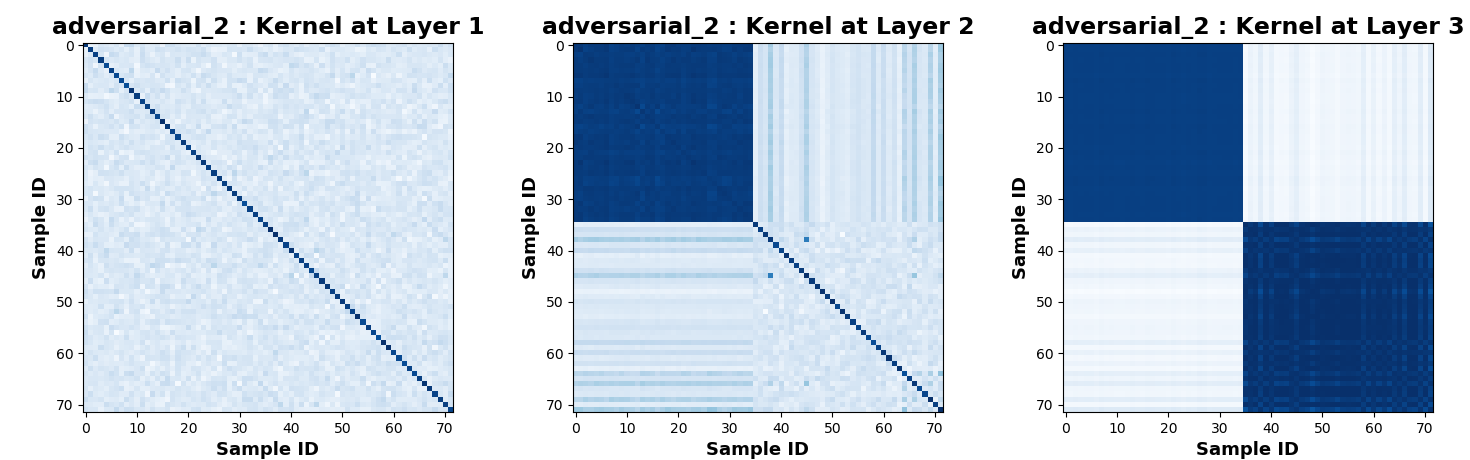}
    \caption{The kernel sequence for the Adversarial dataset.}
\end{figure} 

\begin{figure}[h]
\center
    \includegraphics[width=9cm]{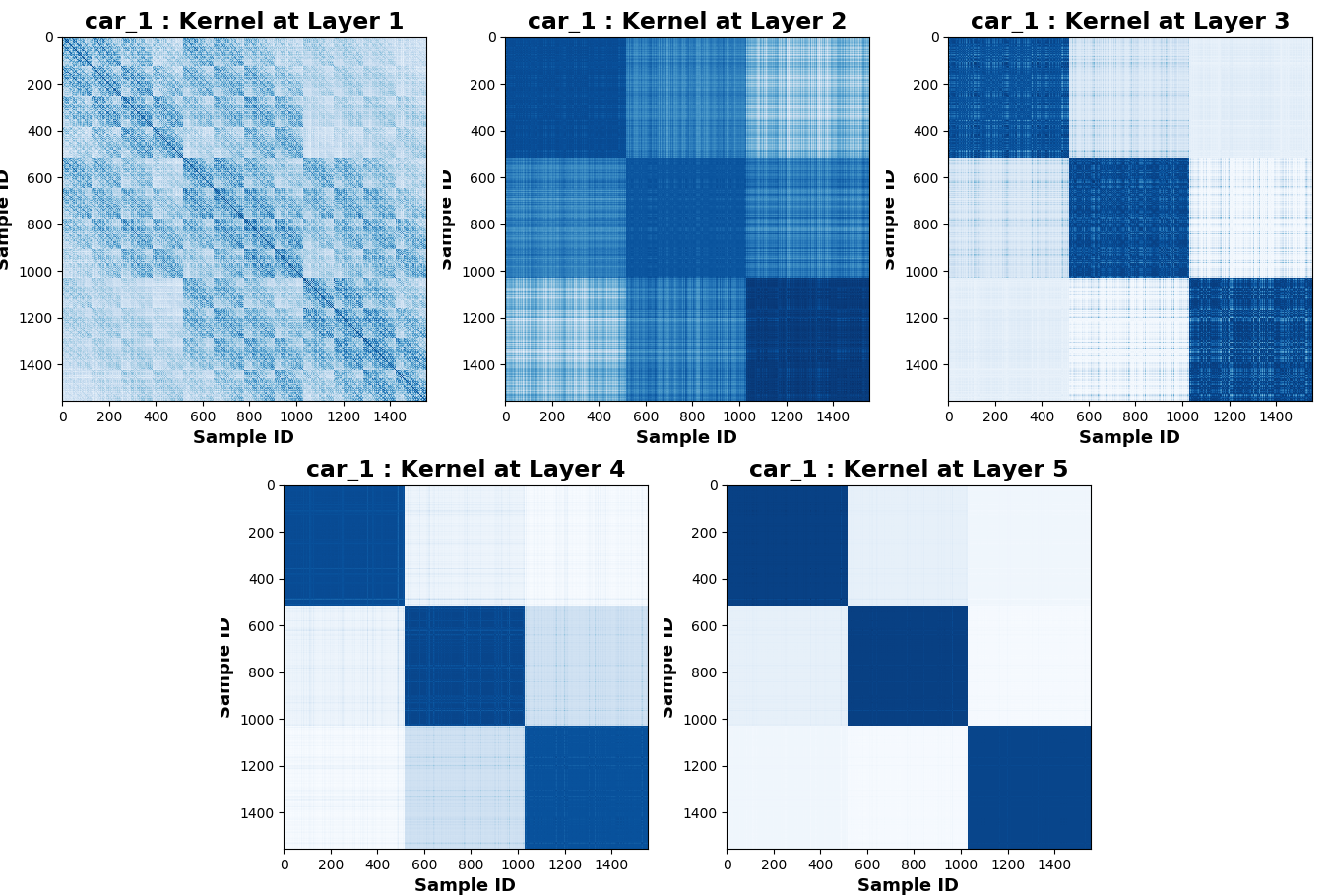}
    \caption{The kernel sequence for the car dataset.}
\end{figure} 

\begin{figure}[h]
\center
    \includegraphics[width=11cm]{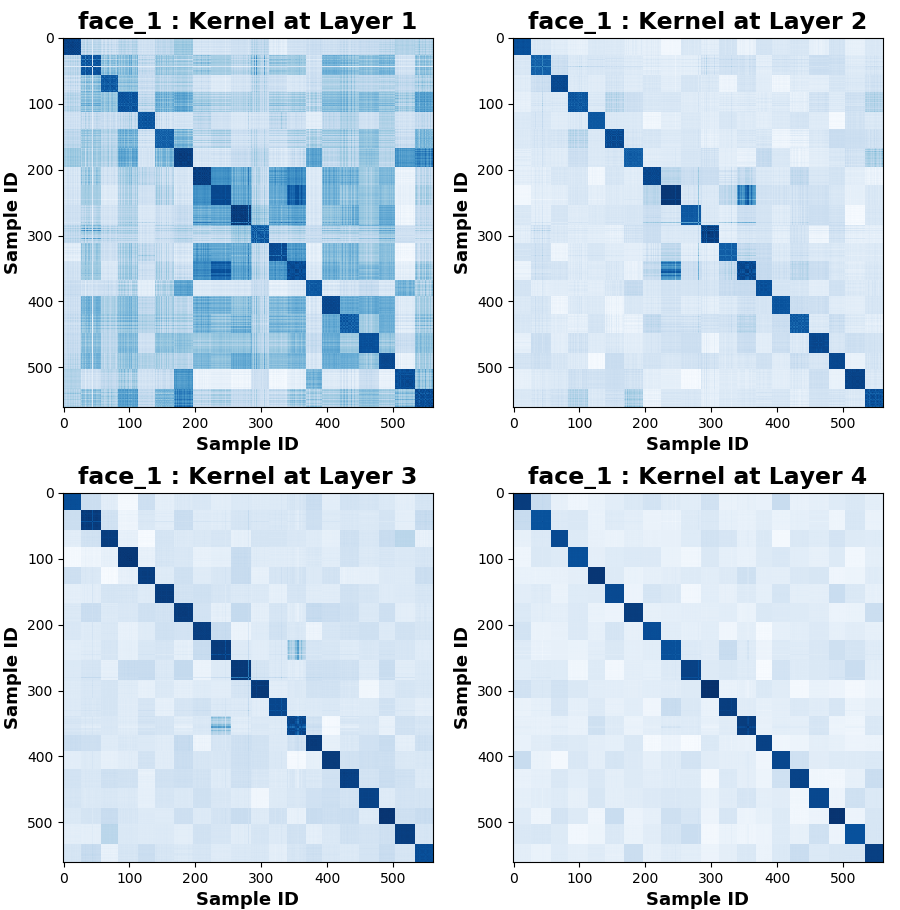}
    \caption{The kernel sequence for the face dataset.}
\end{figure} 

\begin{figure}[h]
\center
    \includegraphics[width=9cm]{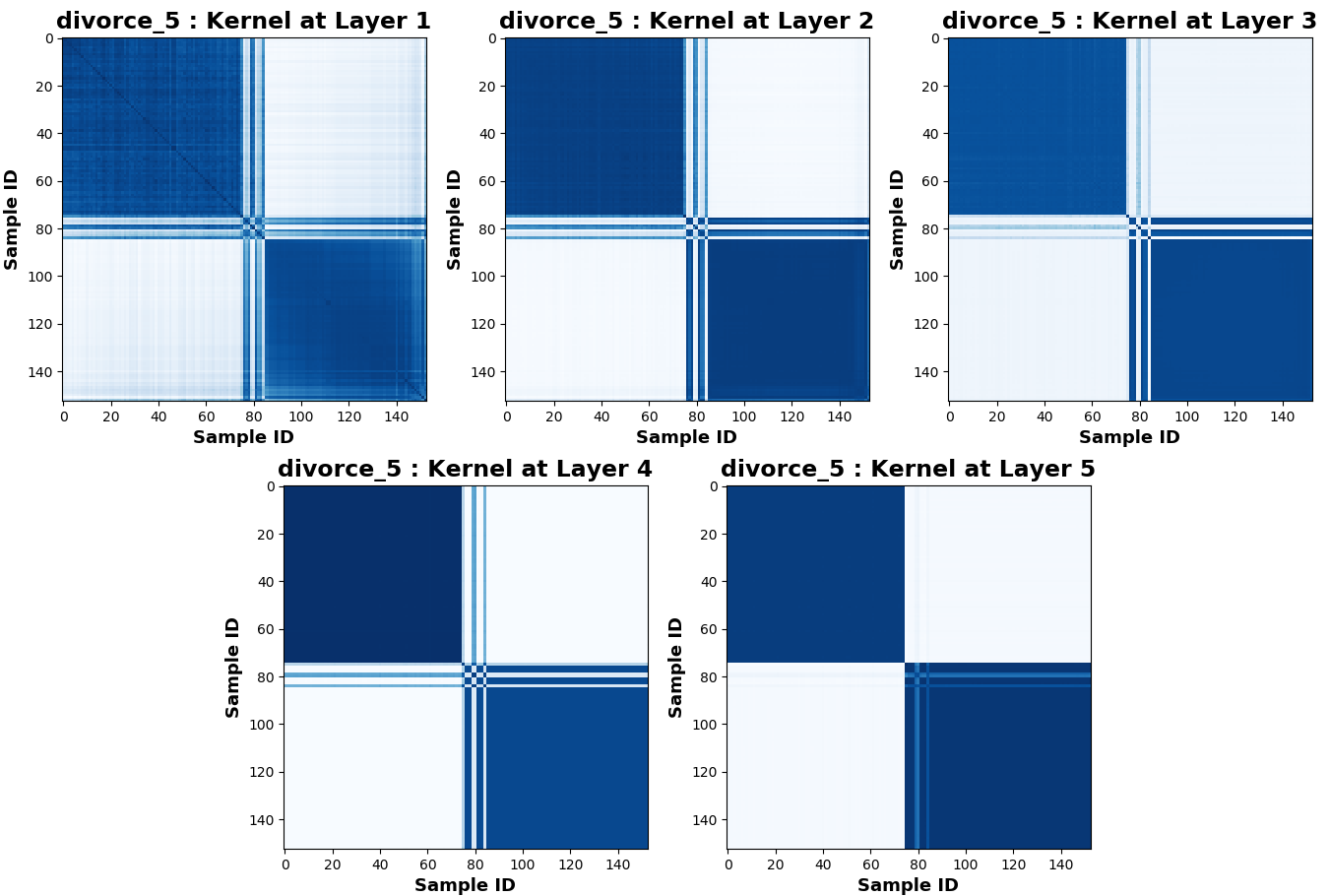}
    \caption{The kernel sequence for the divorce dataset.}
\end{figure} 

\begin{figure}[h]
\center
    \includegraphics[width=11cm]{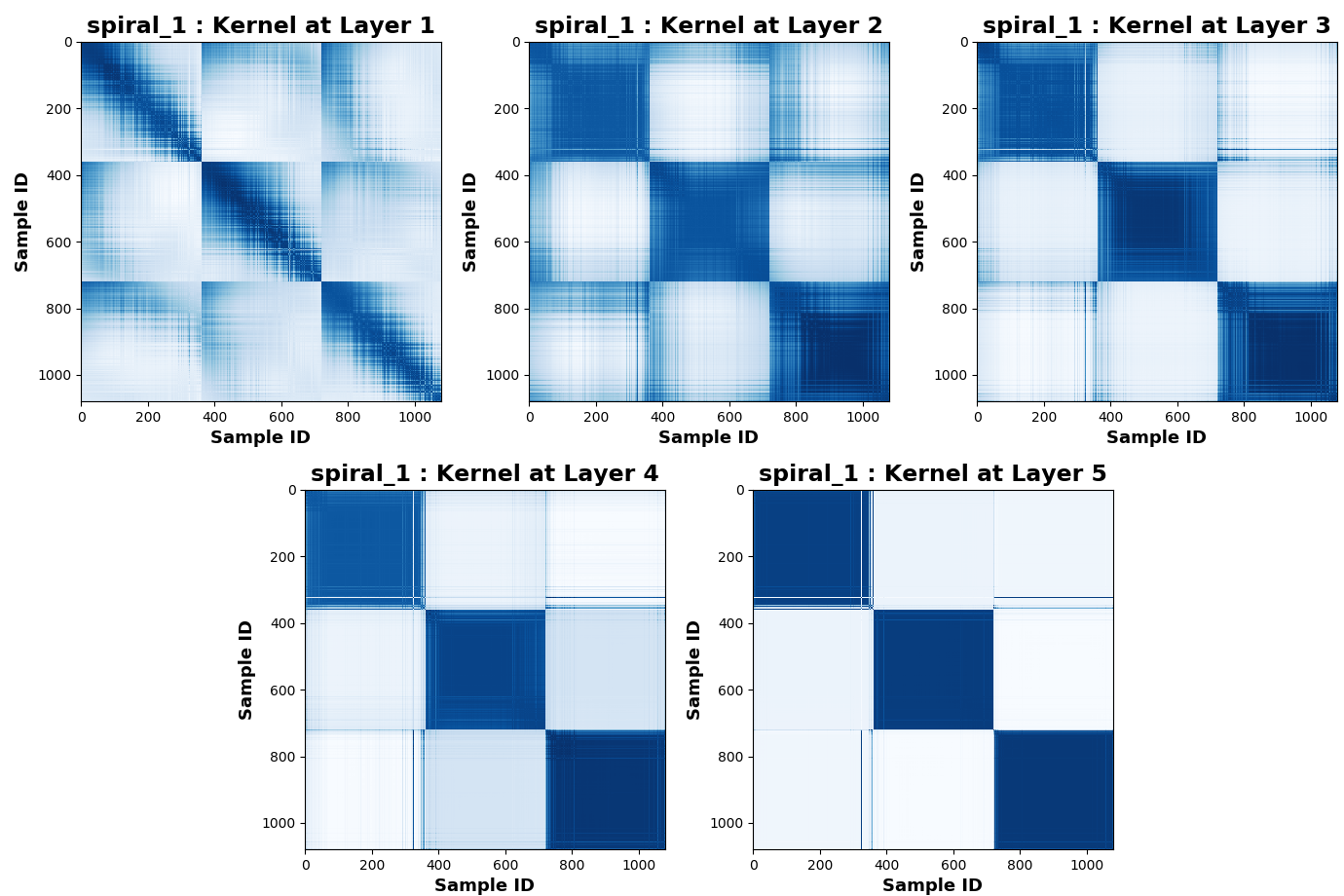}
    \caption{The kernel sequence for the spiral dataset.}
\end{figure}

\begin{figure}[h]
\center
    \includegraphics[width=11cm]{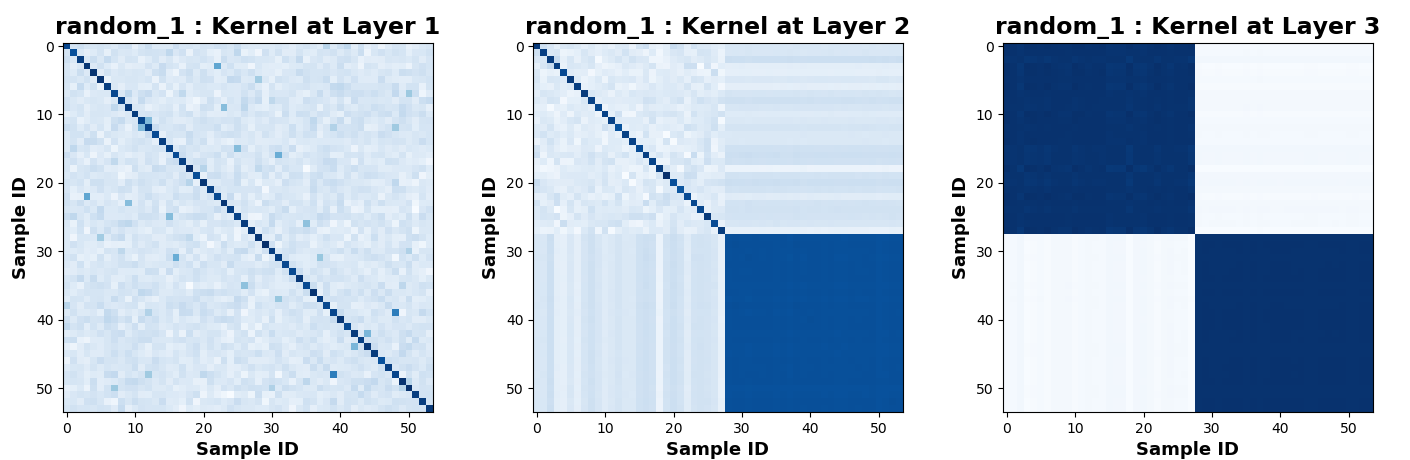}
    \caption{The kernel sequence for the Random dataset.}
\end{figure} 
\end{appendices}
\clearpage
\begin{appendices}
\section{On Generalization. } Besides being an optimum solution, $W_l^*$ exhibits many advantages over $W_s$. For example, while $W_s$ experimentally performs well, $W^*$ converges with fewer layers and superior generalization. This raises a well-known question on generalization. It is known that overparameterized MLPs can generalize even without any explicit regularizer \citep{Zhang2017UnderstandingDL}. This observation contradicts classical learning theory and has been a longstanding puzzle \citep{cao2019generalization,brutzkus2017sgd,allen2019learning}. 
Therefore, by being overparameterized with an infinitely wide network, \kn's ability under HSIC to generalize raises similar questions. In both cases, $W_s$ and $W^*$, the HSIC objective employs an infinitely wide network that should result in overfitting.  We ask theoretically, under our framework, what makes HSIC and $W^*$ special? 

Recently, \citet{poggio2020complexity} have proposed that  traditional MLPs generalize  because gradient methods implicitly regularize the normalized weights given an exponential objective (like our HSIC). We discovered a similar impact the process of finding $W^*$ has on HSIC, i.e., HSIC can be reformulated to isolate out $n$ functions $[D_1(W_l), ..., D_n(W_l)]$ that act as a penalty term during optimization. Let $\cS_i$ be the set of samples that belongs to the $i_{th}$ class and let $\cS^c_i$ be its complement, then each function $D_i(W_l)$ is defined as \begin{equation}
    D_i(W_l) =  
    \frac{1}{\sigma^2}
    \sum_{j \in \cS_i}
    \Gij \kf_{W_l}(r_i,r_j)
    -
    \frac{1}{\sigma^2}
    \sum_{j \in \cS^c_i}
    |\Gij| \kf_{W_l}(r_i,r_j).
\end{equation}
Notice that $D_i(W_l)$ is simply \eq{eq:similarity_hsic} for a single sample scaled by $\frac{1}{\sigma^2}$. Therefore, improving $W_l$ also leads to an increase and decrease of $\kf_{W_l}(r_i,r_j)$ associated with $\cS_i$ and $\cS^c_i$ in \eq{eq:penalty_term}, thereby increasing the size of the penalty term  $D_i(W_l)$.  To appreciate how $D_i(W_l)$ penalizes $\hsic$, we propose an equivalent formulation in the theorem below with its derivation in App~\ref{app:thm:regularizer}.
\begin{theorem}
\eq{eq:similarity_hsic} is equivalent to 
\begin{equation}
    \max_{W_l} 
    \sum_{i,j}
    \frac{\Gij}{\sigma^2}
    \ISMexp
    (r_i^TW_lW_l^Tr_j) 
    -
    \sum_{i}
    D_i(W_l)
    ||W_l^Tr_i||_2.
\end{equation}
\end{theorem}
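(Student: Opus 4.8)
The plan is to start from \eq{eq:similarity_hsic}, fold the sign of $\Gamma$ into a single signed double sum, and then reorganise the Gaussian exponent so that the "self'' contributions of each sample split off as the penalty $\sum_i D_i(W_l)\|W_l^Tr_i\|_2$ while the genuine "cross'' contributions reassemble into the first term of \eq{eq:generalization_formulation}. First I would note that since $\Gamma_{i,j}>0$ on $\mathcal{S}$ and $\Gamma_{i,j}<0$ on $\mathcal{S}^c$, the objective \eq{eq:similarity_hsic} equals $\sum_{i,j}\Gamma_{i,j}\kf_{W_l}(r_i,r_j)$, and that with this convention $D_i(W_l)=\tfrac1{\sigma^2}\sum_{j}\Gamma_{i,j}\kf_{W_l}(r_i,r_j)$ — that is, $D_i(W_l)$ is exactly the inner sum of \eq{eq:penalty_term} once the $\mathcal{S}_i$ and $\mathcal{S}^c_i$ sums are merged. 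I would also record two elementary facts: $\|W_l^Tr_i\|_2^2=r_i^TW_lW_l^Tr_i$, and, using $W_l^TW_l=I$ and $\|r_i\|=1$ in the Gaussian RKHS, the preactivation norms $\|W_l^Tr_i\|_2$ lie in $[0,1]$, which is what lets the trailing term be read as a penalty.

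The core step is the algebraic rearrangement of $\sum_{i,j}\Gamma_{i,j}\kf_{W_l}(r_i,r_j)$ using $\|W_l^Tr_i-W_l^Tr_j\|_2^2=\|W_l^Tr_i\|_2^2-2\,r_i^TW_lW_l^Tr_j+\|W_l^Tr_j\|_2^2$ inside the kernel. The natural intermediate object is the kernel-weighted scatter matrix $\mathcal{Q}_l=\tfrac1{2\sigma^2}\sum_{i,j}\Gamma_{i,j}\kf_{W_l}(r_i,r_j)(r_i-r_j)(r_i-r_j)^T$ from the $W_s$-optimality appendix: expanding its quadratic form and symmetrising under $i\leftrightarrow j$ (which is legitimate because $\Gamma_{i,j}\kf_{W_l}(r_i,r_j)$ is symmetric) turns $\Tr(W_l^T\mathcal{Q}_lW_l)$ into a combination of $\tfrac1{\sigma^2}\sum_{i,j}\Gamma_{i,j}\kf_{W_l}(r_i,r_j)r_i^TW_lW_l^Tr_j$ and the self terms $\sum_i\big(\tfrac1{\sigma^2}\sum_j\Gamma_{i,j}\kf_{W_l}(r_i,r_j)\big)\|W_l^Tr_i\|_2^2$, where the bracket is precisely $D_i(W_l)$. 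Collecting the coefficient of each $\|W_l^Tr_i\|_2$ over the inner index $j$ is exactly what makes $D_i(W_l)$ appear, with a minus sign, multiplying the norm; and the part carrying the cross-correlation $r_i^TW_lW_l^Tr_j$ between distinct preactivations, after the $\tfrac1{\sigma^2}$ that the exponent-expansion forces out (the same $\tfrac1{\sigma^2}$ that appears in the HSIC gradient), is the first term of \eq{eq:generalization_formulation}. Provided the reorganisation is carried through as an exact identity (see below), summing the two pieces recovers \eq{eq:similarity_hsic}, so the two objectives agree on $\{W_l:W_l^TW_l=I\}$.

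The main obstacle is making the passage from the exponentiated quadratic form to the inner-product form genuinely exact rather than a first-order surrogate: the expansion of $\|W_l^Tr_i-W_l^Tr_j\|_2^2$ sits inside the exponential, so splitting $\kf_{W_l}(r_i,r_j)$ into a "cross'' part and "self'' parts requires pinning down the precise bookkeeping — including the $i=j$ diagonal, which contributes $\kf_{W_l}(r_i,r_i)=1$ and lives in $\mathcal{S}$ — and verifying that the Stiefel constraint $W_l^TW_l=I$ together with the unit-norm RKHS feature maps are exactly what close the identity. A secondary care point is sign and normalisation: one must check that $D_i(W_l)$ enters with a minus sign and a first-power weight $\|W_l^Tr_i\|_2$, so that the "implicit regulariser'' reading is the right one — as $W_l$ improves HSIC, the $\kf_{W_l}(r_i,r_j)$ on $\mathcal{S}_i$ grow and those on $\mathcal{S}^c_i$ shrink, so $D_i(W_l)$ increases and the penalty on the preactivation norms becomes progressively heavier, which is the qualitative conclusion the theorem is meant to support.
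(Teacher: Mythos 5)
You have correctly identified the algebraic skeleton that makes the penalty term appear: the kernel-reweighted scatter matrix $\mathcal{Q}_l$ and the degree/adjacency (Laplacian) splitting $\sum_{i,j}\hat{\Gamma}_{i,j}(r_i-r_j)(r_i-r_j)^T = 2R^T(D_{\hat{\Gamma}}-\hat{\Gamma})R$, whose quadratic form in $W_l$ separates into the cross terms $\sum_{i,j}\hat{\Gamma}_{i,j}\,r_i^TW_lW_l^Tr_j$ and the degree-weighted self terms $\sum_i D_i(W_l)\,r_i^TW_lW_l^Tr_i$. That is exactly the decomposition the paper uses. However, there is a genuine gap in how you propose to reach that quadratic form from \eq{eq:similarity_hsic}. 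You want the passage from $\sum_{i,j}\Gamma_{i,j}\kf_{W_l}(r_i,r_j)$ to $\Tr(W_l^T\mathcal{Q}_lW_l)$ to be an exact identity, closed by the Stiefel constraint and unit-norm RKHS features. No such identity exists: the kernel value is the \emph{exponential} of the quadratic form, not the quadratic form itself, so $\sum_{i,j}\Gamma_{i,j}e^{-\Tr(W^TA_{i,j}W)/2\sigma^2}$ and $\sum_{i,j}\hat{\Gamma}_{i,j}\Tr(W^TA_{i,j}W)$ are different functions of $W_l$ (they differ already at $n=2$), and orthonormality of $W_l$ does not reconcile them. You flag this yourself as ``the main obstacle,'' but the proof cannot be completed along the route you sketch.

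The paper resolves this differently: it never claims a pointwise equality of objective values. Instead it differentiates the Lagrangian of \eq{eq:similarity_hsic}, observes that the stationarity condition is the eigenvector equation $\mathcal{Q}_lW=W\Lambda$ with $\hat{\Gamma}=\Gamma\odot K_{R_{l-1}W_l}$ \emph{frozen} at the current iterate, and then rewrites the resulting ISM surrogate $\Tr(W^T\mathcal{Q}_lW)$ via the Laplacian splitting. The claimed ``equivalence'' is therefore an equivalence of first-order optimality conditions (a self-consistent linearization in the ISM sense), not an algebraic identity between the two displayed objectives — this is the step your proposal is missing, and it changes the logical status of the theorem from ``same objective'' to ``same stationary points under a fixed-point reweighting.'' Two smaller bookkeeping points you should also resolve if you pursue this: the diagonal of $RW_lW_l^TR^T$ produces $\|W_l^Tr_i\|_2^2$, not the first power that appears in \eq{eq:generalization_formulation}; and your premise that $\Gamma_{i,j}<0$ exactly on $\cS^c$ is what lets the two sums in $D_i(W_l)$ merge into one signed sum — worth stating explicitly since $\Gamma=HYY^TH$ only guarantees this sign pattern for the centered label Gram matrix.
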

Based on Thm.~\ref{thm:regularizer}, $D_i(W_l)$ adds a negative variable cost to the sample norm, $||W_l^Tr_i||_2$, prescribing an implicit regularizer on HSIC. As $W_l$ improve HSIC, it also imposes a heavier penalty on \eq{eq:generalization_formulation}, severely constraining $W_l$.

\end{appendices}

\begin{appendices} 
\section{Proof for Theorem \ref{thm:regularizer}}
\label{app:thm:regularizer}

\textbf{Theorem \ref{thm:regularizer}: }
\textit{\eq{eq:similarity_hsic} objective is equivalent to }
\begin{equation}
        \sum_{i,j}
        \Gij \ISMexp
        (r_i^TWW^Tr_j)
        -
        \sum_{i}
        D_i(W)
        ||W^Tr_i||_2.
\end{equation}

\begin{proof}
Let $A_{i,j} = (r_i - r_j)(r_i - r_j)^T$. Given the Lagranian of the HSIC objective as 
\begin{equation}
    \mathcal{L} = -\sum_{i,j} \Gij \ISMexp - \Tr[\Lambda(W^TW - I)].
\end{equation}
Our layer wise HSIC objective becomes 
\begin{equation}
    \min_W -\sum_{i,j} \Gij \ISMexp - \Tr[\Lambda(W^TW - I)].
    \label{obj:hsic}
\end{equation}
We take the derivative of the Lagrangian, the expression becomes
   \begin{equation} \label{eq:gradient_of_lagrangian}
    \nabla_W \mathcal{L} ( W, \Lambda) = \sum_{i, j} \frac{\Gamma_{i,
      j}}{\sigma^2} e^{- \frac{\Tr (W^T A_{i, j} W)}{2 \sigma^2}} A_{i, j} W
      - 2 W \Lambda. 
    \end{equation}
Setting the gradient to 0, and consolidate some scalar values into $\hat{\Gamma}_{i,j}$, we get the expression
   \begin{align} 
    \left[
    \sum_{i, j} \frac{\Gamma_{i,
      j}}{2\sigma^2} e^{- \frac{\Tr (W^T A_{i, j} W)}{2 \sigma^2}} A_{i, j} 
     \right]
      W
      & =  W \Lambda \\
      \left[\frac{1}{2}
      \sum_{i,j} \hat{\Gamma}_{i,j} A_{i,j}
      \right] W
      & = W \Lambda \\
      \mathcal{Q} W
      & = W \Lambda.      
    \end{align}
From here, we see that the optimal solution is an eigenvector of $\mathcal{Q}$. Based on ISM, it further proved that the optimal solution is not just any eigenvector, but the eigenvectors associated with the smallest values of $\mathcal{Q}$. From this logic, ISM solves objective~(\ref{obj:hsic}) with a surrogate objective
\begin{equation}
    \min_W \quad
        \Tr \left( W^T
        \left[\frac{1}{2}
        \sum_{i,j} \hat{\Gamma}_{i,j} A_{i,j}
        \right] W \right) \quad \st W^TW=I.
    \label{obj:min_2}
\end{equation}
Given $D_{\hat{\Gamma}}$ as the degree matrix of $\hat{\Gamma}$ and $R = [r_1, r_2, ...]^T$, ISM further shows that Eq.~(\ref{obj:min_2}) can be written into
\begin{align}
    \min_W \quad
        \Tr \left( W^T R^T
        \left[
        D_{\hat{\Gamma}} - \hat{\Gamma}
        \right] R W \right) &\quad \st W^TW=I \\
     \max_W \quad
        \Tr \left( W^T R^T
        \left[
        \hat{\Gamma} - 
        D_{\hat{\Gamma}}
        \right] R W \right) &\quad \st W^TW=I \\       
     \max_W \quad
        \Tr \left( W^T R^T
        \hat{\Gamma}  
        R W \right) 
        -
        \Tr \left( W^T R^T
        D_{\hat{\Gamma}}  
        R W \right) 
        &\quad \st W^TW=I \\              
     \max_W \quad
        \Tr \left( 
        \hat{\Gamma}  
        R W W^T R^T\right) 
        -
        \Tr \left( 
        D_{\hat{\Gamma}}  
        R W W^T R^T\right) 
        &\quad \st W^TW=I \\                     
     \max_W \quad
        \sum_{i,j}
        \hat{\Gamma}_{i,j}
        [R W W^T R^T]_{i,j}
        -
        \sum_{i,j}
        D_{\hat{\Gamma}_{i,j}}
        [R W W^T R^T]_{i,j}       
        &\quad \st W^TW=I. 
\end{align}
Since the jump from \eq{obj:min_2} can be intimidating for those not familiar with the literature, we included a more detailed derivation in App.~\ref{app:matrix_derivation}.

Note that the degree matrix $D_{\hat{\Gamma}}$ only have non-zero diagonal elements, all of its off diagonal are 0. Given $[RWW^TR^T]_{i,j} = (r_i^TWW^Tr_j)$, the objective becomes
\begin{equation}
     \max_W \quad
        \sum_{i,j}
        \hat{\Gamma}_{i,j}
        (r_i^TWW^Tr_j)
        -
        \sum_{i}
        D_i(W)
        ||W^Tr_i||_2
        \quad \st W^TW=I.     
\end{equation}
Here, we treat $D_{i}$ as a penalty weight on the norm of the $W^Tr_i$ for every sample. 
\end{proof}

To better understand the behavior of $D_i(W)$, note that $\hat{\Gamma}$ matrix looks like
\begin{equation}
    \hat{\Gamma} = \frac{1}{\sigma^2} \begin{bmatrix}
       \begin{bmatrix}
       \Gamma_{\mathcal{S}} \ISMexp
        \end{bmatrix}
       &  
       \begin{bmatrix}
       -|\Gamma_{\mathcal{S}^c}| \ISMexp
       \end{bmatrix}
       &
       ... \\
       \begin{bmatrix}
       -|\Gamma_{\mathcal{S}^c}| \ISMexp
       \end{bmatrix}
       &      
       \begin{bmatrix}
       \Gamma_{\mathcal{S}} \ISMexp
        \end{bmatrix}      
       &            
       ... \\
       ... &
       ... &
       ...
    \end{bmatrix}.
\end{equation}
The diagonal block matrix all $\Gij$ elements that belong to $\cS$ and the off diagonal are elements that belongs to $\cS^c$. Each penalty term is the summation of its corresponding row. Hence, we can write out the penalty term as
\begin{equation}
    D_i(W_l) = 
    \frac{1}{\sigma^2}
    \sum_{j \in \cS|i}
    \Gij \kf_{W_l}(r_i,r_j)
    -
    \frac{1}{\sigma^2}
    \sum_{j \in \cS^c|i}
    |\Gij| \kf_{W_l}(r_i,r_j).
\end{equation}
From this, it shows that as $W$ improve the objective, the penalty term is also increased. In fact, at its extreme as $\hsic_l \rightarrow \hsic^*$, all the negative terms are gone and all of its positive terms are maximized and this matrix approaches 
\begin{equation}
    \hat{\Gamma}^* = \frac{1}{\sigma^2} \begin{bmatrix}
       \begin{bmatrix}
       \Gamma_{\mathcal{S}} 
        \end{bmatrix}
       &  
       \begin{bmatrix}
       0
       \end{bmatrix}
       &
       ... \\
       \begin{bmatrix}
       0
       \end{bmatrix}
       &      
       \begin{bmatrix}
       \Gamma_{\mathcal{S}} 
        \end{bmatrix}      
       &            
       ... \\
       ... &
       ... &
       ...
    \end{bmatrix}.
\end{equation}
From the matrix $\hat{\Gamma}^*$ and the definition of $D_i(W_l)$, we see that as $\mathcal{K}_W$ from $\mathcal{S}$ increase, 

Since $D_i(W)$ is the degree matrix of $\hat{\Gamma}$, we see that as $\hsic_l \rightarrow \hsic^*$, we have
\begin{equation}
   D^*_i(W) > D_i(W). 
\end{equation}
\end{appendices} 
\begin{appendices} 
\section{Derivation for \texorpdfstring{$\sum_{i,j} \Psi_{i,j} (x_i - x_j)(x_i - x_j)^T = 2X^T(D_\Psi - \Psi)X$ }\xspace }
\label{app:matrix_derivation}
Since $\Psi$ is a symmetric matrix, and $A_{i, j} = ( x_i - x_j) ( x_i -
x_j)^T $, we can
rewrite the expression into
\[ \begin{array}{lll}
     \sum_{i, j} \Psi_{i, j} A_{i, j}  & = &  \sum_{i, j} \Psi_{i, j} ( x_i -
     x_j) ( x_i - x_j)^T\\
     & = & \sum_{i, j} \Psi_{i, j} ( x_i x_i^T - x_j x_i^T - x_i x_j^T +
     x_j x_j^T)\\
     & = & 2 \sum_{i, j} \Psi_{i, j} ( x_i x_i^T - x_j x_i^T)\\
     & = & \left[ 2 \sum_{i, j} \Psi_{i, j} ( x_i x_i^T) \right] - \left[ 2
     \sum_{i, j} \Psi_{i, j} ( x_i x_j^T) \right] .
   \end{array} \]
If we expand the 1st term, we get
\begin{align}
    2\sum_{i}^n \sum_{j}^n 
    \Psi_{i, j} ( x_i x_i^T)
    & = 
    2 \sum_i
   \Psi_{i, 1} ( x_i x_i^T) +
   \ldots + \Psi_{i, n} ( x_i x_i^T) \\
   &= 
   2 \sum_{i}^n 
   [\Psi_{1, 1} + \Psi_{1, 2} + ...]
   x_i x_i^T \\
   &= 
   2\sum_{i}^n 
   d_i 
   x_i x_i^T \\  
   &=
   2 X^TD_\Psi X
\end{align}
Given $\Psi_i$ as the $i$th row, next we look at the 2nd term
\begin{align}
    2 \sum_i \sum_j \Psi_{i, j}  x_i x_j^T
    &=
    2 \sum_i \Psi_{i, 1}  x_i x_1^T
    + \Psi_{i, 2}  x_i x_2^T
    + \Psi_{i, 3}  x_i x_3^T + ...\\
    &=
    2 \sum_i   x_i (\Psi_{i, 1} x_1^T)
    + x_i (\Psi_{i, 2} x_2^T)
    + x_i (\Psi_{i, 3} x_3^T) + ...\\
    &=
    2 \sum_i   x_i 
    \left[
    (\Psi_{i, 1} x_1^T)
    + (\Psi_{i, 2} x_2^T)
    + (\Psi_{i, 3} x_3^T) + ...
    \right]\\
    &=
    2 \sum_i   x_i 
    \left[
    X^T \Psi_i^T
    \right]^T\\   
    &=
    2 \sum_i   x_i 
    \left[
    \Psi_i X
    \right]\\      
    &=
    2 \left[   
    x_1 \Psi_1 X + 
    x_2 \Psi_2 X + 
    x_3 \Psi_3 X + ...
    \right]\\         
    &=
    2 \left[   
    x_1 \Psi_1  + 
    x_2 \Psi_2  + 
    x_3 \Psi_3  + ...
    \right]X \\            
    &=
    2X^T \Psi  X \\               
\end{align}
Putting both terms together, we get
\begin{align}
    \sum_{i,j} \Psi_{i,j} A_{i,j} &=  2 X^TD_\Psi X -  2X^T \Psi  X a\\
    &=  2 X^T[ D_\Psi - \Psi] X \\
\end{align}

\end{appendices}

%

\end{document}